\def\eqref#1{equation~\ref{#1}}
\def\Eqref#1{Equation~\ref{#1}}
\def\1{\bm{1}}
\def\vone{{\bm{1}}}
\def\vmu{{\bm{\mu}}}
\def\vtheta{{\bm{\theta}}}
\def\vomega{{\bm{\omega}}}
\def\vdelta{{\bm{\delta}}}
\def\vone{{\bm{1}}}
\def\va{{\bm{a}}}
\def\vb{{\bm{b}}}
\def\vu{{\bm{u}}}
\def\vx{{\bm{x}}}
\def\vz{{\bm{z}}}
\DeclareMathAlphabet{\mathsfit}{\encodingdefault}{\sfdefault}{m}{sl}
\SetMathAlphabet{\mathsfit}{bold}{\encodingdefault}{\sfdefault}{bx}{n}
\newcommand{\vlambda}{\boldsymbol{\lambda}}
\newcommand{\vpi}{\boldsymbol{\pi}}
\newcommand{\second}{\cellcolor{blue!10}}
\newcommand{\first}{\cellcolor{blue!30}}
\definecolor{cyan}{cmyk}{.3,0,0,0}
\definecolor{lightblue}{HTML}{B0C4DE}
\definecolor{darkblue}{HTML}{177CB0}
\title{A Symmetry-Aware Exploration of Bayesian Neural Network Posteriors}
\author{Olivier Laurent,\textsuperscript{\rm 1,2} Emanuel Aldea,\textsuperscript{\rm 1} \& \textbf{Gianni Franchi}\textsuperscript{\rm 2, $\dagger$} \\
\AND
{
\normalfont SATIE, Université Paris-Saclay,\textsuperscript{\rm 1} U2IS, ENSTA Paris, Institut Polytechnique de Paris,\textsuperscript{\rm 2}}
}
\newtheorem*{theorem*}{Theorem}
\newtheorem{proposition}{Proposition}
\newtheorem*{proposition*}{Proposition}
\newtheorem{corollary}{Corollary}
\newtheorem*{corollary*}{Corollary}
\newtheorem{lemma}{Lemma}
\newtheorem{property}{Property}[section]
\newtheorem*{property*}{Property}
\theoremstyle{definition}
\newtheorem{definition}{Definition}[section]
\newtheorem*{definition*}{Definition}
\theoremstyle{remark}
\begin{document}

\maketitle

\begin{abstract}
 The distribution of the weights of modern deep neural networks (DNNs) - crucial for uncertainty quantification and robustness - is an eminently complex object due to its extremely high dimensionality. This paper proposes one of the first large-scale explorations of the posterior distribution of deep Bayesian Neural Networks (BNNs), expanding %
 its study to real-world vision tasks and architectures. Specifically, we investigate the optimal approach for approximating the posterior, analyze the connection between posterior quality and uncertainty quantification, delve into the impact of modes on the posterior, and explore methods for visualizing the posterior. Moreover, we uncover weight-space symmetries as a critical aspect for understanding the posterior. To this extent, we develop an in-depth assessment of the impact of both permutation and scaling symmetries that tend to obfuscate the Bayesian posterior. While the first type of transformation is known for duplicating modes, we explore the relationship between the latter and L2 regularization, challenging previous misconceptions. Finally, to help the community improve our understanding of the Bayesian posterior, we %
 will release shortly the first large-scale checkpoint dataset, including thousands of real-world models, along with our codes.

\end{abstract}

\begin{figure}[!thb]
    \centering
    \includegraphics[width=0.84\linewidth]{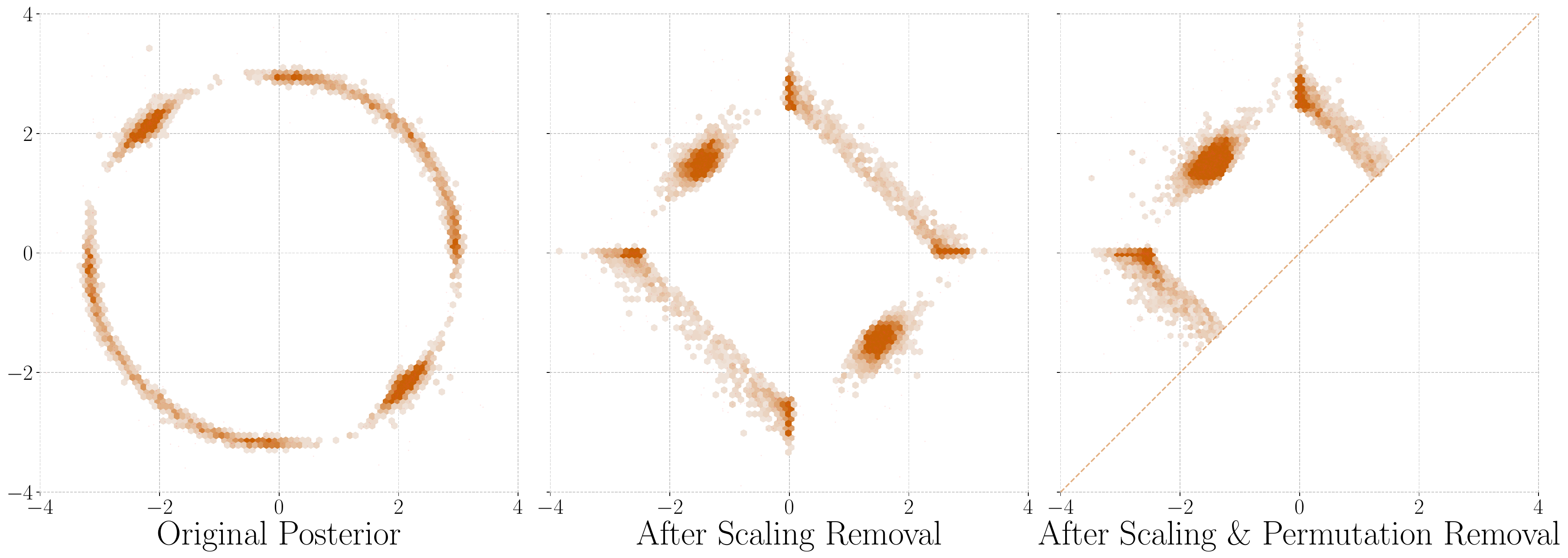}
    \caption{\textbf{Weight-space symmetries greatly impact the estimated Bayesian posterior}. Permutation symmetries very clearly increase the number of modes of the posterior distribution in the case of a 2-hidden neuron perceptron.}
    \label{fig:example}
\end{figure}

\section{Introduction}

Despite substantial advancements in deep learning, Deep Neural Networks (DNNs) %
remain black box models. Various studies have sought to explore DNN loss landscapes~\citep{li2018visualizing,fort2019large,fort2019goldilocks,liu2022loss} to achieve a deeper understanding of these models. %
Recent works have, for instance, unveiled the interconnection of the modes obtained with Stochastic Gradient Descent (SGD) via narrow pathways that link pairs of modes, or through tunnels that connect multiple modes simultaneously~\citep{garipov2018loss, draxler2018essentially}.
This mode connectivity primarily arises from %
scale and permutation invariances, which imply that numerous weights can represent the same exact function (e.g., \cite{entezari2021role}). 
Several studies have delved into the relationship between these symmetries and the characteristics of the loss landscape~\citep{entezari2021role,neyshabur2015path,brea2019weight}. Our work investigates the connections between these symmetries and the distribution of DNN weights, a crucial aspect for uncertainty quantification. As shown in Figure~\ref{fig:example}, it is %
apparent that these symmetries also exert influence on DNN posterior distributions.

Uncertainty quantification plays a pivotal role in high-stakes industrial applications - such as autonomous driving~\citep{levinson2011towards, mcallister2017concrete, sun2019functional} - where reliable predictions and informed decision-making are paramount. In such critical domains, understanding and effectively managing uncertainties, particularly the model-related epistemic uncertainties~\citep{hora1996aleatory} arising from incomplete knowledge, is essential. Among the various methods proposed to address these challenges, Bayesian Neural Networks (BNNs)~\citep{tishby1989consistent} offer a principled and theoretically sound approach. BNNs quantify uncertainty by modeling our beliefs about parameters and outcomes probabilistically~\citep{tishby1989consistent,hinton1993keeping}. However, this perspective faces significant hurdles when applied to deep learning, primarily related to scalability~\citep{izmailov2021bayesian} and the precision of approximations~\citep{mackay1995probable}. Due to their very high dimension, BNNs struggle to estimate the posterior distribution, i.e., the probability density of converging towards any particular set of model parameters $\omega$, given the observed data $\mathcal{D}$.

Diverging from methods such as the Maximum Likelihood Estimate or Maximum A Posteriori (also in \cite{tishby1989consistent}), which we typically derive through gradient descent optimization of cross-entropy (with L2 regularization for the latter), BNNs assign a probability to each possible model (or hypothesis) and offer predictions considering the full extent of possible models. In mathematical terms, when we denote the target as $y$, the input vector as $\vx$, and
the weight space as $\Omega$, we can express this approach through the following intractable formula, often referred to as the marginalization on the parameters of the model~\citep{tishby1989consistent, rasmussen2006gaussian}:
\begin{align}
\label{eq:marginalization}
    p(y \mid \vx, \mathcal{D})=\int\limits_{\vomega\in\Omega} p(y \mid \vx, \vomega) p(\vomega \mid \mathcal{D}) d\vomega.
\end{align}
The posterior distribution $p(\vomega|\mathcal{D})$ assumes a central and arguably the most critical role in BNNs. Many successful methods for quantifying uncertainty can be viewed as attempts to approximate this posterior, each with its own trade-offs in terms of accuracy and computational efficiency, as illustrated in previous research~\citep{blundell2015weight, gal2016dropout, simple2017lakshminarayanan}. Prior work~\citep{kuncheva2003measures, fort2019deep,ortega2022diversity} has established the importance of achieving \emph{diversity} in the sampled DNNs drawn from the posterior, particularly when dealing with uncertain input data. However, the presence of permutation symmetries and scale symmetries among hidden units in neural networks may lead to an increased number of local minima~\citep{zhao2022symmetries} with no diversity. In the context of BNNs, this phenomenon can result in a proliferation of modes within the posterior distribution.

In this paper, we delve into the impact of weight symmetries on the posterior distribution. While there have been numerous efforts to visualize the loss landscape, we explore the possibility of conducting similar investigations for the posterior distribution. Additionally, we introduce a protocol for assessing the quality of posterior estimation and examine the relationship between posterior estimation and the accuracy of uncertainty quantification. Specifically, our contributions  are as follows:

\textbf{(1)} We build a new mathematical formalism to highlight the different impacts of the permutation and scaling symmetries on the posterior and on uncertainty estimation in deep neural networks. Notably, we explain the seeming equivalence of the marginals in Figure~\ref{fig:example}. We also perform the first in-depth exploration of the existence of scaling symmetries and their overlooked effect. \textbf{(2)}~We evaluate the quality of various methods for estimating the posterior distribution on real-world applications using the Maximum Mean Discrepancy, offering a practical benchmark to assess their performance in capturing uncertainty. \textbf{(3)}~We release a new dataset that includes the weights of thousands of models across various computer vision tasks and architectures, ranging from MNIST to TinyImageNet. This dataset is intended to facilitate further exploration and collaboration in the field of uncertainty in deep learning. \textbf{(4)}~Our investigation delves into the proliferation of modes in the context of posterior symmetries and exhibits the capacity of ensembles to converge toward non-functionally equivalent modes. Furthermore, we discuss the influence of symmetries in the training process.

\section{Related work}

\paragraph{Epistemic uncertainty, Bayesian inference, and posterior}

Epistemic uncertainty~\citep{hora1996aleatory,hullermeier2021aleatoric} plays a crucial role in accurately assessing predictive model reliability. However, despite ongoing discussions, estimating this uncertainty remains a challenge. %
BNNs~\citep{goan2020bayesian} predominantly shape the landscape of methodologies that tackle epistemic uncertainties~\citep{gawlikowski2023survey}. Given the complexity of dealing with posterior distributions, these approaches have mostly been tailored for enhanced scalability.

For instance, \cite{hernandez2015probabilistic} proposed an efficient probabilistic backpropagation, and \cite{blundell2015weight} developed BNNs by Backpropagation to learn diagonal Gaussian distributions with the reparametrization trick. Similarly, Laplace methods~\citep{laplace1774memoire, mackay1992practical, ritter2018scalable} characterize the posterior distribution of weights, often focusing on the final layer~\citep{ober2019benchmarking,watson2021latent}, again for scalability.

On a different approach, Monte Carlo Dropout, introduced by \cite{gal2016dropout} and \cite{kingma2015variational}, is a framework that models the posterior as a mixture of Dirac distributions when applied on fully-connected layers. Broadening the spectrum, Deep Ensembles~\citep{simple2017lakshminarayanan}, arguably along with their more efficient alternatives~\citep{lee2015m, wen2019batchensemble, maddox2019simple, franchi2020encoding, franchi2020tradi, havasi2021training, laurent2023packed}, have been interpreted by \cite{wilson2020bayesian} as Monte Carlo estimates of \Eqref{eq:marginalization}.

\paragraph{Markov-chain-based Bayesian posterior estimation}

\cite{neal2011mcmc} introduced HMC as an accurate method for estimating the posterior of neural networks, but its application to large-scale problems remains challenging due to the very high computational demands. \cite{izmailov2021bayesian} managed to scale full-batch HMC to CIFAR-10~\citep{krizhevsky2009learning} with ResNet-20~\citep{he2016deep} thanks to 512 TPUv3.

In response to these challenges, stochastic approximations of MCMC have gained attention for their ability to provide computationally more feasible solutions. A prominent example is the Langevin dynamics-based approach proposed by \cite{welling2011bayesian}. By introducing noise into the dynamics, Langevin dynamics allows for more practical implementation on large datasets. %

In addition to Langevin dynamics, other stochastic gradient-based methods have been introduced to improve the efficiency of MCMC sampling. \cite{chen2014stochastic} presented Stochastic Gradient Hamiltonian Monte Carlo (SGHMC), integrating stochastic gradients into HMC. Likewise, \cite{zhang2019cyclical} proposed C-SGLD and C-SGHMC (Cyclic Stochastic Gradient Langevin Dynamics and Hamiltonian Monte Carlo), introducing controlled noise via cyclic preconditioning.

While stochastic approximation methods offer computational convenience, they come with the trade-off of slowing down the convergence and potentially introducing bias into the resulting inference~\citep{bardenet2017markov,zou2021convergence}. As such, the suitability of these approaches depends on the specific application and the level of acceptable bias in the analysis.

\paragraph{Symmetries in neural networks}

The seminal work from \cite{hecht1990algebraic} established a foundational understanding by investigating permutation symmetries and setting a lower bound on symmetries in multi-layer perceptrons. \cite{albertini1993uniqueness} extended this work and studied flip sign symmetries in networks with odd activation functions. This work was further generalized to a broader range of activation functions by \cite{kuurkova1994functionally}, who proposed symmetry removal to streamline evolutionary algorithms.

Recent advancements have generalized symmetries to modern neural architectures. \cite{neyshabur2015path} explored the scaling symmetries that arise in architectures containing non-negative homogeneous activation functions, including \cite{nair2010rectified}'s ubiquitous Rectified Linear Unit (ReLU). This perspective extends our understanding of symmetries to ReLU-powered architectures, e.g., AlexNet~\citep{krizhevsky2012imagenet}, VGG~\citep{simonyan2014very}, and ResNet networks~\citep{he2016deep}. This paper focuses on scaling and permutation symmetries, but other works, such as \cite{rolnick2020reverse, grigsby2023hidden}, unveil less apparent symmetries. 

Closest to our work, \cite{wiese2023towards} demonstrated that taking weight-space symmetries into account could reduce the Bayesian posterior support and improve Monte-Carlo Markov Chains posterior estimation.

\section{Symmetries increase the complexity of Bayesian posteriors}

We propose to study the most influential weight-space symmetries and their properties related to the Bayesian posterior. Firstly, weight-space symmetries transform the parameters of the neural networks while keeping the networks functionally invariant; in other words,

\begin{definition}
Let $f_\vomega$ be a neural network of parameters $\vomega$ taking $n$-dimensional vectors as inputs.
We say that the transformation $\mathcal{T}$ modifying $\vomega$ is a weight-space symmetry operator, iff 
\begin{equation}
    f_{\mathcal{T}(\vomega)} = f_\vomega, \ \text{or} \ \  \forall \vx \in\mathbb{R}^n, f_{\mathcal{T}(\vomega)}(\vx) = f_\vomega(\vx).
\end{equation}
\end{definition}

With the notation $f_{\mathcal{T}(\vomega)}(\vx)$, we apply the symmetry operator $\mathcal{T}$ on %
the weights $\vomega$, resulting in a set of modified weights. In this paper, we show that scaling and permutation symmetries have different impacts on the posterior of neural networks. They can, for instance, complicate Bayesian posteriors, creating artificial functionally equivalent modes.

\subsection{A gentle example of artificial symmetry-based posterior modes}
\label{sec:ex_details_mp}
To illustrate the considerable impact of symmetries on the Bayesian posterior, we propose a small-scale example in Figure~\ref{fig:example}. We generate this example by training two-hidden-neuron perceptrons on linearly separable data. %
Figure~\ref{fig:example} presents the estimation of the posterior of the output weights with 10,000 checkpoints. In this figure, the most important mode is duplicated due to the permutation symmetry, which appears to symmetrize the graph. On the other hand, scaling symmetries seem to regroup some points around the modes. We detail this toy experiment in Appendix~\ref{sec:ex_details}. In the following, we develop a new mathematical framework tailored to help understand the impact of these symmetries on the posterior, devise mathematical insights explaining these intuitions, and propose empirical explorations.

\subsection{Background and definitions}
The full extent of this formalism (including sketches of proofs, other definitions, properties, and propositions) is developed in Appendix~\ref{app:formalism}. We deem that the following properties are the minimal information necessary to understand the impact on the Bayesian posterior of the two main types of symmetries: the scaling and permutation symmetries. %
This part summarizes the most important results for multi-layer perceptrons, but we provide leads for generalizing our results to modern deep neural networks such as convolutional residual networks~\citep{he2016deep} in Appendix~\ref{sec:generalization_leads}.%

\subsection{Scaling symmetries}
For clarity, we propose the following definitions and properties for two-layer fully connected perceptrons without loss of generality. Please refer to Appendix~\ref{app:formalism} and~\ref{sec:generalization_leads} for extensions. We start with two new operators between vectors and matrices to define scaling symmetries.

\begin{definition}
We denote the line-wise product as $\bigtriangledown$ and the column-wise product as $\rhd$. Let $\vomega \in \mathbb{R}^{n\times m}$, $\vlambda \in \mathbb{R}^m$, and $\vmu\in \mathbb{R}^n$, with $n, m \geq 1$, 
\begin{equation}
    \forall i\in\llbracket 1, n\rrbracket, \forall j\in\llbracket 1,m\rrbracket, \ (\vlambda \bigtriangledown \vomega)_{i,j} = \lambda_j\omega_{i,j} \ \text{and} \ (\vmu \rhd \vomega)_{i,j} = \mu_i\omega_{i,j}.
\end{equation}
\end{definition}

Given that the rectified linear unit $r$ is non-negative homogeneous - i.e., for all non-negative $\lambda$, $r(\lambda x) = \lambda r(x)$ - we have the following core property for scaling symmetries~\citep{neyshabur2015path}, trivially extendable to additive biases:

\begin{property}
For all $\vtheta\in\mathbb{R}^{\cdot\times m}$, $\vomega \in\mathbb{R}^{m\times n}$, $\vlambda \in \left(\mathbb{R}_{>0}\right)^m$,
\begin{equation}
\label{eq:scaling_twolay_def}
    \forall\vx\in\mathbb{R}^n, \ (\vlambda^{-1}\bigtriangledown \vtheta) \times r(\vlambda\rhd\vomega\vx) = \vtheta \times r(\vomega\vx).
\end{equation}
\end{property}

When defining $\mathcal{T}_s$ by the transformation \Eqref{eq:scaling_twolay_def} - in the case of a two-layer perceptron - the core property directly follows, with the set of parameters $\Lambda = \{\vlambda\}$:

\begin{property}[Scaling symmetries]
The scaling operation $\mathcal{T}_s$ with a set of non-negative parameters $\Lambda$ is a symmetry for all neural network $f_\vomega$, i.e.
\begin{equation}
    \forall\vx\in\mathbb{R}^n, \ f_{\mathcal{T}_s(\vomega, \Lambda)}(\vx) = f_\vomega(\vx).
\end{equation}
\end{property}

\subsection{Permutation symmetries}
We also propose a simple and intuitive formalism for permutation symmetries, multiplying the weights by permutation matrices. We have the following property for two-layer perceptions, with $P_m$ the set of $m$-dimensional permutation matrices:

\begin{property}
For all $\vtheta\in\mathbb{R}^{\cdot\times m}$, $\vomega \in\mathbb{R}^{m\times n}$, and permutation matrices $\vpi \in P_m$,
\begin{equation}
\label{eq:perm_twolay_def}
     \forall\vx\in\mathbb{R}^n, \ \vtheta \vpi^\intercal \times r\left(\vpi \times \vomega\vx\right) = \vtheta \times r(\vomega\vx).
\end{equation}
\end{property}

The left term of \Eqref{eq:perm_twolay_def} is the definition of the permutation symmetry operator of parameter $\Pi=\{\vpi\}$ for a network with two layers. In general, we have the following property:

\begin{property}[Permutation symmetries]
The permutation operation $\mathcal{T}_p$ with a set of parameters $\Pi$ is a symmetry for all neural networks $f_\vomega$, i.e.,
\begin{equation}
    \forall\vx\in\mathbb{R}^n, \ f_{\mathcal{T}_p(\vomega, \Pi)}(\vx) = f_\vomega(\vx).
\end{equation}
\end{property}

\subsection{The Bayesian posterior as a mixture of distributions}
\label{sec:thm}

With this formalism, we can establish the following proposition, clarifying the impact of weight-space symmetries on the Bayesian posterior.

\begin{proposition}
\label{prop:mixture}
Define $f_\omega$ a neural network and $f_{\tilde{\omega}}$ its corresponding identifiable model - a network transformed for having sorted unit-normed neurons. Let us also denote $\mathbb{\Pi}$ and $\mathbb{\Lambda}$, the sets of permutation sets and scaling sets, respectively, and $\tilde{\Omega}$ the random variable of the sorted weights with unit norm. The Bayesian posterior of a neural network $f_\omega$ trained with stochastic gradient descent can be expressed as a continuous mixture of a discrete mixture:
 \begin{equation}
 \label{eq:mixture}
         p(\Omega=\vomega \mid \mathcal{D}) = \hspace{-0.3em}
     \int\limits_{\Lambda \in \mathbb{\Lambda}} \hspace{-0.3em} |\mathbb{\Pi}|^{-1} \hspace{-0.2em}\sum\limits_{\Pi\in\mathbb{\Pi}}p(\tilde{\Omega} = \mathcal{T}_p(\mathcal{T}_s(\vomega,\Lambda), \Pi), \Lambda \mid \mathcal{D})d\Lambda.
\end{equation}
\end{proposition}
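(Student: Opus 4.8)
The plan is to read \eqref{eq:mixture} as a change-of-variables statement that reorganizes the posterior on the full weight space according to the orbits of the symmetry group generated by the scaling and permutation operators. First I would make the group action precise: the pair $(\Lambda, \Pi)$ indexes a transformation $\vomega \mapsto \mathcal{T}_p(\mathcal{T}_s(\vomega, \Lambda), \Pi)$, and by the Scaling and Permutation symmetry properties stated above this transformation leaves the network function — hence the likelihood $p(\mathcal{D} \mid \vomega)$ — invariant. I would then argue that every weight vector lies in a unique orbit whose canonical representative $\tilde{\Omega}$ is the sorted, unit-normed network, so the map $\vomega \leftrightarrow (\tilde{\Omega}, \Lambda, \Pi)$ is a bijection almost everywhere; this identifies $(\Lambda, \Pi)$ as the latent coordinates describing how $\vomega$ deviates from its identifiable form.

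The core step is the marginalization. Writing the joint posterior of the identifiable weights together with the symmetry coordinates as $p(\tilde{\Omega}, \Lambda, \Pi \mid \mathcal{D})$, I would recover $p(\Omega = \vomega \mid \mathcal{D})$ by the law of total probability, summing over the discrete permutation index $\Pi \in \mathbb{\Pi}$ and integrating over the continuous scaling index $\Lambda \in \mathbb{\Lambda}$. Substituting the deterministic relation $\tilde{\Omega} = \mathcal{T}_p(\mathcal{T}_s(\Omega, \Lambda), \Pi)$ into the joint density produces exactly the integrand $p(\tilde{\Omega} = \mathcal{T}_p(\mathcal{T}_s(\vomega, \Lambda), \Pi), \Lambda \mid \mathcal{D})$ appearing in the claim.

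To obtain the uniform weight $|\mathbb{\Pi}|^{-1}$ I would invoke the permutation equivariance of the training procedure: since the SGD dynamics and the initialization distribution are invariant under relabelling of hidden units, no permutation of a given identifiable mode is preferred, so $\Pi$ is independent of $(\tilde{\Omega}, \Lambda)$ and uniform on $\mathbb{\Pi}$. Factoring $p(\tilde{\Omega}, \Lambda, \Pi \mid \mathcal{D}) = |\mathbb{\Pi}|^{-1}\, p(\tilde{\Omega}, \Lambda \mid \mathcal{D})$ and pulling the constant out of the sum yields the stated mixture — a continuous mixture over $\Lambda$ of a discrete, uniformly weighted mixture over $\Pi$.

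The main obstacle I anticipate is the measure-theoretic bookkeeping of the change of variables, specifically the Jacobian of the scaling map. Permutations are measure preserving, but $\mathcal{T}_s(\cdot, \Lambda)$ rescales paired layers by $\vlambda$ and $\vlambda^{-1}$, so one must check that the induced Jacobian factors either cancel across the paired layers or are absorbed into the conditional law of $\Lambda$ under the unit-norm canonicalization; keeping this factor consistent with the convention that $\tilde{\Omega}$ is exactly sorted and unit-normed is the delicate point. A secondary subtlety is justifying the independence and uniformity of $\Pi$ rigorously rather than heuristically, since it rests on the symmetry of the SGD sampling process rather than on the Bayesian model alone.
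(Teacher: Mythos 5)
Your proposal is correct and follows essentially the same route as the paper: an almost-everywhere bijection $\vomega \leftrightarrow (\tilde{\vomega}, \tilde{\Lambda}, \tilde{\Pi})$ under which the sum and integral collapse to the unique canonicalizing pair, combined with the independence and uniformity of $\tilde{\Pi}$ obtained from the permutation equivariance of SGD and the layer-wise i.i.d.\ initialization (the paper's equiprobability proposition). The Jacobian of the scaling map that you flag is indeed glossed over in the paper's own proof, which simply equates $p(\Omega=\vomega\mid\mathcal{D})$ with the joint density of $(\tilde{\Omega},\tilde{\Pi},\tilde{\Lambda})$ without computing it.
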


Proposition~\ref{prop:mixture} provides an expression of the Bayesian posterior that highlights the redundancy of the resulting distribution, explaining the symmetry in Figure~\ref{fig:example} (left). Interestingly, a direct corollary of this formula is that \emph{all the layer-wise marginal distributions are identical}. In \Eqref{eq:mixture}, the permutations play a transparent role, being independent of $\vomega$ (except for strongly quantized spaces for $\vomega$, but we leave this particular case for future works). On the other hand, the part played by scaling symmetries is more complex and we discuss their impact in the following section.

\subsection{On the effective impact of scaling symmetries}
\label{sec:minimum_mass_pbm}

The equiprobability of permutations in \Eqref{eq:mixture} leads to a balanced mixture of $|\mathbb{\Pi}|$ permuted terms. We have no such result on scaling symmetries since the standard L2-regularized loss is not invariant to scaling symmetries (contrary to permutations). This absence of invariance obscures the impacts of scaling symmetries, which mostly remains to be addressed, although their "reduction" due to regularization was mentioned in, e.g., \cite{godfrey2022symmetries}. To the best of our knowledge, we provide the first analysis of the reality of scaling symmetries and their impact on the Bayesian posterior. With this objective in mind, we propose the following problem.

\begin{definition}
    Let $f_{\omega}$ be a neural network and $\bar{\omega}$ its weights without the biases. We define the minimum scaled network mass problem (or \emph{the min-mass problem}) as the minimization of the L2-regularization term of $f_{\omega}$ (the "mass") under scaling transformations. In other words,
    \begin{equation}
        m^* = \min\limits_{\Lambda\in\mathbb{\Lambda}}\ \left|\mathcal{T}_s(\bar{\omega}, \Lambda)\right|^2.
    \end{equation}
\end{definition}

This problem has interesting properties. Notably, we show in Appendix~\ref{app:loglogconvex} that the \emph{min-mass} problem is log-log strictly convex~\citep{boyd2004convex,agrawal2019disciplined}:

\begin{proposition}
\label{prop:loglogconvexity}
    The minimum scaled network mass problem is log-log strictly convex. As such, it is equivalent to a strictly convex problem on $\mathbb{R}^{|\mathbb{\Lambda}|}$ and admits a global minimum attained in a single point denoted $\Lambda^*$.
\end{proposition}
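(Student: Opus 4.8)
The plan is to recognize the mass $\left|\mathcal{T}_s(\bar{\omega},\Lambda)\right|^2$ as a \emph{posynomial} in the scaling parameters and then to invoke the standard fact from geometric programming~\citep{boyd2004convex,agrawal2019disciplined} that posynomials are log-log convex, i.e.\ convex after the componentwise substitution $\lambda \mapsto \log\lambda$. Writing the scaling set as $\Lambda=\{\lambda_k^{(\ell)}\}$ with one positive factor per hidden unit (the input and output units being pinned to $1$), the scaling operator of \Eqref{eq:scaling_twolay_def} multiplies a weight $\omega_{k,j}^{(\ell)}$ joining unit $j$ of layer $\ell-1$ to unit $k$ of layer $\ell$ by the monomial $\lambda_k^{(\ell)}/\lambda_j^{(\ell-1)}$, so that
\begin{equation}
  \left|\mathcal{T}_s(\bar{\omega},\Lambda)\right|^2 = \sum_{\ell,k,j}\bigl(\omega_{k,j}^{(\ell)}\bigr)^2\,\frac{(\lambda_k^{(\ell)})^2}{(\lambda_j^{(\ell-1)})^2},
\end{equation}
a sum of monomials with strictly positive coefficients $(\omega_{k,j}^{(\ell)})^2$, hence a posynomial.

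First I would set $u=\log\lambda$ componentwise. Each term becomes $(\omega_{k,j}^{(\ell)})^2\exp\!\bigl(2(u_k^{(\ell)}-u_j^{(\ell-1)})\bigr)$, an exponential of an affine function of $u$, with the pinned boundary units contributing the constant value $0$. The transformed mass $M(u)$ is then a positive combination of log-affine terms, so $\log M(u)$ is a log-sum-exp of affine functions, which is convex. This gives log-log convexity and shows the \emph{min-mass} problem is equivalent, via $\lambda=e^u$, to minimizing the convex function $M$ (equivalently $\log M$) on $\R^{|\mathbb{\Lambda}|}$.

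Next I would upgrade this to strict convexity. The Hessian of $\log M$ at $u$ is $A^\top(\mathrm{diag}(p)-pp^\top)A$, where the rows of $A$ are the exponent directions $\alpha_i=2(e_k^{(\ell)}-e_j^{(\ell-1)})$ and $p$ is the probability vector $p_i\propto(\omega_i)^2 e^{\langle\alpha_i,u\rangle}$; its quadratic form along $v$ equals the $p$-weighted variance $\Var_p[\langle\alpha_i,v\rangle]$, which vanishes only if $\langle\alpha_i,v\rangle$ is constant over all active terms. Because the input/output scalings are fixed to $1$ and (under a connectivity/nonvanishing assumption on the weights) every hidden unit lies on a chain of nonzero weights reaching the pinned boundary, propagating the equality layer by layer from the boundary forces every coordinate of $v$ to vanish, so $v=0$ and the Hessian is positive definite. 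For existence of a minimizer I would establish coercivity by the same boundary-pinning argument: for any $v\neq 0$ some exponent $\langle\alpha_i,v\rangle$ is strictly positive, so $M(u)\to+\infty$ along every ray and the sublevel sets are compact. A strictly convex, coercive function on $\R^{|\mathbb{\Lambda}|}$ attains its infimum at a unique point $u^*$, and $\Lambda^*=e^{u^*}$ is the claimed unique global minimizer.

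The routine part is the log-log convexity, which is immediate from the posynomial structure. The hard part will be passing from convexity to \emph{strict} convexity together with coercivity: both can fail if entire units are disconnected by vanishing weights, creating flat directions of $M$. The argument must therefore exploit the fact that the non-scaled input and output units are pinned to $1$ and a connectivity/nonvanishing hypothesis on the weights to rule out such directions, and making this boundary-to-interior propagation rigorous for arbitrary depth is where the real work lies.
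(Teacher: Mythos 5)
Your proposal follows essentially the same route as the paper's proof in Appendix~\ref{app:loglogconvex}: substitute $u=\log\lambda$, observe that the scaled mass becomes a positive combination of exponentials of affine functions of $u$ (the posynomial / geometric-programming structure), and deduce convexity, then existence and uniqueness from coercivity. The one place where you genuinely go beyond the paper is strict convexity: the paper simply asserts that the objective is ``a sum of convex and strictly convex functions,'' but each individual term $e^{2(u_k^{(\ell)}-u_j^{(\ell-1)})}$ is flat in the directions orthogonal to its exponent vector, so strict convexity of the sum actually requires the active exponent vectors to rule out a common flat direction --- exactly the condition you extract from the Hessian $A^\top(\mathrm{diag}(p)-pp^\top)A$ and then verify by propagating the equality $\langle\alpha_i,v\rangle=c$ along chains of nonzero weights from the pinned input/output units (forcing $c=0$ and then $v=0$). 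Your connectivity/nonvanishing hypothesis plays the role of the paper's ``non-degenerate'' condition, which it invokes only for coercivity; your version makes explicit that the same hypothesis is also needed for strict convexity, so your argument is, if anything, tighter than the one in the paper.
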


\begin{figure}[t]
    \centering
    \begin{subfigure}[b]{0.33\textwidth}
         \centering
         \includegraphics[width=\textwidth]{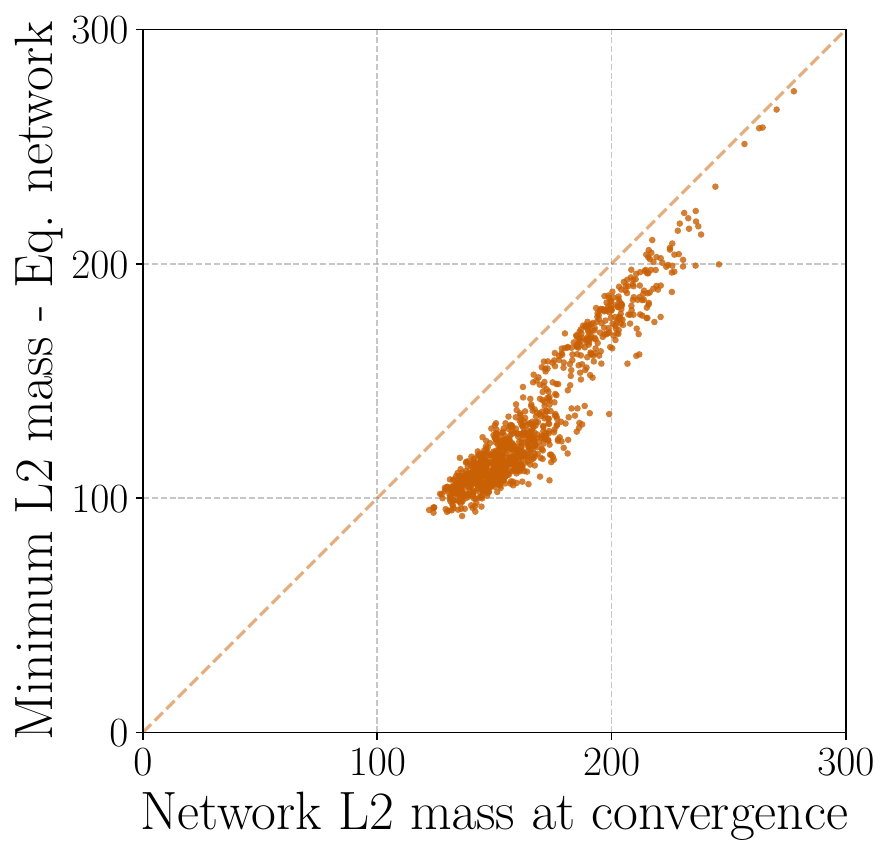}
     \end{subfigure}
     \hfill
     \begin{subfigure}[b]{0.32\textwidth}
         \centering
         \includegraphics[width=\textwidth]{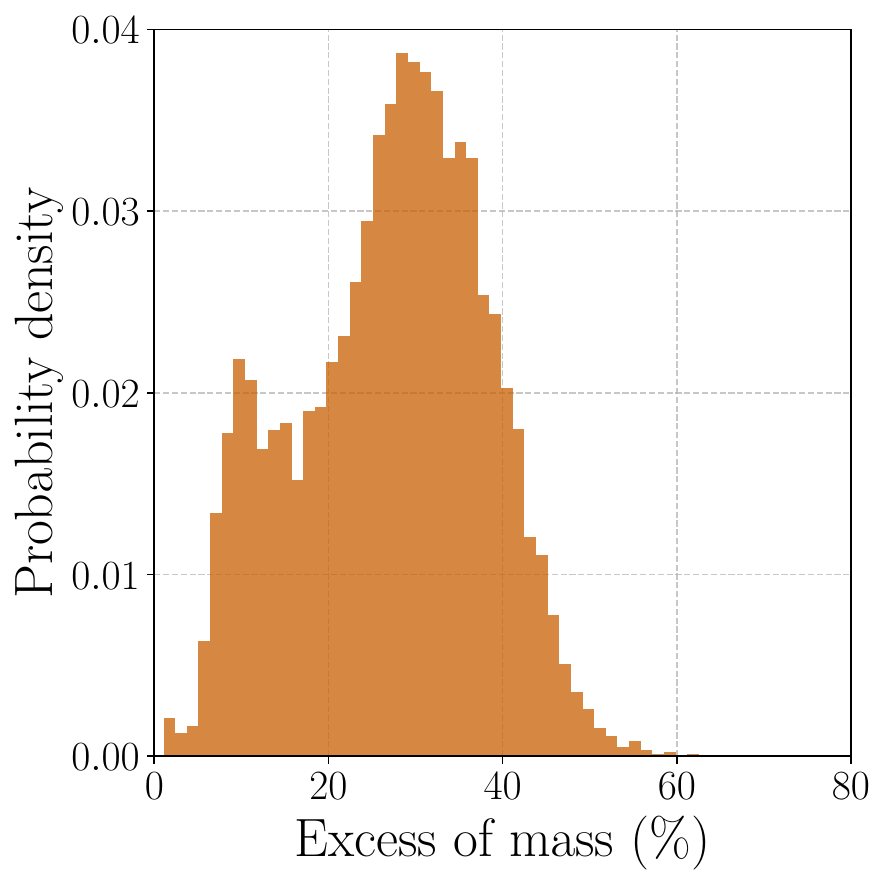}
     \end{subfigure}
     \hfill
     \begin{subfigure}[b]{0.31\textwidth}
         \centering
         \includegraphics[width=\textwidth]{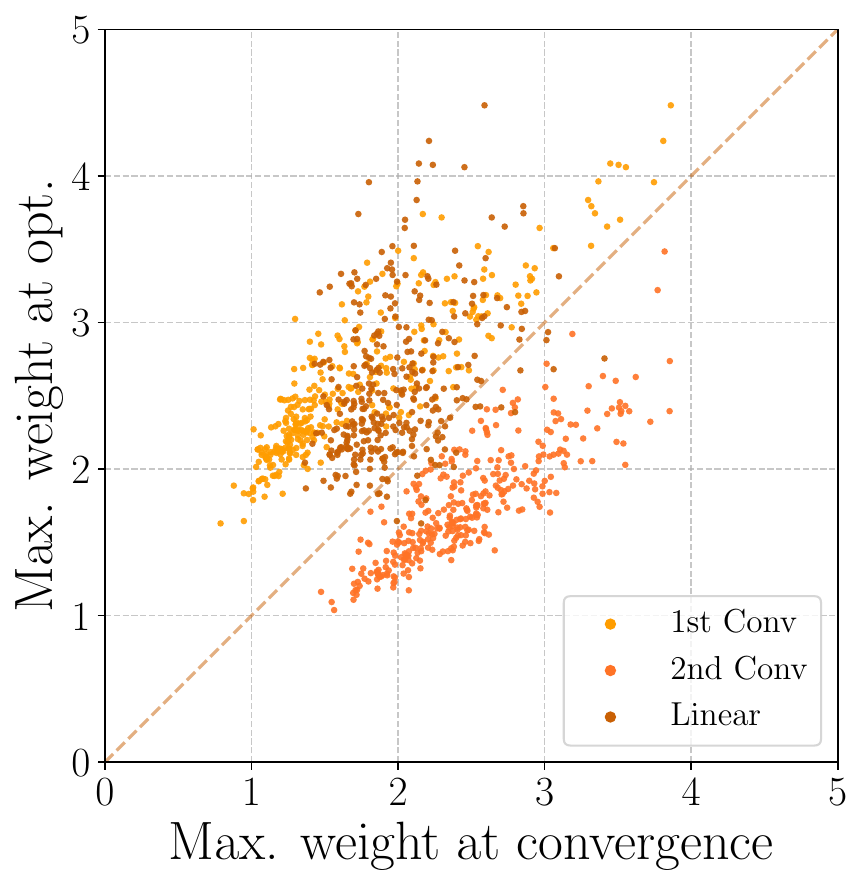}
     \end{subfigure}
        \caption{\textbf{OptuNet trained with weight decay never converges towards the \emph{minimum mass}.} We note that the maxima of the weights of scaled OptuNets at the minimum mass tend to be greater for lighter layers than in the original networks.}
        \label{fig:optunet_masses}
\end{figure}

It follows from Proposition~\ref{prop:loglogconvexity} that, if not already optimal at convergence, there is an infinite number of equivalent networks with training loss lower than the original network. We put the proposition into practice in Figure~\ref{fig:optunet_masses} using trained OptuNets (see Appendix~\ref{sup:arch_detail}). We measure their mass distribution and compare it to the masses of the optimal networks found with convex optimization. In practice, the effect of scaling symmetries remains present even with weight decay: neural networks always seem subject to scaling symmetries. For completeness, we report the values of the largest elements of each layer, showing that the minimization tends to increase the maxima of the layers with fewer parameters yet does not seem to promote unfeasible values.

In this section, we provided theoretical insights on the contrasted impacts of both scaling and permutation symmetries. In the following, we perform more empirical studies and propose to evaluate the link between posterior quality and experimental performance.

\section{Comparing the estimations of the posterior by approximate Bayesian methods}
\label{sec:comparison}

\DeclareRobustCommand{\perthousand}{%
  \ifmmode
    \text{\textperthousand}%
  \else
    \textperthousand
  \fi}
  
\begin{table}[t]
\centering
\resizebox{0.98\textwidth}{!}{%
\begin{tabular}{@{}cccccccccccc@{}}
\toprule[1.4pt]
 &  & Method & MMD $\downarrow$ & NS $\downarrow$ & Acc $\uparrow$ & ECE $\downarrow$ & Brier $\downarrow$ & AUPR $\uparrow$  & FPR95 $\downarrow$ & \textbf{ID}MI $\downarrow$ & \textbf{OOD}MI $\uparrow$ \\ \midrule
\multirow{10}{*}{\rotatebox[origin=c]{90}{MNIST - OptuNet}} & \multirow{5}{*}{\rotatebox[origin=c]{90}{One Mode}} & 
      Dropout   &  15.0 & 14.3 & 83.3 & 33.4 & 26.1 & 96.4 & 98.6 & 26.1 & 22.2 \\
 &  & BNN   & 18.8 & 17.1 & 78.1 & \first \textbf{7.4} & 30.9 & 67.9 & 93.7 & \first \textbf{0.1} & 0.1 \\
  &  & SGHMC  & 16.7 & 17.7 & \second 95.1 & \second 7.6 & \first \textbf{2.8} & 73.7 & 98.4 & 4.3 & 14.5 \\
 &  & SWAG  & 16.0 & 14.6 & 88.3 & 17.7 & 4.9 & 73.4 & 68.6 & \second 4.0 & 8.7 \\
 &  & Laplace  & 10.6 & 9.5 & 87.9 & 18.1 & \second 4.8 & 48.2 & 74.6 & 6.2 & 5.9 \\ \cmidrule[0.5pt](l){2-12} 
 & \multirow{5}{*}{\rotatebox[origin=c]{90}{Multi Mode}} & 
      Dropout   & 2.1 & 2.1 & 92.1 & 29.2 & 36.8 & \first \textbf{97.2} & 78.2 & 36.6 & 52.5 \\
 &  & BNN  & 2.8 & 2.5 & 86.5 & 17.5 & 24.4 & \second 96.9 & 27.2 & 21.1 & 52.3 \\
 &  & SWAG  & \second 1.8 & 1.3 & 95.0 & 13.1 & 17.5 & 88.7 & \second 24.6 & 27.6 & \second 62.2 \\
 &  & Laplace  & \second 1.8 & \second 0.8 & 94.8 & 12.8 & 15.8 & 95.4 & 32.1 & 21.1 & 52.2 \\
 &  & DE  & \first \textbf{0.0} & \first \textbf{0.0} & \first \textbf{95.3} & 10.7 & 13.5 & 95.7 & \first \textbf{12.8} & 19.3 & \first \textbf{62.6} \\ \midrule 
 \multirow{10}{*}{\rotatebox[origin=c]{90}{CIFAR100 - ResNet18}} & \multirow{5}{*}{\rotatebox[origin=c]{90}{One Mode}} & 
      Dropout  & 4.5 & 7.5 & 69.3 & 12.1 & 44.3 & 80.1 & 64.0 & 6.0 & 9.6 \\
 &  & BNN  & 9.0 & 10.2 & 57.6 & 21.8 & 62.5 & 81.7 & 62.6 & \second 0.8 & 2.2 \\
  &  & SGHMC  & 7.5 & 7.9 & 69.3 & 4.3 & 41.5 & 87.5 & 41.4 & \first \textbf{0.0} & 0.1 \\
 &  & SWAG  & 6.7 & 7.2 & 66.7 & 1.7 & 43.8 & 30.7 & 17.1 & 2.7 & 14.7 \\
 &  & Laplace  & 5.7 & 7.0 & 70.7 & \second 1.4  &40.0 & 84.5 & 40.6 & 31.3 & 70.6 \\  \cmidrule[0.5pt](l){2-12} 
 & \multirow{5}{*}{\rotatebox[origin=c]{90}{Multi Mode}} & 
      Dropout  & 0.7 & 4.5 & \second 75.5 & 4.8 & 64.2 & 93.5 & 23.7 & 23.8 & 77.0 \\
 &  & BNN & 6.1 & 5.6 & 66.3 & \first \textbf{1.3} & 45.6 & 91.0 & 30.9 & 44.5 & \second 110.4 \\
 &  & SWAG  & 5.0 & 5.4 & 68.4 & 2.3 & 42.1 & \first \textbf{97.9} & \second 17.1 & 7.5 & 33.0 \\
 &  & Laplace  & \second 0.6 & \second 4.3 & 75.2 & 7.8 & \second 35.2 & 95.1 & 20.1 & 50.5 & \first \textbf{128.3} \\
 &  & DE  & \first \textbf{0.0} & \first \textbf{0.0} & \first \textbf{75.9} & 2.3 & \first \textbf{33.3} & \second 97.1 & \first \textbf{16.1} & 26.5 & 90.1 \\ \midrule %
 \multirow{10}{*}{\rotatebox[origin=c]{90}{TinyImageNet - ResNet18}} & \multirow{5}{*}{\rotatebox[origin=c]{90}{One Mode}} & 
      Dropout  & 9.5 & 4.9 & 56.4 & 14.2 & 60.3 & 70.8 & 87.6 & 9.5 & 13.5 \\
 &  & BNN  & / & / & / & / & / & / & / & / & / \\
  &  & SGHMC  & 9.8 & 5.3 & 54.4 & \first \textbf{2.3} & 58.9 & 78.1 & 74.4 & \first \textbf{0.1} & 0.2 \\
 &  & SWAG  & 9.1 & 3.9 & 61.7 & 10.0 & 51.8 & 84.6 & 66.7 & \second 3.3 & 9.0 \\
 &  & Laplace  & 5.5 & 6.1 & 29.7 & 7.4 & 81.3 & 62.3 & 82.3 & 211.7 & \second 254.8 \\ \cmidrule[0.5pt](l){2-12} 
 & \multirow{5}{*}{\rotatebox[origin=c]{90}{Multi Mode}} & 
      Dropout  & 4.3 & 1.8 & \second 65.3 & 10.5 & 48.2 & 90.9 & 57.7 & 41.9 & 93.2 \\
 &  & BNN & / & / & / & / & / & / & / & / & / \\
 &  & SWAG  & 6.7 & 5.4 & 64.8 & \first \textbf{2.3} & \first \textbf{46.6} & \first \textbf{98.4} & \second 53.6 & 20.5 & 55.2 \\
 &  & Laplace  & \second 0.5 &  \second 3.1 & 34.4 & 13.0 & 79.3 & 64.4 & 77.2 & 227.4 & \first \textbf{267.2}  \\
 &  & DE  & \first \textbf{0.0} & \first \textbf{0.0} & \first \textbf{65.4} & 8.5 & \second 47.3 & \second 94.2 & \first \textbf{41.6} & 44.7 & 108.5 \\ \bottomrule[1.4pt]
\end{tabular}%
}
\caption{\textbf{Comparison of popular methods approximating the Bayesian posterior.} All scores are expressed in \%, except the MMDs for ResNet-18 networks, expressed in \perthousand. Acc stands for accuracy, and \textbf{ID}MI and \textbf{OOD}MI are the in-distribution and out-of-distribution mutual information. NS is the MMD computed after the removal of the symmetries, and DE stands for Deep Ensembles. Multi-mode methods use ten independently trained models.}
\label{tab:main-table}
\end{table}

In this section, we leverage symmetries to compare popular single-mode methods, namely, Monte Carlo Dropout~\citep{gal2016dropout}, Stochastic Weight Averaging Gaussian (SWAG) proposed by \cite{maddox2019simple}, Bayes by backpropagation BNNs~\citep{blundell2015weight}, and Laplace methods~\citep{ritter2018scalable}.
We also include their multi-modal variations, corresponding to the application of these methods on 10 different independently trained models, as well as SGHMC~\citep{chen2014stochastic} and Deep Ensembles (DE) highlighted by \cite{simple2017lakshminarayanan} and proposed earlier by \cite{hansen1990neural}. We compare these methods on 3 image classification tasks with different levels of difficulty, ranging from MNIST~\citep{lecun1998gradient} with our OptuNet (392 parameters) to CIFAR-100~\citep{krizhevsky2009learning} and Tiny-ImageNet~\citep{deng2009imagenet} with ResNet-18~\citep{he2016deep}. To this extent, we propose leveraging maximum mean discrepancy (MMD) to estimate the dissimilarities between the high-dimensional posterior distributions. We estimate the target distribution with 1000 checkpoints and compare it to 100 samples extracted from each of the previously-mentioned techniques. We detail these experiments in Appendix~\ref{sec:exp_details}.

\subsection{Evaluating the quality of the estimation of the Bayesian posterior}

One approach to assess the similarities between distributions involves estimating the distributions and subsequently quantifying the distance between these estimated distributions~\citep{smola2007hilbert,sriperumbudur2010hilbert}. However, these methods can become impractical when dealing with distributions in extremely high-dimensional spaces, such as the posterior of DNNs. An alternative solution is to embed the probability measures into a Reproducing Kernel Hilbert Space (RKHS) developed by, e.g., \cite{bergmann1922entwicklung, aronszajn1950theory, schwartz1964sous}. Within this framework, a distance metric MMD~\citep{song2008learning} - defined as the distance between the respective mean elements within the RKHS - is used to quantify the dissimilarity between the distributions.
\cite{gretton2012kernel} proposed to leverage MMD for efficient two-sample tests in high dimensions. For better efficacy, MMDs are computed on RKHS that allow for comparing all of their moments. In Table~\ref{tab:main-table}, we propose to compute the median of the aggregated MMDs~\citep{schrab2021mmd} with multiple Gaussian and Laplace kernels, with and without symmetries.

For tractability, we report the mean - weighted by the number of parameters of each layer - of the median over the twenty MMD kernels between the layer-wise estimated posterior and the posterior approximated by each method. The estimated posterior includes 1000 independently trained checkpoints, and the posterior approximation of each method includes 100 samples. Please refer to Appendix~\ref{sec:exp_details} for extended details (including the means and maxima of the MMDs).

\subsection{Performance metrics and OOD datasets}

On top of the MMD quantifying the difference between the posterior estimations, we measure several empirical performance metrics. We evaluate the overall performance of the models using the accuracy and the Brier score~\citep{brier1950verification, gneiting2007probabilistic}. Furthermore, we choose the binned Expected Calibration Error (ECE)~\citep{naeini2015obtaining} for top-label calibration and measure the quality of the out-of-distribution (OOD) detection using the area under the precision-recall curve (AUPR) and the false positive rate at $95\%$ recall (FPR95), as recommended by \cite{hendrycks2016baseline}, as a measure of OOD detection abilities, that are supposed to correlate with the quality of the estimated posterior. Finally, we report the mean diversity of the predictions in each ensemble through the mutual information (MI) (e.g., \cite{ash1965information}), often used to measure epistemic uncertainty~\citep{kendall2017uncertainties, ovadia2019can}. We use FashionMNIST~\citep{xiao2017fashion}, SVHN~\citep{netzer2011reading}, and Textures~\citep{cimpoi14describing} as OOD datasets for MNIST, CIFAR-100, and TinyImageNet, respectively.

\subsection{Results}

In our examination of Table \ref{tab:main-table} emerges that multi-mode techniques consistently demonstrate superior performance in terms of MMD when compared to their single-mode counterparts. This trend holds true in accuracy and negative log-likelihood. However, an intriguing divergence arises when considering the ECE, where techniques focusing solely on estimating a single mode often exhibit superior performance.

Turning our attention to the assessment of epistemic uncertainty, as quantified by AUPR and FPR95, multi-mode techniques, notably multi-SWAG, and Deep Ensembles, consistently outperform other methods. This underscores the strong connection between posterior estimation and the accuracy of epistemic uncertainty quantification. However, we note that the quality of aleatoric uncertainty quantification does not consistently correlate with that of the posterior distribution estimation.

The final two columns of the table shed light on the diversity of the models sampled from the posterior. The objective here is to minimize in-distribution mutual information (IDMI) while concurrently maximizing out-of-distribution mutual information (OODMI). An analysis shows that mono-mode methods tend to yield lower values for both IDMI and OODMI than multi-mode methods, which tend to exhibit higher IDMI and OODMI values, suggesting greater \emph{diversity}.

\section{Discussions}

We develop further insights on the posterior of Bayesian Neural Networks in relationship with symmetries. Notably, we evaluate the risk of \emph{functional collapse}, i.e., of training very similar networks in Section \ref{disc:functional_collapse} and we discuss the frequency of weights permutations in \ref{disc:perm}. We expand these discussions, add observations on the evaluation of the number of modes of the posterior, and propose visualizations in Appendix~\ref{sec:disc_details}.

\subsection{Functional collapse in ensembles: a study of ID and OOD disagreements}
\label{disc:functional_collapse}
\begin{figure}[t]
    \centering
    \begin{subfigure}[b]{0.32\textwidth}
         \centering
         \includegraphics[width=\textwidth]{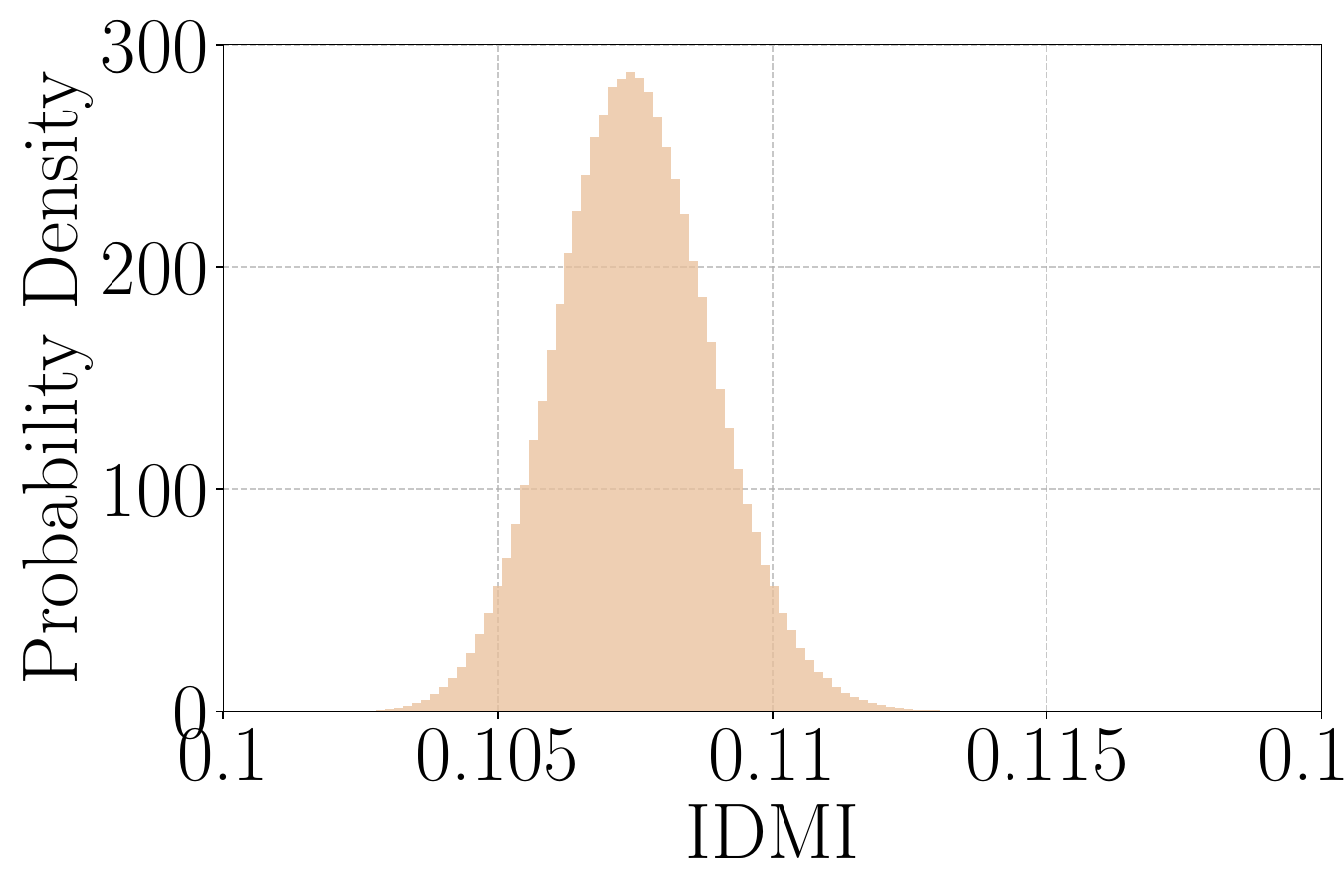}
     \end{subfigure}
     \hfill
     \begin{subfigure}[b]{0.32\textwidth}
         \centering
         \includegraphics[width=\textwidth]{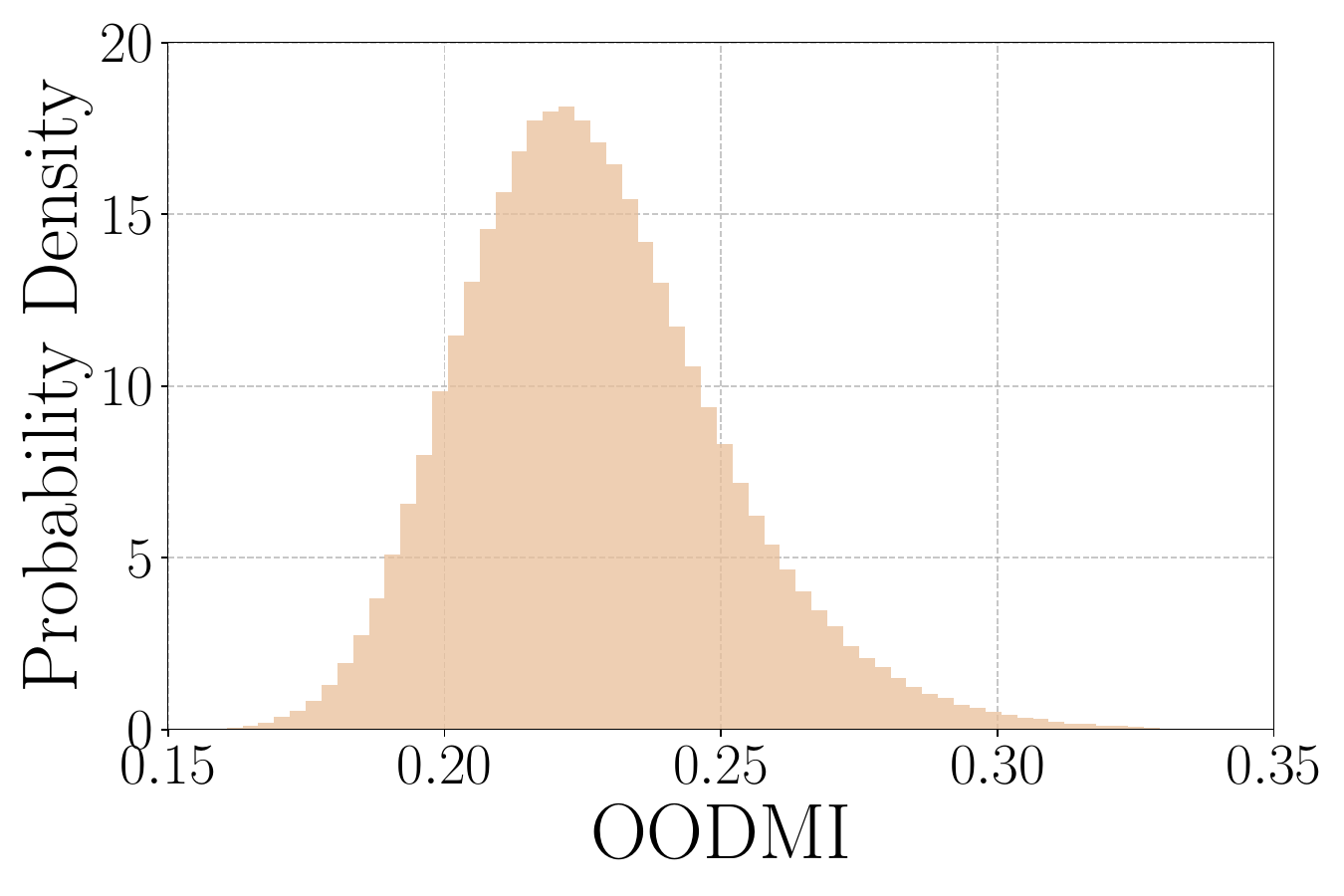}
     \end{subfigure}
     \hfill
     \begin{subfigure}[b]{0.32\textwidth}
         \centering
         \includegraphics[width=\textwidth]{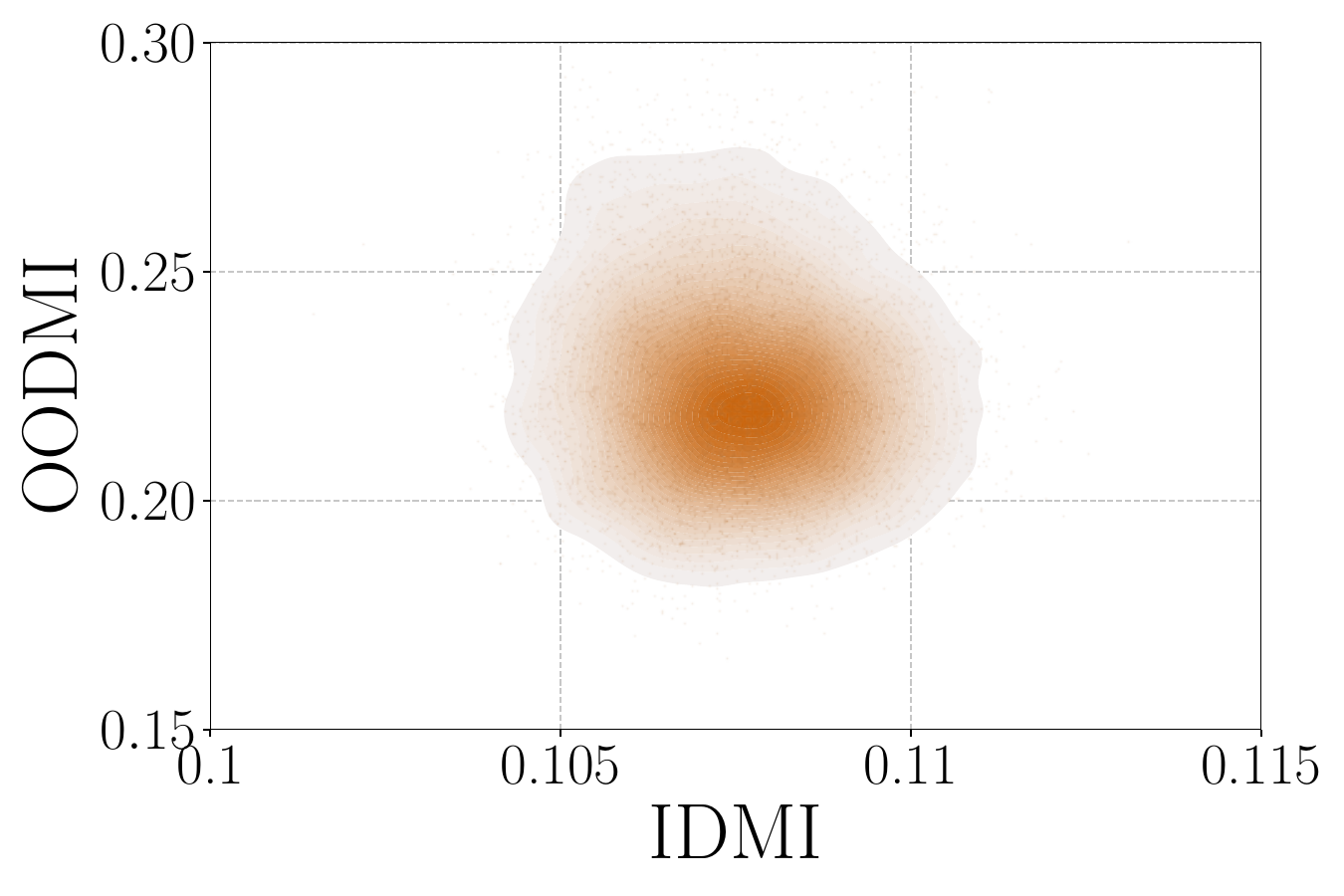}
     \end{subfigure}
        \caption{\textbf{Experiments show no hint of a functional collapse between couples of independently trained ResNet-18 on CIFAR-100.} Moreover, the in-distribution and out-of-distribution mutual information (IDMI, resp. OODMI) exhibit different variances but do not seem correlated.}
        \label{fig:functional_collapse_c100}
\end{figure}

Given the potentially very high number of equivalent modes due to permutation symmetries (see Section~\ref{sec:thm}), we support broadening the concept of collapse in the parameter-space (e.g., in \cite{d2021repulsive}) to \emph{functional collapse} to account for the impact of symmetries on the posterior%
. Parameter-space collapse is more restrictive and may not be formally involved when ensemble members lack diversity. It is also much harder to characterize as it would require an analysis of the loss landscape, at the very least.

We propose here to quantify functional collapse as a potential ground for the need for more complex repulsive ensembling methods~\citep{masegosa2020learning,rame2021dice}. To this extent, we randomly select 1000 ResNet-18 (trained to estimate the Bayesian posterior in Section~\ref{sec:comparison}) and compute the mean over the test set of their pairwise mutual information, quantifying the divergence between the single models and their average. We make these measures on in-distribution and out-of-distribution data (CIFAR-100 (ID) and SVHN (OOD)). In Figure~\ref{fig:functional_collapse_c100} (left), we see that the in-distribution MI between any two networks has a very low variance. There is, therefore, an extremely low probability of training %
two similar networks. This may be explained by the high complexity of the network (here, a ResNet-18), and we refer to Appendix~\ref{sec:functional_collapse_details} for results on a smaller architecture. In Figure~\ref{fig:functional_collapse_c100} (center), we see that the variance of the out-of-distribution MI is higher. However, we also note in Figure~\ref{fig:functional_collapse_c100} (right) that in contrast to common intuition (and to results for simpler models), we have, in this case, no significant correlation between the in-distribution and the out-of-distribution MI. This highlights that measuring the in-distribution \emph{diversity} (here with the MI) may be, in practice, a very poor indicator of the OOD detection performance of a model. Moreover, these results indicate that the complexity of the posterior is orders of magnitude higher than what we understand when taking symmetries into account.

\subsection{Frequency of weight permutations during training}
\label{disc:perm}
\begin{figure}[t]
    \centering
    \begin{subfigure}[b]{0.48\textwidth}
         \centering
         \includegraphics[width=\textwidth]{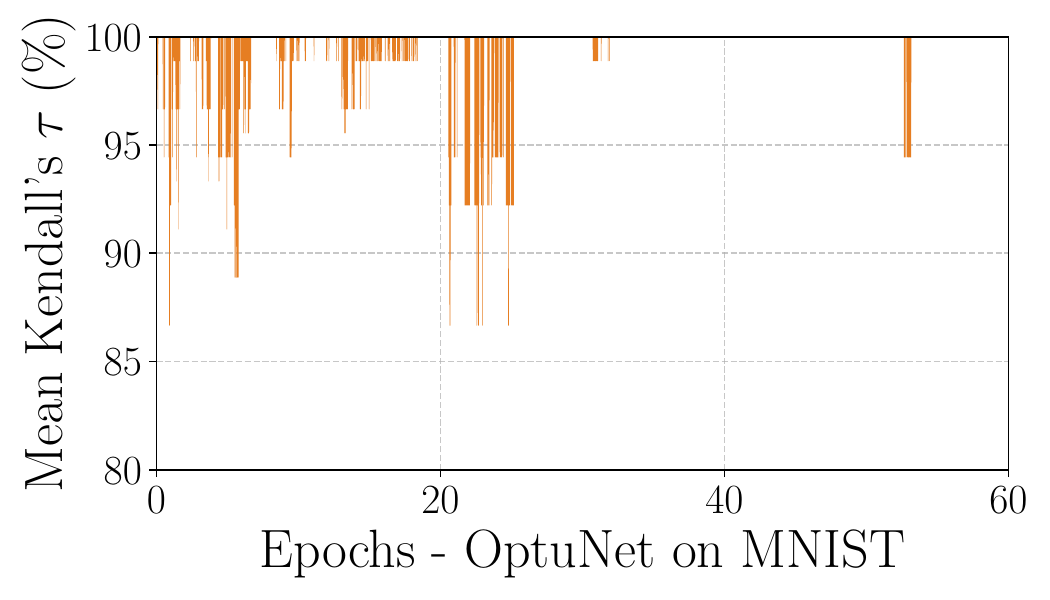}
     \end{subfigure}
     \hfill
     \begin{subfigure}[b]{0.48\textwidth}
         \centering
         \includegraphics[width=\textwidth]{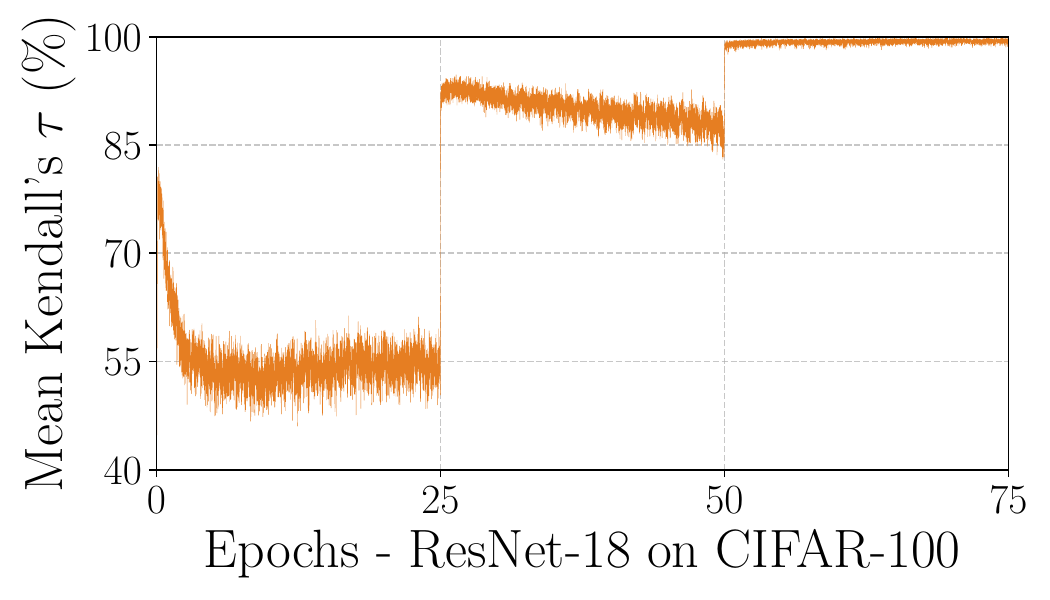}
     \end{subfigure}
     \caption{\textbf{Evolution during training of the mean Kendall's $\tau$ correlation between the permutations towards the identifiable model for all successive steps}: The correlation between successive permutations increases when the learning rate decreases.}
    \label{fig:kendal_tau}
\end{figure}     

We propose a new protocol to evaluate if a network tends to permute during training. Given a DNN $f_\vomega$, we can compute, for each step $s$ of the training, the permutation set $\Pi_s$ sorting its weights (and removing the symmetries). If the DNN tends to permute during the training, this implies a variation in the $\Pi_s$. We suggest to measure the extent of the variations using the Kendall's $\tau$ correlation coefficient~\citep{kendall1938new} between successive permutations $\Pi_s$ and $\Pi_{s+1}$. We plot the variation of the mean over several training instances and the elements of the permutation sets of Kendall's $\tau$ in Figure~\ref{fig:kendal_tau}. We see that on MNIST (left), the variations of the permutation set are scarce yet gathered around points of instability. These instabilities are due to the sorting mechanism based on the maximum values of the weights of the neurons. We have tried other statistics on the values of the weights, but taking the maximum seems the most stable. We see that the weights nearly never permute in the last phase of the training. The results differ for the ResNet-18 (right) since the number of degrees of freedom is much higher. We see a lot of variation during the phases with a high learning rate (reduced after 25 and 50 epochs). However, as for the first case, we do not see any particular sign of permutations in the last part of the training.

\section{Conclusion}

In this study, we have examined Bayesian neural network posteriors, which are pivotal for understanding uncertainty. Our findings suggest that part of the complexity in these posteriors can be attributed to the non-identifiability of modern neural networks viewing the posterior as a mixture of permuted distributions. To explore this further, we have introduced the \emph{min-mass} problem to investigate the real impact of scaling symmetries. Using real-world applications, we have proposed a method to assess the quality of the posterior distribution and its correlation with model performance, particularly in terms of uncertainty quantification.

While considering symmetries has provided valuable insights, our discussions hint at a deeper complexity going beyond weight-space symmetries. In future work, we plan to continue our exploration of this intriguing area.

\clearpage
\section{Reproducibility statement}

We use publicly available datasets, including MNIST, FashionMNIST, CIFAR100, SVHN, ImageNet-200, and Textures, to ensure transparency and accessibility. Please refer to Appendix~\ref{app:dataset_details} for details on these datasets. Our detailed experimental methods are outlined in Appendices \ref{sec:ex_details} and \ref{sec:exp_details}, and the proofs for all theoretical results are provided in Appendix~\ref{app:formalism}.

To help the replication of our work, we will share the source code for our experiments on GitHub shortly. Notably, we will release a library that tackles the non-identifiability of neural networks.

For our experiments in Section ~\ref{sec:comparison}, we rely exclusively on open-source libraries, such as the GitHub repository \href{https://github.com/JavierAntoran/Bayesian-Neural-Networks}{Bayesian-Neural-Networks} for SGHMC, BLiTZ~\citep{esposito2020blitzbdl} for variational Bayesian neural networks, and Laplace~\citep{daxberger2021laplace}. For the SWAG method, we also use the publicly available code from the original paper~\citep{maddox2019simple}. Finally, we estimate the maximum mean discrepancies with a torch version of the code from \cite{schrab2021mmd} and solved our convex optimization problems (see Definition \ref{def:scaled_problem}) with \texttt{cvxpy}~\citep{diamond2016cvxpy,agrawal2018rewriting}. The statistical experiments, such as Pearson's $\rho$ and Kendall's $\tau$, are performed with SciPy~\citep{2020SciPy-NMeth}. 

\section{Ethics}

Our primary goal in this paper is to improve our comprehension of the Bayesian posterior, which we argue is a fundamental element to understand to contribute to the reliability of machine-learning methods.

We note that training a substantial number of checkpoints for estimating the posterior, especially in the case of the thousand models trained on TinyImageNet, was energy intensive (around 3 Nvidia V100 hours per training). To mitigate the environmental impact, we opted for a carbon-efficient cluster.

\clearpage
\bibliography{iclr2024_conference}
\bibliographystyle{iclr2024_conference}
\clearpage
\appendix

\startcontents
\renewcommand\contentsname{Table of Contents - Supplementary Material}
{
\hypersetup{linkcolor=black}
\printcontents{ }{1}{\section*{\contentsname}}{}
}

\clearpage

\section{Details on the starting example}
\label{sec:ex_details}

\begin{figure}[t]
    \centering
    \includegraphics[width=0.5\linewidth]{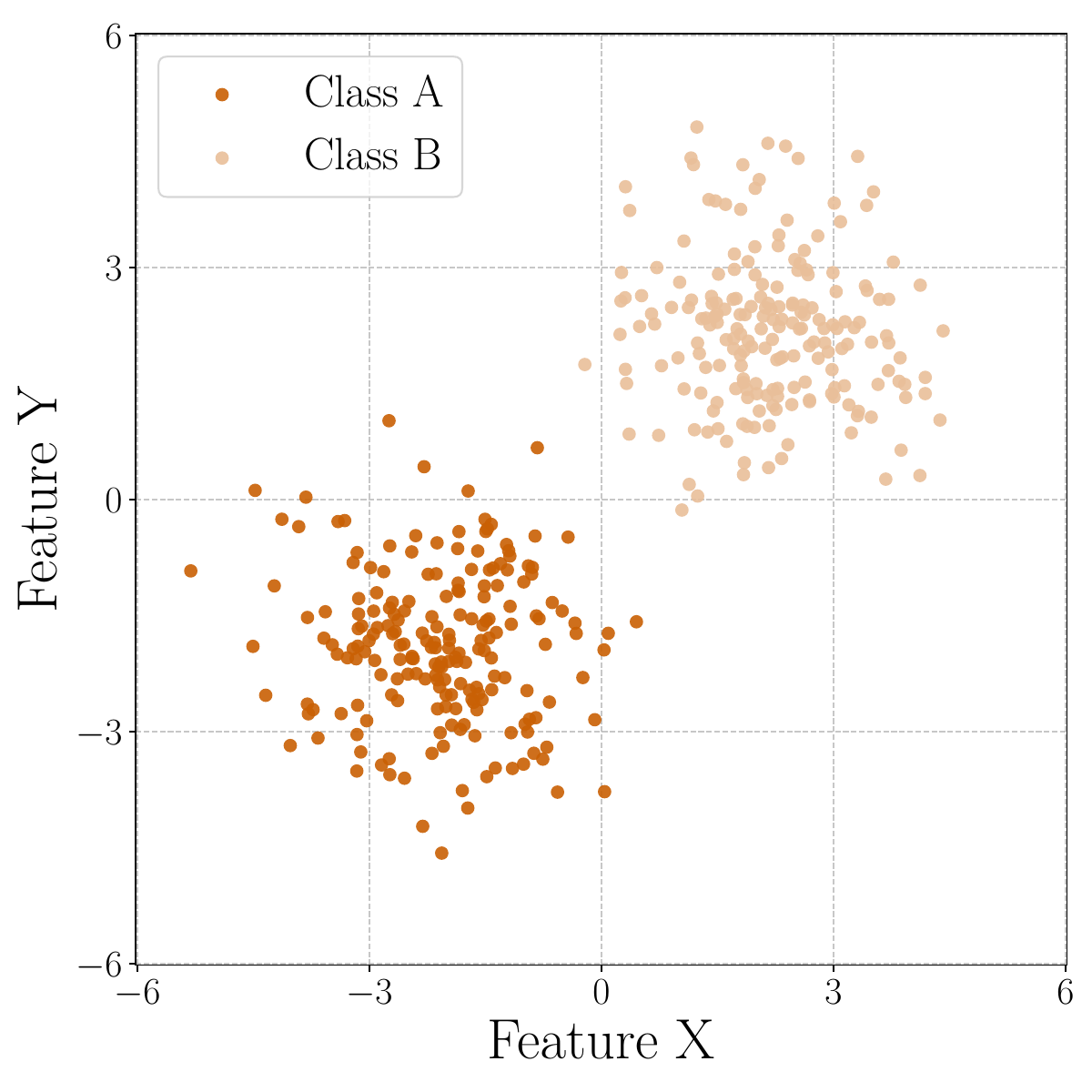}
    \caption{\textbf{The training data of the models whose posterior is represented on the introductory Figure \ref{fig:example}.} The data is separable to ensure efficient training of very simple 9-parameter perceptrons.}
    \label{fig:example_data}
\end{figure}

\subsection{Training of the two-layer perceptrons}
To create the introductory example, presented in Figure~\ref{fig:example}, we generated the data corresponding to the two classes from two different normal distributions of means $\vmu_A = (-2, -2)$ and $\vmu_B=(2,2)$ and of identity covariances. We ensured that the samples (200 points each) were fully separable to make training as simple as possible. We plot the training data in Figure~\ref{fig:example_data}. We then trained 10,000 two-layer perceptrons with two input, two hidden, and one output neurons with early stopping for 10 epochs with SGD and a binary cross-entropy loss. For the posterior to exhibit scaling symmetries, we separated the first and second layers with a ReLU activation function. We used a batch size of 10 data and a learning rate of 2. Finally, we selected networks that had a sufficiently low loss on the training set, therefore removing the few outliers (less than 1\%) that had not been trained successfully.

\subsection{Removing the symmetries}

Figure~\ref{fig:example} shows the density of weights of the last layer (excluding the last bias). The first Figure shows the unaltered projection of the posterior on these weights, which is then transformed to guarantee a 3-norm per neuron. We chose 3 as norm for visual purposes since the last layer (whose posterior is plotted in Figure~\ref{fig:example}) is not normalized and is subject to the normalization of the previous layer. Finally, we remove the permutation symmetries by ordering the weights. Contrary to the formalism developed in the following sections, we decide here to order the weights starting from the last layer (and not the first) to convey the message more efficiently. For each network, we check with random inputs that all symmetry removals do not alter the networks. Notebooks are available in the supplementary material of the submission and detail the process for generating the Figure. We provide more detail on the symmetry removal algorithms in Section~\ref{sec:alg_sym_removal}.
\clearpage

\section{Details on the experiments of section \ref{sec:comparison}}
\label{sec:exp_details}
\subsection{Experimental details}

In this section, we develop the training recipes of the different models as well as the parameters used for all the posterior estimation methods. We train all of our models with PyTorch~\citep{paszke2019pytorch} on V100 clusters. All networks are trained until the last step to avoid biasing the posterior with the validation set.

For all the variational Bayesian Neural Networks (vBNN), we use the default priors from Bayesian layers in torch zoo (Blitz)~\citep{esposito2020blitzbdl}.

When using ResNet-18 models, we perform last-layer approximations of Laplace and Dropout. For more information on last-layer approximation, you may refer to \cite{brosse2020last} for instance.

\paragraph{OptuNet - MNIST}

We train OptuNet for 60 epochs with batches of size 64 using stochastic gradient descent (SGD) with a start learning rate of 0.04 and a weight decay of $2\times10^{-4}$. We decay the learning rate twice during training, at epochs 15 and 30, dividing the learning rate by 2. 

We train the vBNNs with the same number of epochs, albeit using 3 Monte Carlo estimates of the Evidence lower bound (ELBO) at each step and using a Kullback-Leibler divergence (KLD) of $10^{-5}$. We disable weight decay for the training of the vBNNs.

To compute the Maximum Mean Discrepancies, we use a full Hessian Laplace optimization. We use a dropout rate of 0.2 on the last layer (both for training and testing) and perform SWAG using 20 different models (also set as the maximum number to keep a low-rank matrix), continuing the training with the classic high-learning rate schedule (starting with a Linear increase) for twice as long. We collect the models 20 times with 10-epoch intervals between each of them. We sample the models with a scale of 0.1.

For the dataset, we normalize the data as usual and perform a random crop of size 28 and padding four during training. We use the normalized test images for testing. 

For more details on the architecture of our OptuNet, please refer to Section~\ref{sup:arch_detail}.

\paragraph{ResNet-18 - CIFAR-100}

We train the ResNet-18 for 75 epochs with batches of size 128 using SGD with Nesterov~\citep{nesterov1983method,sutskever2013importance}, with a start learning rate of 0.1, a momentum of 0.9, and a weight decay of $5\times10^{-4}$. Similarly to MNIST, we decay the learning rate twice during training, this time at epochs 25 and 50, and divide the learning rate by 10.

We train the variational BNN with SGD for 150 epochs, with a starting learning rate of 0.01, which we decay once after 80 epochs. We weigh the KLD with a coefficient $2\times10^{-9}$ and perform three ELBO samples at each step.

To compute the MMDs for Laplace methods, we use the last-layer Kronecker approximation. Indeed, the last layer of ResNet-18 is too large for its full Hessian to fit in memory. Moreover, Kronecker is not available for the full network as it is not implemented for batch normalization layers, and the low-rank version is too long to train. For the dropout models, we perform last-layer dropout of probability 0.5. As for MNIST, we train the SWAG models twice the time of the original training recipe with the usual settings (start learning rate with a linear ramp) and collect 20 models with 10-epoch intervals. We sample the models with a scale of 0.1.

For CIFAR-100, we perform the classic normalization, as well as a random crop for a four-pixel padding and a random horizontal split.

\paragraph{ResNet-18 - TinyImageNet}
We train the ResNet-18 for 200 epochs with a batch size of 128 using SGD with a start learning rate of 0.2, a weight decay of $10^{-4}$, and a momentum of 0.9. We use a cosine annealing scheduler until the end of the training.

We did not manage to train vBNNs for TinyImageNet. %
The other methods use exactly the same hyperparameters as for ResNet-18 on CIFAR-100.

For TinyImageNet, we perform the same preprocessing as for CIFAR-100, except we keep the original resolution of $64\times64$ pixels.

\subsection{Detailed architecture of OptuNet}
\label{sup:arch_detail}

\begin{figure}[t]
    \centering
    \includegraphics[width=0.2\linewidth]{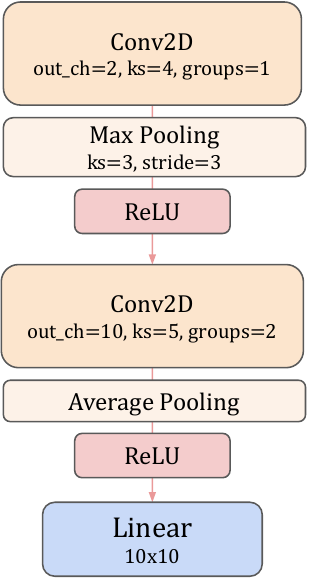}
    \caption{\textbf{Detailed architecture of OptuNet}. OptuNet includes only 392 parameters yet achieves between 85\% and 90\% accuracy on MNIST. Only the last fully connected layer contains biases.}
    \label{fig:optunarch}
\end{figure}

For OptuNet, we aimed to create the smallest number of parameters possible while keeping decent performance on MNIST. To this extent, we performed an architecture search with tree-structured Parzen estimators (e.g., \citep{bergstra2011algorithms}) in Optuna~\citep{akiba2019optuna}. To reduce the number of parameters, OptuNet uses grouped convolutions, introduced by \cite{krizhevsky2012imagenet} (see \cite{laurent2023packed} for a formal definition) as its second convolutional layer.

\subsection{Details on the Datasets}
\label{app:dataset_details}

This part provides some details on the different datasets used throughout the paper.

\subsubsection{In distribution datasets}
In this work, we used the following datasets for training and testing.

\paragraph{MNIST}
The MNIST dataset~\cite{lecun1998gradient} comprises 70,000 binary images of handwritten digits, each of size 28$\times$28 pixels. These images are divided into a training set with 60,000 samples and a testing set with 10,000 samples.

\paragraph{CIFAR-100}
CIFAR-100~\cite{krizhevsky2009learning} consists of 100 object classes and contains a training set with 50,000 images and a testing set including 10,000 images. Each of the images is RGB and of size 32$\times$32 pixels.

\paragraph{Tiny-ImageNet}
TinyImageNet is a subset of ImageNet-1k~\cite{deng2009imagenet}. It consists of 200 object classes, each with 500 training and 50 validation images. Additionally, there are 50 test images per class for evaluating models. The images are RGB and have a size of 64$\times$64 pixels.

\subsubsection{Out-of-distribution datasets}

The objective of the following out-of-distribution datasets is to evaluate the quality of the posterior as a means to quantify the epistemic uncertainty. In this work, we do not consider these datasets as fully representative of difficult real-world out-of-distribution detection tasks.

\paragraph{Fashion-MNIST}
The Fashion-MNIST dataset~\citep{xiao2017fashion} is a set of 28$\times$28 pixel-grayscale images consisting of 60,000 training and 10,000 test images. We used the test set as is for the out-of-distribution tasks with MNIST.

\paragraph{SVHN}
The Street View House Numbers (SVHN) dataset~\citep{netzer2011reading} is a large-scale dataset of 600,000 images of digits obtained from house numbers in Google Street View. In our work, we kept a fixed set of 10,000 images that we cropped to squares of 32$\times$32 pixels.

\paragraph{Textures}
The Describable Textures Dataset~\citep{cimpoi14describing} is a dataset containing 5640 images of textures divided into three subsets. Considering that the number of images is limited compared to the other testing sets, we use the concatenation of all the subsets for OOD detection. We resize all the images to 64$\times$64 pixels to stick to the size of TinyImageNet.

\subsection{Full MMD Table}

\begin{table}[t]
\centering
\resizebox{0.65\textwidth}{!}{%
\begin{tabular}{ccccccccc}
\hline
 &  & Method & \multicolumn{3}{c}{MMD} & \multicolumn{3}{c}{NS MMD} \\
\multicolumn{1}{l}{} & \multicolumn{1}{l}{} & \multicolumn{1}{l}{} & \multicolumn{1}{l}{Median} & \multicolumn{1}{l}{Mean} & Max. & \multicolumn{1}{l}{Median} & \multicolumn{1}{l}{Mean} & Max. \\ \hline
\multirow{10}{*}{\rotatebox[origin=c]{90}{MNIST - OptuNet}} & \multirow{5}{*}{\rotatebox[origin=c]{90}{One Mode}} &
      Dropout & 14.9 & 14.6 & 22.9 & 14.3 & 14.3 & 22.5 \\
 &  & BNN & 18.8 & 18.3 & 29.5 & 17.1 & 17.0 & 27.4 \\
  &  & SGHMC & 16.7 & 16.6 & 27.8 & 17.7 &18.3  & 32.2 \\
 &  & SWAG & 15.9 &  15.8& 25.9 & 14.6 & 13.8 & 21.6 \\
 &  & Laplace & 10.6 & 10.5 & 19.1 & 9.5 & 9.4 & 16.4 \\ \cline{2-9} 
 & \multirow{5}{*}{\rotatebox[origin=c]{90}{Multi Mode}} & 
      Dropout & 2.1 & 2.3 & 4.1 & 2.1 & 2.0 & 3.5 \\
 &  & BNN & 2.8 & 2.9 & 5.8 & 2.5 & 2.4 & 4.3 \\
 &  & SWAG & 1.8 & 2.0 & 4.1 & 1.3 & 1.3 & 2.5 \\
 &  & Laplace & 1.8 & 2.0 & 4.1 & 0.8 & 0.8 & 1.5 \\
 &  & DE & 0.0 & 0.0 & 0.0 & 0.0 & 0.0 & 0.0 \\ \hline
 \multirow{10}{*}{\rotatebox[origin=c]{90}{CIFAR-100 - ResNet-18}} & \multirow{5}{*}{\rotatebox[origin=c]{90}{One Mode}} &
      Dropout & 4.5 & 4.6 & 7.2 & 7.4 & 7.5 & 12.0 \\
 &  & BNN & 9.0 & 9.2 & 14.6 & 10.0 & 10.2 & 16.0 \\
  &  & SGHMC & 7.5 & 7.3 & 11.0 & 7.9 & 8.0 & 13.0 \\
 &  & SWAG & 6.7 & 7.0 & 11.2 & 7.2 & 7.5 & 12.3 \\
 &  & Laplace & 5.7 & 6.0 & 9.7 & 7.0 & 7.2 & 11.7 \\ \cline{2-9} 
 & \multirow{5}{*}{\rotatebox[origin=c]{90}{Multi Mode}} 
    & Dropout & 0.7 & 0.7 & 1.2 & 4.4 & 4.5 & 7.5 \\
 &  & BNN & 6.1 & 6.3 & 10.6 & 5.6 & 5.8 & 10.0 \\
 &  & SWAG & 5.0 & 5.2 & 8.4 & 5.4 & 5.6 & 9.2 \\
 &  & Laplace & 0.6 & 0.6 & 1.0 & 4.3 & 4.4 & 7.2 \\
 &  & DE & 0.0 & 0.0 & 0.0 & 0.0 & 0.0 & 0.0 \\ \hline
 \multirow{10}{*}{\rotatebox[origin=c]{90}{TinyImageNet - ResNet-18}} & \multirow{5}{*}{\rotatebox[origin=c]{90}{One Mode}} &
      Dropout & 9.5 & 9.6 & 15.0 & 4.9 & 5.0 &7.8  \\
 &  & BNN & / & / & / & / & / & / \\
 &  & SGHMC & 9.8 & 10.0 & 15.7 & 5.3 & 5.2 & 7.9 \\
 &  & SWAG & 9.1 & 9.5 & 15.6 & 5.4 & 5.6 & 9.5 \\
 &  & Laplace & 5.5 & 5.8 & 9.6 & 6.1 & 6.3 & 10.1 \\ \cline{2-9} 
 & \multirow{5}{*}{\rotatebox[origin=c]{90}{Multi Mode}} 
    & Dropout & 4.3 & 4.4 & 7.4 & 1.8 & 1.8 & 3.0 \\
 &  & BNN & / & /& / & / & / & / \\
 &  & SWAG & 6.7 & 6.9 & 11.3 & 3.9 & 4.2 & 7.3 \\
 &  & Laplace & 0.6 & 0.6 & 9.9 & 3.1 & 3.2 & 5.3 \\
 &  & DE & 0.0 & 0.0 & 0.0 & 0.0 & 0.0 & 0.0 \\ \hline
\end{tabular}%
}
\caption{\textbf{Comparison of popular methods approximating the Bayesian posterior.} Mean Maximum Discrepancies for OptuNet are expressed in \%. For ResNet-18 networks, MMDs are expressed in \perthousand. NS MMD is the MMD computed after the removal of the symmetries. Multi-mode methods use ten independently trained models. DE stands for Deep Ensembles~\citep{simple2017lakshminarayanan}.}
\label{tab:mmd-table}
\end{table}

In Table~\ref{tab:mmd-table}, we report the weighted average over the layers of the medians, means, and maxima of the Maximum Mean Discrepancies of each layer, computed in Section~\ref{sec:comparison}. We recall that the maximum is used to obtain the best discriminative power of two-sample tests, for instance, in ~\citep{gretton2012kernel,schrab2021mmd}. We report the median in the main table to improve the representativity of the values. However, we note that the order of the values does not seem to change, particularly between the means, medians, and maxima.

We see that, after Deep Ensembles - that have perfect MMD as expected - Laplace methods perform best on all measures, be it with or without symmetries, as well as for all types of architectures. However, as shown in Table~\ref{tab:main-table}, they are inferior to SWAG on performance metrics. This hints that the correlation between the estimated quality of the posterior and the real-world metrics may not be fully correlated. Please note that for Deep Ensembles, we used different checkpoints for the second sample than in the sample corresponding to the estimation of the posterior.

\clearpage
\section{Detailed formalism, propositions, and proofs}
\label{app:formalism}
This section details the definitions, properties, and propositions and provides sketches of proofs. %

Throughout the paper, we use an intuitive definition of neural networks that we propose to formalize in the following definition:

\begin{definition}[Neural network]
Let $\vx\in\mathbb{R}^{d}$ be an input datum, $r$ the rectified linear unit, and $s$ an almost everywhere differentiable activation function. We define the $L$-layer neural network $f_\theta$ with $\theta=\{W^{[l]}, b^{[l]}\}_{l\in\llbracket1,L\rrbracket}$ as follows:
\begin{align}
&\va^{[0]} = \vx \\
&\forall l\in\llbracket1, L\rrbracket, \ \vz^{[l]} = W^{[l]}\va^{[l-1]}+b^{[l]} \label{eqdef:linear} \\
&\forall l\in\llbracket1, L-1\rrbracket, \ \va^{[l]} = r(\vz^{[l]}) \\
&f_\theta(\vx) = s(\vz^{[l]})
\end{align}
\end{definition}

This definition of neural networks is limited to standard multilayer perceptions. However, it can be easily extended to a mix of convolutional and fully connected layers to the cost of a heavier formalism. To do this, we should replace the relationship between $\vz^{[l]}$ and $\va^{[l-1]}$ in \Eqref{eqdef:linear} by a sum on the kernels of the cross-correlation operator. It would also be possible to account for residual connections using the computation graph.

\begin{property}[Gradients and backpropagation]
\label{prop:backprop}
    Let us denote the training loss $\mathcal{L}(\theta)$ of the $L$-layer neural network $f_\theta$ and $\theta=\{W^{[l]}, b^{[l]}\}_{l\in\llbracket1,L\rrbracket}$. With, for all $l\in\llbracket1,L\rrbracket$, $\vdelta^{[l]}$ as the gradient of $\mathcal{L}(\theta)$ with respect to $\vz^{[l]}$, we have,
    \begin{align}
        &\vdelta^{[L]} = \nabla_{\vz^{[L]}}\mathcal{L}(\theta)\\
        &\forall l\in\llbracket1,L-1\rrbracket, \vdelta^{[l]}= \nabla_{\vz^{[l]}}\mathcal{L}(\theta) = \left(W^{[l+1]^\intercal}\vdelta^{[l+1]}\right) \circ \nabla_{\vz^{[l]}} \va^{[l]}\\
        &\forall l\in\llbracket1,L\rrbracket, \nabla_{W^{[l]}}\mathcal{L}(\theta) = \vdelta^{[l]}\va^{[l-1]^\intercal} \\
        &\forall l\in\llbracket1,L\rrbracket, \nabla_{b^{[l]}}\mathcal{L}(\theta) = \vdelta^{[l]}.
    \end{align}
\end{property}

\subsection{Extended formalism on symmetries}

In this section, we propose several definitions and properties to extend the formalism quickly described in the main paper.

\subsubsection{Scaling symmetries}

We start with a recall of the definitions of the line-wise and column-wise products between a vector and a matrix.
\begin{definition}
We denote the line-wise product as $\bigtriangledown$ and the column-wise product as $\rhd$. Let $\vomega \in \mathbb{R}^{n\times m}$, $\vlambda \in \mathbb{R}^m$, and $\vmu\in \mathbb{R}^n$, with $n, m \geq 1$, 
\begin{equation}
    \forall i\in\llbracket 1, n\rrbracket, j\in\llbracket 1,m\rrbracket, \ (\vlambda \bigtriangledown \vomega)_{i,j} = \lambda_j\omega_{i,j} \ \text{and} \ (\vmu \rhd \vomega)_{i,j} = \mu_i\omega_{i,j}.
\end{equation}
\end{definition}

To handle biases, we extend these definitions to the product of vectors, considering the right vector as a one-dimensional matrix. It follows that for all vectors $\vlambda, \va \in \mathbb{R}^m$,
\begin{equation}
    \vlambda \bigtriangledown \va = \vlambda \rhd \va = \vlambda \cdot \va.
\end{equation}

The line-wise and column-wise products between vectors and matrices \emph{have priority} over the classic matrix product "$\times$" but not over the element-wise product between vectors "$\cdot$". We will include parentheses when deemed necessary for the clarity of the expressions.

\paragraph{Non-negative homogeneity of the rectified linear unit} 
The rectified linear unit~\citep{nair2010rectified}, used as the standard activation function in most networks (with the notable exception of vision transformers~\citep{dosovitskiy2020image} %
is non-negative homogenous and therefore authorizes scaling symmetries~\citep{neyshabur2015path}.

\begin{property}
Denote $r$ the rectified linear unit such that $r(x)=x_{+}$. For all vectors $\vlambda\in (\mathbb{R}^{+})^d$ and $\vx\in\mathbb{R}^d$, $r$ is positive-homogeneous, i.e.
\begin{equation}
    r(\vlambda \cdot \vx) = \vlambda \cdot r(\vx).
\end{equation}
\end{property}

\paragraph{Some properties on line-wise and column-wise products} In this paragraph, we propose some simple properties to help achieve an understanding of these notations.
\begin{property}
For all vectors $\vlambda^{(1)}, \vlambda^{(2)} \in \mathbb{R}^n$, $\vtheta \in \mathbb{R}^{\cdot\times n}$, and $\vomega\in \mathbb{R}^{n\times \cdot}$
   \begin{gather}
   (\vlambda^{(1)}\bigtriangledown \vtheta) \times (\vlambda^{(2)}\rhd\vomega) = (\vlambda^{(1)}\cdot\vlambda^{(2)})\bigtriangledown \vtheta \times \vomega=\vtheta \times (\vlambda^{(1)}\cdot\vlambda^{(2)})\rhd \vomega
\end{gather} 
\end{property}

\begin{proof}
Let us denote $A = (\vlambda^{(1)}\bigtriangledown \vtheta) \times (\vlambda^{(2)}\rhd\vomega)$
\begin{align}
A_{i,j}= \sum\limits_{k=1}^{n} \lambda_k^{(1)}\theta_{i, k} \lambda_k^{(2)}\omega_{k, j}=& \sum\limits_{k=1}^{n} \theta_{i, k} \lambda^{(2)}_k\lambda^{(1)}_k\omega_{k, j}=(\vtheta\times(\vlambda^{(1)}\cdot\vlambda^{(2)})\rhd\vomega)_{i,j}\\
=& \sum\limits_{k=1}^{n} \lambda^{(2)}_k\lambda^{(1)}_k \theta_{i, k} \omega_{k, j}=((\vlambda^{(1)}\cdot\vlambda^{(2)})\bigtriangledown \vtheta \times \vomega)_{i,j}
\end{align}
\end{proof}

However, please note that we do not have $\vlambda^{(1)}\bigtriangledown (\vlambda^{(2)}\rhd\vomega) = (\vlambda^{(1)}\cdot\vlambda^{(2)})\bigtriangledown \vomega$ in general.

\begin{property}
For all vectors $\vlambda^{(1)} \in \left(\mathbb{R}_{\neq 0}\right)^n$, $\vlambda^{(2)} \in \mathbb{R}^n$, $\vtheta \in \mathbb{R}^{\cdot\times n}$, and $\vomega\in \mathbb{R}^{n\times \cdot}$
\begin{gather}
   (\vlambda^{(1)}\bigtriangledown \vtheta) \times r(\vlambda^{(2)}\rhd\vomega)=\vtheta \times r\left((\vlambda^{(1)}\cdot\vlambda^{(2)})\rhd \vomega\right)
\end{gather}
\end{property}

\begin{proof}
Let us denote $B=(\vlambda_1\bigtriangledown \vtheta) \times r(\vlambda_2\rhd\vomega)$
\begin{align}
B_{i,j}=& \sum\limits_{k=1}^{n} \lambda^{(1)}_k \theta_{i, k} r(\lambda^{(2)}_k\omega_{k, j}) \\
=& \sum\limits_{k=1}^{n}\theta_{i, k} r( \lambda^{(1)}_k \lambda^{(2)}_k \omega_{k, j}) \\
=& (\vtheta \times r((\vlambda^{(1)}\cdot\vlambda^{(2)})\rhd\vomega))_{i,j}
\end{align}
\end{proof}

This property is crucial for scaling symmetries, which are one of its special cases. Indeed, for all vectors $\vlambda \in \left(\mathbb{R}_{>0}\right)^n$,
\begin{equation}
    \label{eq:simple_scaling_sym}
    (\vlambda^{-1}\bigtriangledown \vtheta) \times r(\vlambda\rhd\vomega) = \vtheta \times r(\vomega).
\end{equation}

Finally, we can gather the previous results to extend \Eqref{eq:simple_scaling_sym} to weights and biases in the following property.

\begin{property}
For all vectors $\vlambda \in \left(\mathbb{R}_{>0}\right)^n$, $\vb \in \mathbb{R}^n$, $\vtheta \in \mathbb{R}^{\cdot\times n}$, and $\vomega\in \mathbb{R}^{n\times \cdot}$
    \begin{gather}
   (\vlambda^{-1}\bigtriangledown \vtheta) \times r(\vlambda\rhd(\vb + \vomega)) =  \vtheta \times r(\vb + \vomega)
\end{gather}
\end{property}

\begin{proof}
\begin{align}
(\vlambda^{-1}\bigtriangledown \vtheta) \times r(\vlambda\rhd(\vb+\vomega)) & = (\vlambda^{-1}\bigtriangledown \vtheta) \times r(\vlambda \cdot \vb+\vlambda\rhd\vomega) \\
& =   \vtheta \times r(\vlambda^{-1}\vlambda \cdot \vb + \vlambda^{-1}\vlambda\rhd\vomega) \\
&=  \vtheta \times r(\vb + \vomega)
\end{align}
\end{proof}

This property is provided for the simplest case with only two matrices. However, it can be trivially extended to an arbitrary number of matrices, chaining the different line-wise and column-wise multiplications. It follows that we can extend the definition of scaling symmetries to deeper networks.

Moreover, we can also extend the scaling symmetries to mixes of convolutional and linear networks. To do this, we first extend the definition of the line-wise and column-wise products to convolutional layers through the following definition.
\begin{definition}
\label{def:extension_scaling}
    Let the 4-dimensional tensor $\vomega\in\mathbb{R}^{C_{\mathrm{out}} \times C_{\mathrm{in}}\times k_v \times k_h}$ be the weight of a convolutional layer of input and output channels $C_{\mathrm{in}}$ and $C_{\mathrm{out}}$, and kernel $k_v \times k_h$. Denote $\vlambda\in\mathbb{R}^{C_{\mathrm{in}}}$ and $\vmu\in\mathbb{R}^{C_{\mathrm{out}}}$ We extend the operators $\rhd$ and $\bigtriangledown$ to products between vectors and 4-dimensional tensors and have,
    \begin{align}
        \forall c_{out}\in\llbracket 1, C_{\mathrm{out}}\rrbracket, c_{in}\in\llbracket 1,& C_{\mathrm{in}}\rrbracket, i\in\llbracket 1, n\rrbracket, j\in\llbracket 1,m\rrbracket, \\
        & (\vlambda \bigtriangledown \vomega)_{c_{out}, c_{in}, i,j} = \lambda_{c_{in}}\omega_{c_{out},c_{in},i,j}, \\
        & \text{and} \ (\vmu \rhd \vomega)_{c_{out}, c_{in},i,j} = \mu_{c_{out}}\omega_{c_{out},c_{in},i,j}.
    \end{align}
\end{definition}
With this definition, we keep the previous properties and enable chaining transformations on both convolutional and fully connected layers.

\paragraph{Equivalence to standard linear algebra}
\label{par:equiv_linear_algebra}
We propose the line-wise ($\bigtriangledown$) and column-wise ($\rhd$) notations for their intuitiveness. However, when applied to fully connected layers, they are equivalent to more common notations from linear algebra. Indeed, sticking to $\vomega \in \mathbb{R}^{n\times m}$, $\vlambda \in \mathbb{R}^m$, and $\vmu\in \mathbb{R}^n$, with $n, m \geq 1$, we have that 
\begin{equation}
    \vlambda \bigtriangledown \vomega = \mathrm{diag}(\vlambda)\times\vomega \ \text{and} \ \vmu \rhd \vomega = \omega\times\mathrm{diag}(\vmu).
\end{equation}

Furthermore, we can also write, with $\circ$ the Hadamard (elementwise) product between matrices,
\begin{equation}
    \vlambda\bigtriangledown\left(\vmu\rhd\vomega\right) = (\vlambda^\intercal\times\vmu)\circ\omega.
\end{equation}

These equivalences are important for the implementation of the log-log convex problem to minimize the L2 regularization term on the space of the scaling symmetries (see Section \ref{sec:minimum_mass_pbm}).

\subsubsection{Permutation symmetries}
\begin{definition}
 We define $P_n$ as the set of permutations of vectors of size $n$. $P_n$ contains $\left|P_n\right|=n!$ elements. The elements of $P_n$ are doubly-stochastic binary matrices from $\{0,1\}^n$. Finally, we have that, for all $\vpi_1\in P_n$, $\vpi_1 \times \vpi_1^\intercal=\mathbb{1}_n$.
\end{definition}

From this definition and property, we can directly derive the following result:
\begin{property}
For all $\vtheta\in\mathbb{R}^{\cdot\times m}$, $\vomega \in\mathbb{R}^{m\times n}$, and permutation matrix $\vpi \in P_m$,
\begin{equation}
     \forall\vx\in\mathbb{R}^n, \ \vtheta \vpi^\intercal \times r\left(\vpi \times \vomega\vx\right) = \vtheta \times r(\vomega\vx).
\end{equation}
\end{property}
\begin{proof}
    Let us take $m,n\in\mathbb{N}$, $\vomega\in\mathbb{R}^{m\times n}, \vb\in\mathbb{R}^{m}, \vx\in\mathbb{R}$, and $\vpi\in P_m$ a permutation matrix corresponding to the bijective mapping $\sigma$ in $\llbracket 1, m\rrbracket$. We have that,
\begin{equation}
    \left[\vpi(\vb + \vomega)\times\vx\right]_i = \vb_{\sigma(i)} + \sum\limits_{k=1}^{n} \omega_{\sigma(i),k} x_k.
\end{equation}
Hence, 
\begin{align}
    \left[\vpi^\intercal \times r(\vpi(\vomega)\times\vx)\right]_i &= \vb_{\sigma^{-1}(\sigma(i))} + \sum\limits_{k=1}^{n}r\left(\omega_{\sigma^{-1}(\sigma(i)),k}\cdot x_k\right) \\
    &= \left[r(\vb+\vomega\times\vx)\right]_i,
\end{align}
and it comes directly that,
\begin{equation}
    \vtheta \vpi^\intercal \times r\left(\vpi \times \vomega\vx\right) = \vtheta \times r(\vomega\vx).
\end{equation}
\end{proof}

This property is provided for the simplest case with only two matrices. However, it can be trivially extended to an arbitrary number of matrices, chaining the different permutations. As for the scaling symmetries, it follows that we can extend the definition of permutation symmetries to deeper networks.

Moreover, as for scaling symmetries in definition~\ref{def:extension_scaling}, we can extend permutation operations and symmetries to convolutional layers and their 4-dimensional tensors. To do this, we consider that the permutation matrices act on the output and input channels and permute the whole kernels.

\subsubsection{Softmax additive symmetry}
For the sake of completeness, we also recall the additive softmax symmetries in this section. We left this symmetry apart in our analysis as it involves, at most, one degree of freedom.

\begin{definition}
    We recall that the softmax function $\sigma$ is defined by the following equation, given the logits $\va\in\mathbb{R}^n$:
\begin{equation}
    \sigma(\va)_i = \frac{e^{a_i}}{\sum\limits_{j=1}^n e^{a_j}}
\end{equation}

\end{definition}

\begin{property}
    For all $\va,\vb\in\mathbb{R}^n$, $\lambda\in\mathbb{R}$, and + the point-wise sum when applied between vectors and scalars,
\begin{equation}
    \sigma(\va + \vb + \lambda) = \sigma(\va + \vb)
\end{equation}
\end{property}
\begin{proof}
Let us denote $S = \sigma(\va + \vb + \lambda)$,
    \begin{align}
        S_i =& \frac{e^{a_i +  b_i + \lambda}}{\sum\limits_{j=1}^n e^{a_j+ b_j + \lambda}} \\
        S_i =& \frac{e^\lambda e^{a_i +  b_i}}{ e^\lambda \sum\limits_{j=1}^n e^{a_j+ b_j}} \\
         S_i=& (\sigma(\va + \vb))_i
    \end{align}
\end{proof}

\subsection{Removing symmetries a posteriori}
\label{sec:alg_sym_removal}
\paragraph{Removing permutation symmetries}
Similar to \cite{pourzanjani2017improving}, we propose to remove permutation symmetries by ordering neurons according to the value of their first corresponding parameter. In dense layers, this first parameter is the weight of the first input neuron, and in convolutional layers, it corresponds to the top-left weight of the kernel of the first channel. This solution is more general than that of \cite{pourzanjani2017improving} since neural network layers do not always have biases. Specifically, practitioners often remove biases from a layer when it is followed by a batch normalization~\citep{ioffe2015batch} to minimize the number of computations of the inference and backpropagation steps.

\paragraph{Removing scaling symmetries}
In this paper, we propose two different types of algorithms to remove the scaling symmetries of a neural network \emph{a posteriori}. The first algorithm was proposed by \cite{neyshabur2015path} and normalizes the norm of the weights of each neuron in all the layers except the last. This is the algorithm that we use in Table \ref{tab:main-table} We also provide an implementation of this algorithm to scale the standard deviation of the weights to one. Furthermore, we define in Section \ref{sec:minimum_mass_pbm} a new problem that enables the deletion of scaling symmetries. We show the existence and uniqueness of the \emph{min-mass} problem. As such, it can also be used to remove the symmetries of any network. While scalable to current architectures, the convex optimization is slower than the normalization of the weights, and its behavior may be slightly more difficult to grasp. This is why we stick to the former algorithm when many computations are needed.

\paragraph{Removing softmax additive symmetries}
To remove this type of symmetry, it is sufficient to scale the sum of the biases of the last layer to some constant, for instance, to 1.

\subsection{Permutation-equivariance of the gradient of the loss}
Before coming to the equivariance of the training operator, we start by proving the following lemma.

\begin{lemma}
\label{lem:grad_equi}
    Let $f_{\theta}$ be a neural network trained with a loss $\mathcal{L}(\theta)$, the gradient of the loss $\nabla\mathcal{L}$ is permutation equivariant, i.e.,
    \begin{equation}
        \forall\Pi\in\mathbb{\Pi}, \ \nabla\mathcal{L}(\mathcal{T}_p(\theta, \Pi)) = \mathcal{T}_p(\nabla\mathcal{L}(\theta), \Pi).
    \end{equation}
\end{lemma}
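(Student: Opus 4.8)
The plan is to prove the equivariance blockwise, by tracking how every forward and backward quantity in the backpropagation recursion of Property~\ref{prop:backprop} transforms under $\mathcal{T}_p$. Write the permutation set as $\Pi = \{\vpi_l\}_{l}$, where $\vpi_l\in P_{m_l}$ permutes the hidden units of layer $l$, with $\vpi_0$ and $\vpi_L$ the identity since the input and output units are fixed. Recalling the two-layer rule $\vtheta\vpi^\intercal \times r(\vpi\vomega\vx)$, applying $\mathcal{T}_p(\theta,\Pi)$ amounts to replacing $W^{[l]}$ by $\vpi_l W^{[l]}\vpi_{l-1}^\intercal$ and $b^{[l]}$ by $\vpi_l b^{[l]}$. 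The statement $\nabla\mathcal{L}(\mathcal{T}_p(\theta,\Pi)) = \mathcal{T}_p(\nabla\mathcal{L}(\theta),\Pi)$ then reduces to showing that the gradients $\nabla_{W^{[l]}}\mathcal{L}$ and $\nabla_{b^{[l]}}\mathcal{L}$ transform by exactly these same rules.

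First I would record three elementary facts about how a permutation matrix $\vpi$ commutes with the coordinatewise operations in play: since $r$ acts entrywise, $r(\vpi\vz)=\vpi\, r(\vz)$, and the derivative factor $\nabla_{\vz}\va = r'(\vz)$ likewise satisfies $r'(\vpi\vz)=\vpi\, r'(\vz)$; for the Hadamard product, $(\vpi\vu)\circ(\vpi\vv)=\vpi(\vu\circ\vv)$. I would also repeatedly use $\vpi^\intercal\vpi=\mathbb{1}$.

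Next, a forward induction on $l$ shows the pre-activations and activations are equivariant. With $\va^{[0]}=\vx$ fixed (as $\vpi_0=\mathbb{1}$), and assuming $\va^{[l-1]}$ maps to $\vpi_{l-1}\va^{[l-1]}$, the transformed pre-activation is $(\vpi_l W^{[l]}\vpi_{l-1}^\intercal)(\vpi_{l-1}\va^{[l-1]}) + \vpi_l b^{[l]} = \vpi_l\vz^{[l]}$, after which $\va^{[l]}=r(\vz^{[l]})$ maps to $\vpi_l\va^{[l]}$ by the commutation fact. In particular, $\vpi_L=\mathbb{1}$ leaves $\vz^{[L]}$ unchanged, so the loss value is unchanged and $\vdelta^{[L]}=\nabla_{\vz^{[L]}}\mathcal{L}$ is unchanged, which is the base case for the backward pass.

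Then a backward induction on $l$ establishes $\vdelta^{[l]}\mapsto\vpi_l\vdelta^{[l]}$. Assuming $\vdelta^{[l+1]}\mapsto\vpi_{l+1}\vdelta^{[l+1]}$, the term $W^{[l+1]\intercal}\vdelta^{[l+1]}$ becomes $(\vpi_{l+1}W^{[l+1]}\vpi_l^\intercal)^\intercal(\vpi_{l+1}\vdelta^{[l+1]}) = \vpi_l W^{[l+1]\intercal}\vdelta^{[l+1]}$ after cancelling $\vpi_{l+1}^\intercal\vpi_{l+1}=\mathbb{1}$, while the factor $\nabla_{\vz^{[l]}}\va^{[l]}$ maps to $\vpi_l\nabla_{\vz^{[l]}}\va^{[l]}$ by the derivative commutation fact; combining via $(\vpi_l\vu)\circ(\vpi_l\vv)=\vpi_l(\vu\circ\vv)$ yields $\vpi_l\vdelta^{[l]}$. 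Substituting into the gradient formulas finishes the proof: $\nabla_{W^{[l]}}\mathcal{L}=\vdelta^{[l]}\va^{[l-1]\intercal}$ becomes $\vpi_l\vdelta^{[l]}(\vpi_{l-1}\va^{[l-1]})^\intercal = \vpi_l(\nabla_{W^{[l]}}\mathcal{L})\vpi_{l-1}^\intercal$, and $\nabla_{b^{[l]}}\mathcal{L}=\vdelta^{[l]}$ becomes $\vpi_l\nabla_{b^{[l]}}\mathcal{L}$, which is precisely $\mathcal{T}_p$ acting on the gradient. The main obstacle is purely bookkeeping: pinning down the per-layer convention so that the incoming permutation $\vpi_{l-1}$ and the outgoing permutation $\vpi_l$ land on the correct sides of each weight matrix, and handling the boundary layers $l=0,L$; once the three commutation identities are in place, both inductions are mechanical.
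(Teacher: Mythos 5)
Your proof is correct and follows essentially the same route as the paper: both arguments rest on the backpropagation recursion of Property~\ref{prop:backprop} together with the fact that permutation matrices commute with the elementwise operations ($r$, $r'$, and the Hadamard product), so that $\va^{[l]}\mapsto\vpi_l\va^{[l]}$ and $\vdelta^{[l]}\mapsto\vpi_l\vdelta^{[l]}$, from which the gradient transformation follows. If anything, your version is the more complete one, since the paper only writes out the case of a single non-identity permutation at one layer and asserts the extension to a general $\Pi$, whereas you carry out the forward and backward inductions for the full permutation set explicitly.
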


Please be aware that the notation $ \mathcal{T}_p(\nabla\mathcal{L}(\theta), \Pi)$ signifies that we apply a permutation operation to the gradient of the loss function $\mathcal{L}$ with respect to the weight $\theta$.
\begin{proof}
Let $f_{\theta}$ be a neural network trained with a loss $\mathcal{L}(\theta)$ and $\Pi\in\mathbb{\Pi}$. We prove the case with $\Pi$ containing only Identity matrices except for the $i$-th matrix, that we denote $\vpi$. 

From property \ref{prop:backprop}, we have that $\forall l\in\llbracket1,L\rrbracket, \nabla_{W^{[l]}}\mathcal{L}(\theta) = \vdelta^{[l]}(\theta)\va^{[l-1]^\intercal}(\theta)$ with $\vdelta^{[l]}(\theta) = \nabla_{\vz^{[l]}}\mathcal{L}(\theta)$ and $\va^{[l-1]}(\theta)$ the activation after the layer $l-1$. We add the notation of dependence of $\vdelta^{[l]}$ and $\va^{[l]}$ in $\theta$ to express that they correspond to the gradient and the activation of the network of weights $\theta$. We can adapt this result to the layer $i$ and $\mathcal{T}(\theta, \Pi)$:
\begin{align}
\nabla_{\vpi W^{[i]}}\mathcal{L}(\mathcal{T}_p(\theta, \Pi)) & = \vdelta^{[i]}(\mathcal{T}_p(\theta, \Pi))\va^{[i-1]^\intercal}(\mathcal{T}_p(\theta, \Pi)) \\
& = \left(\left(W^{[i+1]} \vpi^\intercal\right)^\intercal\vdelta^{[i+1]}(\mathcal{T}_p(\theta, \Pi)) \circ \nabla_{\vpi\vz^{[l]}} \vpi\va^{[l]}(\theta) \right)\va^{[i-1]^\intercal}(\mathcal{T}_p(\theta, \Pi))\\
& = \left(\vpi W^{[i+1]^\intercal}\vdelta^{[i+1]}(T(\theta, \Pi))\circ \vpi\nabla_{\vz^{[l]}} \va^{[l]}(\theta)\right)\va^{[i-1]^\intercal}(\mathcal{T}_p(\theta, \Pi)) \\
& = \left(\vpi W^{[i+1]^\intercal}\vdelta^{[i+1]}(\theta)\circ\vpi\nabla_{\vz^{[l]}} \va^{[l]}(\theta)\right)\va^{[i-1]^\intercal}(\theta) \\
& = \vpi \vdelta^{[l]}(\theta)\va^{[i-1]^\intercal}(\theta) \\
&= \vpi \nabla_{W^{[i]}}\mathcal{L}(\theta)
\end{align}
This proof can be extended backward to any $\Pi\in\mathbb{\Pi}$ as the action of $\pi^\intercal$ and of $\pi$ cancel out for all the following weights.
\end{proof}

We proved Lemma \ref{lem:grad_equi} with an MLP, but this result extends to convolutional layers. To do this, we need to define that the permutation matrices apply on the channels of the weight tensors.

\subsection{Permutation-equivariance of stochastic gradient descent}

To simplify the notations, we use neural networks and weights indifferently as arguments for the optimization and symmetry operators $\star$ and $\mathcal{T}$. For instance, we will simplify $\star(f_{\theta})$ by $\star(\theta)$, implicitly assuming the architecture of $f_{\theta}$.

\begin{proposition}
\label{thm:app_commutativity}
    Let $f_{\theta_0}$ be a randomly initialized neural network and $\star_s$ the stochastic gradient descent operator applying $s$ times the update rule of learning rate $\gamma_s$. For all steps $s\in\mathbb{N}$, $\star_s$ is permutation equivariant. In other words,
    \begin{equation}
        \forall s\in\mathbb{N}, \forall\Pi\in\mathbb{\Pi}, \ \star_s(\mathcal{T}_p(\theta, \Pi)) = \mathcal{T}_p(\star_s(\theta), \Pi).
    \end{equation}
\end{proposition}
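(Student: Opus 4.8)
The plan is to prove Proposition~\ref{thm:app_commutativity} by induction on the number of steps $s$, using Lemma~\ref{lem:grad_equi} as the engine that handles each individual gradient update. The base case $s=0$ is immediate: $\star_0$ is the identity map, so $\star_0(\mathcal{T}_p(\theta,\Pi)) = \mathcal{T}_p(\theta,\Pi) = \mathcal{T}_p(\star_0(\theta),\Pi)$ trivially. The inductive step is where the real content lies, but the key observation is that a single SGD update is nothing more than subtracting a scaled gradient, and both the "current weights" part and the "gradient" part must transform compatibly under permutation.

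For the inductive step, I would write one step of the update rule explicitly. Assuming the result holds after $s$ steps, I want to show it holds after step $s+1$. Let $\theta_s = \star_s(\theta)$ denote the weights after $s$ steps starting from $\theta$, and correspondingly $\theta_s^\Pi = \star_s(\mathcal{T}_p(\theta,\Pi))$. By the inductive hypothesis, $\theta_s^\Pi = \mathcal{T}_p(\theta_s, \Pi)$. The update rule gives
\begin{equation}
    \star_{s+1}(\mathcal{T}_p(\theta,\Pi)) = \theta_s^\Pi - \gamma_s \nabla\mathcal{L}(\theta_s^\Pi) = \mathcal{T}_p(\theta_s,\Pi) - \gamma_s \nabla\mathcal{L}(\mathcal{T}_p(\theta_s,\Pi)).
\end{equation}
Now I apply Lemma~\ref{lem:grad_equi} to the gradient term, replacing $\nabla\mathcal{L}(\mathcal{T}_p(\theta_s,\Pi))$ by $\mathcal{T}_p(\nabla\mathcal{L}(\theta_s),\Pi)$. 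The crucial remaining ingredient is that $\mathcal{T}_p$ acts \emph{linearly} on the weights — each layer's weights are hit by fixed permutation matrices $\vpi$ and $\vpi^\intercal$ — so $\mathcal{T}_p$ commutes with the affine combination, i.e. $\mathcal{T}_p(\theta_s,\Pi) - \gamma_s\mathcal{T}_p(\nabla\mathcal{L}(\theta_s),\Pi) = \mathcal{T}_p(\theta_s - \gamma_s\nabla\mathcal{L}(\theta_s),\Pi)$. The right-hand side is exactly $\mathcal{T}_p(\star_{s+1}(\theta),\Pi)$, closing the induction.

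The main obstacle — and the point I would be most careful about — is the consistency of the stochasticity across the two trajectories. SGD computes gradients on randomly sampled minibatches, so for the statement to hold pathwise (rather than merely in distribution) I must assume the same sequence of minibatches is used for $\star_s(\theta)$ and $\star_s(\mathcal{T}_p(\theta,\Pi))$; since permutation symmetries act only on the weights and leave the data untouched, Lemma~\ref{lem:grad_equi} applies verbatim to the minibatch loss $\mathcal{L}$ for each sampled batch, so this assumption is harmless and I would state it explicitly. A secondary subtlety is that the linearity argument for commuting $\mathcal{T}_p$ with the subtraction must respect the per-layer structure: the permutation operator applies $\vpi^{[l]}$ and $\vpi^{[l+1]\intercal}$ to layer $l$, and both the weights $\theta_s$ and the gradient $\nabla\mathcal{L}(\theta_s)$ are permuted by the \emph{same} $\Pi$ with the same layer-wise matrices, which is precisely what Lemma~\ref{lem:grad_equi} guarantees. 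I would also remark that extending from momentum-free SGD to momentum or weight decay is routine, since the momentum buffer and the weight-decay term are themselves permutation-equivariant functions of the equivariant gradients and weights.
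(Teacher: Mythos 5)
Your proof is correct and follows essentially the same route as the paper's: induction on the step count, with the trivial base case, the gradient-equivariance lemma, and the linearity of $\mathcal{T}_p$ closing the inductive step. Your explicit remarks on minibatch consistency and the extension to momentum mirror the caveats the paper states after its own proof.
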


We provide the proof below for multi-layer perceptrons, but it extends to convolutional neural networks.

\begin{proof}
Let $\Pi\in\mathbb{\Pi}$ be a permutation set applied on a neural network $f_\theta$ trained with an objective $\mathcal{L}$ of gradient $\nabla\mathcal{L}$. Let us define $P(s)$ as follows: 

\begin{equation}
    P(s) = "\mathcal{T}_p \left(\star_s \left(\theta_0\right), \Pi \right) = \star_s \left(\mathcal{T}_p \left(\theta_0, \Pi \right)\right)" %
\end{equation}

We give a proof that $P(s)$ is verified for all $s\in\mathbb{N}$ by induction on $s$.

\emph{Base case:} $P(0)$ is trivially true as $\star_0 \left(f_{\theta_0}\right) = f_{\theta_0}$, hence 
\begin{equation}
\star_0 \left(\mathcal{T}_p \left(\theta_0, \Pi \right)\right) = \mathcal{T}_p \left(\theta_0, \Pi \right) = \mathcal{T}_p \left(\star_0 \left(\theta_0\right), \Pi \right).
\end{equation}

\emph{Induction step:} Assume the induction hypothesis that for a particular $k$, the single case $s = k$ holds, meaning $P(k)$ is true:
\begin{equation}
    \mathcal{T}_p \left(\star_k \left(\theta_0\right), \Pi \right) = \star_k \left(\mathcal{T}_p \left(\theta_0, \Pi \right)\right)
\end{equation}

Let us start with $\mathcal{T}_p(\star_{k+1}(\theta_{0}), \Pi)$. We have that 
\begin{align}
    \mathcal{T}_p(\star_{k+1}(\theta_{0}), \Pi) = & \mathcal{T}_p(\theta_{k+1}, \Pi) \\
    = & \mathcal{T}_p(\theta_k - \gamma_s\nabla\mathcal{L}(\theta_k), \Pi) \\
    = & \mathcal{T}_p(\star_k(\theta_0), \Pi) - \gamma_s\mathcal{T}_p(\nabla\mathcal{L}(\star_k(\theta_0)), \Pi) \\
    = & \star_k(\mathcal{T}_p(\theta_0), \Pi) - \gamma_s\nabla\mathcal{L}(\star_k(\mathcal{T}_p(\theta_0, \Pi)))  \label{proofeq:equi} \\
    = & \star_{k+1}(\mathcal{T}_p(\theta_0, \Pi)), \label{proofeq:end}
\end{align}

Equation \ref{proofeq:equi} uses the permutation-equivariance of both the gradient (see Lemma~\ref{lem:grad_equi}) and the training operator until step $k$. Furthermore, equation~\ref{proofeq:end} is exactly $P(k+1)$, establishing the induction step.

\emph{Conclusion:} Since both the base case and the induction step have been proven, by induction, the statement $P(s)$ holds for every natural number $s$.
\end{proof}

We provide the proof of Proposition \ref{thm:app_commutativity} with SGD for simplicity. However, it intuitively extends to SGD with momentum since the momentum is also permutation-equivariant and updated in a permutation-equivariant way. Furthermore, it also generalizes to (mini-)batch stochastic gradient descent, provided that the batches contain the same images. %

While this also extends to the Adam optimizer~\citep{kingma2014adam}, this is most likely not true for new meta-learned optimizers such as VeLO~\citep{metz2022velo}.

\subsection{Equiprobability of the permutations}

We build upon the results from the previous sections to propose the following proposition.

\begin{proposition}
\label{th:permutations_equiprobable}
Let $f_\theta$ be a neural network with initial weights independently and layer-wise identically distributed. The probability of converging towards any symmetrically permuted network $\mathcal{T}_p(f_\theta, \Pi)$ given the dataset and training hyperparameters does not depend on the permutation set $\Pi$. In other words,
\begin{equation}
\label{eq:permutation_equiprobable}
       p(\Theta_s=\mathcal{T}_p(\theta_s, \Pi) | \mathcal{D}) = p(\Theta_s=\theta_s | \mathcal{D}).
\end{equation}
\end{proposition}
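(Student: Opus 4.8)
The plan is to prove \Eqref{eq:permutation_equiprobable} by connecting the probability of converging to a given weight configuration back to the distribution of the random initialization, which is the only source of randomness that distinguishes permuted networks. The key observation is that under stochastic gradient descent, the final weights $\theta_s$ are a deterministic function of the initial weights $\theta_0$ (once the data $\mathcal{D}$ and the sequence of mini-batches are fixed). Formally, $\theta_s = \star_s(\theta_0)$. Hence the density of $\Theta_s$ is the pushforward of the density of $\Theta_0$ under the map $\star_s$, and comparing $p(\Theta_s = \mathcal{T}_p(\theta_s,\Pi)\mid\mathcal{D})$ with $p(\Theta_s = \theta_s\mid\mathcal{D})$ reduces to comparing the initialization densities at corresponding preimages.

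First I would make the change of variables explicit. Since $\star_s$ is permutation-equivariant by Proposition~\ref{thm:app_commutativity}, we have $\mathcal{T}_p(\star_s(\theta_0),\Pi) = \star_s(\mathcal{T}_p(\theta_0,\Pi))$ for every initialization $\theta_0$. This means the map $\theta_0 \mapsto \theta_s$ intertwines with the permutation action: permuting the output of training is the same as permuting the input and then training. Consequently, the preimage under $\star_s$ of a permuted endpoint $\mathcal{T}_p(\theta_s,\Pi)$ is exactly the permuted preimage $\mathcal{T}_p(\theta_0,\Pi)$ of the original endpoint. So I would write the posterior density by marginalizing over initializations, expressing $p(\Theta_s=\theta_s\mid\mathcal{D})$ as an integral (or sum over trajectories) of the initialization density $p(\Theta_0=\theta_0)$ weighted by the probability of the batch sequence, restricted to those $\theta_0$ with $\star_s(\theta_0)=\theta_s$.

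The next step is to invoke the distributional assumption on the initialization: the weights are independently and layer-wise identically distributed. A permutation $\Pi$ acts within each layer by permuting neurons (and correspondingly the rows/columns of adjacent weight matrices), and because the per-layer initialization law is exchangeable across neurons, the density is invariant under this relabeling, i.e.\ $p(\Theta_0 = \mathcal{T}_p(\theta_0,\Pi)) = p(\Theta_0 = \theta_0)$. Combining this initialization-invariance with the equivariance-based change of variables, and noting that $\mathcal{T}_p$ is a measure-preserving bijection (it is linear with Jacobian determinant of absolute value one, being built from permutation matrices), the two posterior densities agree term by term, yielding \Eqref{eq:permutation_equiprobable}.

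The main obstacle I anticipate is handling the stochasticity of the mini-batch sampling rigorously, rather than the permutation bookkeeping, which is essentially routine given the two ingredients above. One must argue that the distribution over batch sequences is itself independent of the permutation applied to the weights — which it is, since batches are drawn from the data and do not depend on $\theta_0$ — so that averaging over trajectories preserves the equality. A secondary subtlety is the measure-theoretic meaning of the pointwise density identity when $\star_s$ is not injective (several initializations may converge to the same point); this is resolved by working with the pushforward measure and the exchangeability of the base measure, rather than a naive change-of-variables formula, but it should be flagged explicitly to keep the argument honest.
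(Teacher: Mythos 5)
Your proposal is correct and follows essentially the same route as the paper's proof: both reduce the claim to the permutation-equivariance of the training operator (Proposition~\ref{thm:app_commutativity}), which maps the preimage of $\theta_s$ under training onto the permuted preimage of $\mathcal{T}_p(\theta_s,\Pi)$, combine this with the permutation-invariance of the layer-wise i.i.d.\ initialization law, and marginalize over the remaining sources of training stochasticity. The only difference is cosmetic: you phrase the argument as a pushforward of the initialization measure, whereas the paper conditions on the event $\Theta_0\in\theta_{0\rightharpoonup s}$ and treats the deterministic and non-deterministic cases separately, but the content is the same.
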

\begin{proof}
For simplicity, we assume an optimization scheme with early stopping and final weights $\theta_s$ at the end of step $s$. Given $\theta_s$, we can denote $\theta_{0\rightharpoonup s}$ the space of initializations with non-zero probability to converge to $\theta_s$ after $s$ steps.

Let there be a permutation set $\Pi$ leaving $f_{\theta_s}$ invariant. We want to prove \Eqref{eq:permutation_equiprobable}, i.e., that the probability to converge to $\mathcal{T}_p(\Theta_s=\theta_s, \Pi)$ given $\mathcal{D}$ is the same as for $\theta_s$ for any permutation set $\Pi\in\mathbb{\Pi}$.

We can inject information about the potential initialization points, defining $\theta_{0\rightharpoonup s} = \{\theta | \star_s[\theta] = \theta_s\}$ as the pre-image of $\theta_s$ by the optimization procedure with $s$ steps. It comes that
\begin{align}
\label{eqproof:cond}
    p(\Theta_s =\theta_s,\mathcal{D})=p(\Theta_0 \in \theta_{0\rightharpoonup s}, \Theta_s = \theta_s,\mathcal{D}). 
\end{align}
Indeed, as we use the notation $\Theta_0 \in \theta_{0\rightharpoonup s}$ for
\begin{equation}
    p(\Theta_0 \in \theta_{0\rightharpoonup s}) = \hspace{-1em} \int\limits_{\theta \in \theta_{0\rightharpoonup s}} \hspace{-1em} p(\Theta_0 =\theta)d\theta,
\end{equation}
we can get, with $\Omega$ the set of possible weights,
\begin{align}
    \int\limits_{\theta \in \Omega} \hspace{-0.4em} p(\Theta_0 =\theta | \Theta_s = \theta_s,\mathcal{D})d\theta  = \hspace{-1em}\int\limits_{\theta \in \theta_{0\rightharpoonup s}} \hspace{-1em} p(\Theta_0 =\theta | \Theta_s = \theta_s,\mathcal{D})d\theta  +  \hspace{-1.4em}  \int\limits_{\theta \in \Omega \setminus \theta_{0\rightharpoonup s}} \hspace{-1.4em} p(\Theta_0 =\theta | \Theta_s = \theta_s,\mathcal{D})d\theta \\
\end{align}
The left term is equal to 1, and so is the middle term as, by definition of $\theta_{0\rightharpoonup s}$, it is impossible to have initialized the weights in $\Omega \setminus \theta_{0\rightharpoonup s}$ and converge to $\theta_s$ given the dataset $\mathcal{D}$. Since $p(\Theta_0 \in \theta_{0\rightharpoonup s}, \Theta_s = \theta_s,\mathcal{D}) = p(\Theta_0 \in \theta_{0\rightharpoonup s}|\Theta_s = \theta_s,\mathcal{D})p(\Theta_s = \theta_s,\mathcal{D})$, we have \Eqref{eqproof:cond}.

Furthermore, the definition of conditional probabilities implies that the probability of converging to $\theta_s$ given $\mathcal{D}$ can be rewritten as
\begin{align}
    p(\Theta_s=\theta_s | \mathcal{D}) &= \frac{p(\Theta_0\in\theta_{0\rightharpoonup s}, \Theta_s=\theta_s, \mathcal{D})\cdot p(\Theta_0 \in \theta_{0\rightharpoonup s})}{p(\mathcal{D})\cdot p(\Theta_0 \in \theta_{0\rightharpoonup s})} \\
    &= \frac{p(\Theta_0 \in \theta_{0\rightharpoonup s}, \Theta_s=\theta_s, \mathcal{D})\cdot p(\Theta_0 \in \theta_{0\rightharpoonup s})}{p(\Theta_0 \in \theta_{0\rightharpoonup s}, \mathcal{D})} \quad [\text{independence of} \ \Theta_0 \ \text{and} \ \mathcal{D}]\\
    &= p(\Theta_s=\theta_s | \Theta_0 \in \theta_{0\rightharpoonup s},\mathcal{D}) p(\Theta_0 \in \theta_{0\rightharpoonup s}).
    \label{eq:trained_init}
\end{align}

We can also apply \Eqref{eq:trained_init} for a permuted network. Let there be a permutation set $\Pi$; \Eqref{eq:trained_init} is also valid for the permuted network $\mathcal{T}_p(\theta_s, \Pi)$ with its pre-image the set $\mathcal{T}_p(\theta_{0\rightharpoonup s}, \Pi)$ (defined as $\{\mathcal{T}_p(\theta, \Pi)\}_{\theta \in \theta_{0\rightharpoonup s}}$). Indeed, Proposition~\ref{thm:app_commutativity} exactly proves that training a network starting from the permuted weights $\mathcal{T}_p(\theta_0, \Pi)$ converges towards the permutation of the original weights  $\mathcal{T}_p(\theta_s, \Pi)$.
\begin{equation}
    \label{eq:core}
    p(\Theta_s=\mathcal{T}_p(\theta_s, \Pi) | \mathcal{D}) = p(\Theta_s=\mathcal{T}_p(\theta_s, \Pi) | \Theta_0\in\mathcal{T}_p(\theta_{0\rightharpoonup s}, \Pi),\mathcal{D}) p(\Theta_0\in\mathcal{T}_p(\theta_{0\rightharpoonup s}, \Pi))
\end{equation}

\emph{Deterministic Training:} Suppose the optimization algorithm is fully deterministic, given the initialization state. In practice, the dataloader seed would be set to some value, and the backpropagation kernels would be chosen as deterministic. In that case, the convergence to $\theta_s$ is fully determined by the initialization $\Theta_0$ and the dataset $\mathcal{D}$, i.e., 
\begin{equation}
    \label{eq:cond_deterministic}
    p(\Theta_s=\theta_s | \Theta_0 \in \theta_{0\rightharpoonup s},\mathcal{D}) =1
\end{equation}

\emph{Non-deterministic Training:} In the non-deterministic setting,  corresponding in practice to the order of the inputs in the batches as well as the stochasticity of backpropagation algorithms. We can marginalize the (eventually multivariate) sources of stochasticity $\xi\in \Xi$ to regain a deterministic relationship.

\begin{align}
\label{eq:cond_nondeterministic}
p(\Theta_s=\theta_s| \Theta_0\in\theta_{0\rightharpoonup s},\mathcal{D})&=\int\limits_{\xi\in\Xi} p(\Theta_s=\theta_s | \Theta_0 \in \theta_{0\rightharpoonup s}, \xi, \mathcal{D}) d\xi \\
&=\int\limits_{\xi\in\Xi} p(\Theta_s=\mathcal{T}_p(\theta_s, \Pi) | \Theta_0\in\mathcal{T}_p(\theta_{0\rightharpoonup s}, \Pi), \xi, \mathcal{D})d\xi\\
&= p(\Theta_s=\mathcal{T}_p(\theta_s, \Pi) | \Theta_0\in\mathcal{T}_p(\theta_{0\rightharpoonup s}, \Pi),\mathcal{D})
\end{align}

In the case of deterministic training, the left term of \Eqref{eq:core} is equal to 1 from \Eqref{eq:cond_deterministic} with the permuted weights $\mathcal{T}_p(\theta_0, \Pi)$ and $\mathcal{T}_p(\theta_s, \Pi)$. 

For the right term, $p(\Theta_0\in\mathcal{T}_p(\theta_{0\rightharpoonup s}, \Pi))=p(\Theta_0\in\theta_{0\rightharpoonup s})$, since the initial weights are layer-wise identically distributed and the permutations act separately inside the layers.

Finally, we have
\begin{align}
    p(\Theta_s=\mathcal{T}_p(\theta_s, \Pi) | \mathcal{D}) = p(\Theta_s=\theta_s | \mathcal{D}), 
\end{align}

which is Proposition \ref{th:permutations_equiprobable}.
\end{proof}

\subsection{Theoretical insights on the Bayesian posterior}

We start with the following proposition, independent from Proposition~\ref{prop:mixture} but that we still deem interesting for the reader.

\begin{proposition}
    Let $f_\vomega$ be a neural network, $\vx$ an input vector, and $y$ a scalar. We also denote $\Omega_>$ the space of sorted weights, and $\mathbb{\Pi}$ remains the set of possible permutations for $f_\vomega$. We have
    \begin{equation}
         p(y \mid \vx, \mathcal{D})=|\mathbb{\Pi}|\int\limits_{\vomega\in\Omega_>} p(y \mid \vx,\vomega) p(\vomega \mid \mathcal{D}) d\vomega.
    \end{equation}
\end{proposition}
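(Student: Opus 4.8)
The plan is to start from the marginalization formula \Eqref{eq:marginalization} and to exploit that the full weight space $\Omega$ decomposes, up to a set of measure zero, into $|\mathbb{\Pi}|$ permuted copies of the sorted space $\Omega_>$. Concretely, the map $(\vomega', \Pi) \mapsto \mathcal{T}_p(\vomega', \Pi)$ from $\Omega_> \times \mathbb{\Pi}$ into $\Omega$ is a bijection onto $\Omega$ minus the ``tie set'' of configurations where two neurons share the same sorting key: a generic $\vomega$ has a unique permutation sorting it, hence a unique sorted representative $\vomega' \in \Omega_>$ and a unique $\Pi$ with $\vomega = \mathcal{T}_p(\vomega', \Pi)$. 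Since the tie set is cut out by codimension-one equalities, it has Lebesgue measure zero and does not contribute to the integral. This yields the essentially disjoint decomposition $\Omega = \bigcup_{\Pi \in \mathbb{\Pi}} \mathcal{T}_p(\Omega_>, \Pi)$, which lets me rewrite the integral over $\Omega$ as a finite sum of integrals over the permuted sorted regions.

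First I would split the integral accordingly and perform, in each summand, the change of variables $\vomega = \mathcal{T}_p(\vomega', \Pi)$ with $\vomega' \in \Omega_>$. Because $\mathcal{T}_p(\cdot, \Pi)$ acts by permuting neurons, i.e., by permuting the entries of the flattened weight vector, it is a linear map whose matrix is a (signed) permutation matrix of determinant $\pm 1$; the change of variables therefore has unit Jacobian and preserves the Lebesgue measure $d\vomega$. Second I would simplify each summand using two invariance properties already established. The weight-space symmetry definition gives $f_{\mathcal{T}_p(\vomega', \Pi)} = f_{\vomega'}$, so the likelihood is permutation invariant, $p(y \mid \vx, \mathcal{T}_p(\vomega', \Pi)) = p(y \mid \vx, \vomega')$, and Proposition~\ref{th:permutations_equiprobable} gives the equiprobability $p(\mathcal{T}_p(\vomega', \Pi) \mid \mathcal{D}) = p(\vomega' \mid \mathcal{D})$. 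After these substitutions every summand equals $\int_{\vomega' \in \Omega_>} p(y \mid \vx, \vomega')\, p(\vomega' \mid \mathcal{D})\, d\vomega'$ and is independent of $\Pi$, so the sum over the $|\mathbb{\Pi}|$ permutations collapses to the prefactor $|\mathbb{\Pi}|$ times this single integral, which is exactly the claimed identity.

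The main obstacle I expect is the rigorous justification of the decomposition of $\Omega$ into permuted copies of $\Omega_>$: one must verify that the sorting map is well defined almost everywhere (the tie set has measure zero), that distinct permutations send $\Omega_>$ to regions with disjoint interiors, and that these regions together cover $\Omega$. A secondary subtlety is that Proposition~\ref{th:permutations_equiprobable} is phrased for the discrete-time SGD posterior over final weights $\Theta_s$, so invoking it here requires identifying that posterior with the density $p(\vomega \mid \mathcal{D})$ appearing in the marginalization formula. Once these measure-theoretic and modeling points are granted, the remaining steps are the routine unit-Jacobian change of variables and the two invariance substitutions described above.
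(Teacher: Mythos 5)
Your proof follows essentially the same route as the paper's: both decompose $\Omega$ (up to the measure-zero tie set, the paper's $\Omega_{\geq}$) into the $|\mathbb{\Pi}|$ permuted copies of $\Omega_>$, then apply the functional invariance of the likelihood and Proposition~\ref{th:permutations_equiprobable} to collapse the sum into the prefactor $|\mathbb{\Pi}|$. Your treatment is in fact slightly more careful than the paper's on the unit-Jacobian change of variables and on why the tie set is negligible, but the argument is the same.
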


\begin{proof}
    Let $f_\vomega$ be a neural network. We recall the marginalization on the network's weights:
    \begin{equation}
         p(y \mid \vx, \mathcal{D})=\int\limits_{\vomega\in\Omega} p(y \mid \vx, \vomega) p(\vomega \mid \mathcal{D}) d\vomega.
    \end{equation}
    We can split the space $\Omega$ and get $\Omega = \Omega_{\geq} \cup \hspace{-0.5em} \bigcup\limits_{i\in\llbracket1,|\mathbb{\Pi}|\rrbracket}\hspace{-0.5em}\Omega_{>_i}$, with $\Omega_{>_i}$ the space of the weights ordered and permuted according to $\Pi$ and $\Omega_{\geq}$ the space including corner-cases: the networks with at least twice the same weights in a layer. In our case, $\Omega_{\geq}$ is theoretically of measure zero. In practice, we note that the reals are discretized on a computer, and the edge cases may happen with an extremely small probability, decreasing with the chosen precision. Denote $\Omega_>$ the space with descending ordered weights, we have
    \begin{equation}
         p(y \mid \vx, \mathcal{D})=\sum\limits_{i=1}^{|\mathbb{\Pi}|}\int\limits_{\vomega\in\Omega_>} p(y \mid \vx, \mathcal{T}_p(\vomega, \Pi_i)) p(\mathcal{T}_p(\vomega, \Pi_i) \mid \mathcal{D}) d\vomega.
    \end{equation}
    By definition of the symmetry operator, we know that it does not affect the prediction of the transformed network:
    \begin{equation}
        \forall i\in\llbracket1,|\mathbb{\Pi}|\rrbracket, \ p(y \mid \vx, \mathcal{T}_p(\vomega, \Pi_i))= p(y \mid \vx, \vomega).
    \end{equation}
    Moreover, Proposition \ref{th:permutations_equiprobable} provides 
    \begin{equation}
        \forall i,j\in\llbracket1,|\mathbb{\Pi}|\rrbracket, \ p(\mathcal{T}_p(\vomega, \Pi_i)\mid \mathcal{D})=p(\vomega\mid \mathcal{D}).
    \end{equation}
    It follows that
    \begin{equation}
         p(y \mid \vx, \mathcal{D})=|\mathbb{\Pi}|\int\limits_{\vomega\in\Omega_>} p(y \mid \vx,\vomega) p(\vomega \mid \mathcal{D}) d\vomega.
    \end{equation}
\end{proof}

Then, we recall Proposition~\ref{prop:mixture} of the main paper:

\begin{proposition}
Define $f_\omega$ a neural network and $\tilde{\omega}$ its corresponding identifiable model - a network transformed for having sorted unit-normed neurons. Let us also denote $\mathbb{\Pi}$ and $\mathbb{\Lambda}$, respectively, the sets of permutation sets and scaling sets, and $\tilde{\Omega}$ the random variable of the sorted weights with unit norm. The Bayesian posterior of a neural network $f_\omega$ trained with stochastic gradient descent can be expressed as a continuous mixture of a discrete mixture:
 \begin{equation}
         p(\Omega=\vomega \mid \mathcal{D}) = |\mathbb{\Pi}|^{-1} \hspace{-0.3em}
     \int\limits_{\Lambda \in \mathbb{\Lambda}} \hspace{-0.3em} \sum\limits_{\Pi\in\mathbb{\Pi}}p(\tilde{\Omega} = \mathcal{T}_p(\mathcal{T}_s(\vomega,\Lambda), \Pi), \Lambda \mid \mathcal{D})d\Lambda.
\end{equation}
\end{proposition}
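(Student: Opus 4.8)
The plan is to derive the identity by combining an orbit decomposition of the weight space with the permutation-equiprobability result of Proposition~\ref{th:permutations_equiprobable}. First I would observe that the scaling and permutation operators together generate a group acting on $\Omega$, and that, after discarding the measure-zero set of weights with tied neuron statistics, every $\omega$ admits a \emph{unique} decomposition: there is exactly one scaling $\Lambda^{*}(\omega)\in\mathbb{\Lambda}$ normalizing each neuron to unit norm and one permutation $\Pi^{*}(\omega)\in\mathbb{\Pi}$ sorting the resulting neurons, so that $\mathcal{T}_p(\mathcal{T}_s(\omega,\Lambda^{*}),\Pi^{*})=\tilde\omega$ is the canonical (identifiable) representative. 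This furnishes a bijection, up to a null set, between $\Omega$ and the product of the canonical manifold, the scaling space $\mathbb{\Lambda}$, and the finite set $\mathbb{\Pi}$, which I would adopt as an alternative coordinate system $(\tilde\Omega,\Lambda,\Pi)$ for the posterior.

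Next I would express the marginal posterior of $\Omega$ as the pushforward of the joint posterior of $(\tilde\Omega,\Lambda,\Pi)$ under the reconstruction map, i.e.\ marginalize the permutation and scaling coordinates. The crucial input is Proposition~\ref{th:permutations_equiprobable}: because SGD is permutation-equivariant (Proposition~\ref{thm:app_commutativity}) and the initialization is layer-wise i.i.d., the posterior is invariant under every $\Pi\in\mathbb{\Pi}$. Since a permutation preserves neuron norms (permuting only the labels of the scaling coordinates) and fixes the canonical form, this invariance shows that, conditionally on $\tilde\Omega$ and $\Lambda$, the permutation coordinate is uniform over $\mathbb{\Pi}$, so the joint factorizes as $p(\tilde\Omega=\tilde\omega,\Lambda,\Pi\mid\mathcal{D})=|\mathbb{\Pi}|^{-1}\,p(\tilde\Omega=\tilde\omega,\Lambda\mid\mathcal{D})$. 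Substituting $\tilde\omega=\mathcal{T}_p(\mathcal{T}_s(\omega,\Lambda),\Pi)$, summing over the $|\mathbb{\Pi}|$ permutation branches, and integrating over $\Lambda$ then yields the claimed continuous mixture of a discrete mixture.

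The main obstacle I anticipate is the measure-theoretic bookkeeping of the change of variables in the second step. Because $\mathcal{T}_s$ and $\mathcal{T}_p$ do not commute, reconstructing $\omega$ from $(\tilde\omega,\Lambda,\Pi)$ requires applying the inverse operations in the correct order and tracking how the scaling parameters are relabelled by the permutation; one must check that the associated Jacobian is consistent across the $|\mathbb{\Pi}|$ branches so that the permutation sum is genuinely balanced. This is exactly where permutations behave more simply than scalings, whose full distribution is left inside the integral because the L2-regularized loss is not scale-invariant. A secondary technical point is justifying that the unit-norm constraint defining the canonical manifold, which removes the scaling degrees of freedom, is precisely compensated by the integration over $\Lambda$, so that $(\tilde\Omega,\Lambda)$ carries the same dimension as $\Omega$ and the pushforward density is well defined.
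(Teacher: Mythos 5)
Your argument matches the paper's proof in all essentials: the unique canonical decomposition $\omega\leftrightarrow(\tilde\omega,\Lambda^{*},\Pi^{*})$ off a null set, the uniformity and data-independence of the permutation coordinate supplied by Proposition~\ref{th:permutations_equiprobable} (itself resting on the permutation-equivariance of SGD and the layer-wise i.i.d.\ initialization), and the observation that the sum over $\Pi$ and the integral over $\Lambda$ collapse to the single term where the transformed weights are actually sorted and unit-normed. The Jacobian and scale-relabelling issues you flag as the main obstacle are genuine but are left implicit in the paper's own proof as well, so your proposal is a faithful, if slightly more careful, rendition of the same route.
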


\begin{proof}
Let $f_\omega$ be a neural network and start with the right term. We can denote by $\tilde{\Pi}$ and $\tilde{\Lambda}$ the only couple of permutations and scale such that the weights $\mathcal{T}_p(\mathcal{T}_s(\vomega,\tilde{\Lambda}), \tilde{\Pi})$ are sorted and unit-normed. Given that $\tilde{\Omega}$ is also sorted and unit-normed, $\tilde{\Pi}$ and $\tilde{\Lambda}$ are the only supports of the respective sum and integral. Therefore, we have that  %
\begin{equation}
    |\mathbb{\Pi}|^{-1}
     \int\limits_{\Lambda \in \mathbb{\Lambda}} \sum\limits_{\Pi\in\mathbb{\Pi}}p(\tilde{\Omega} = \mathcal{T}_p(\mathcal{T}_s(\vomega, \Lambda), \Pi), \Lambda \mid \mathcal{D})d\Lambda = |\mathbb{\Pi}|^{-1}p(\tilde{\Omega} = \mathcal{T}_p(\mathcal{T}_s(\vomega,\tilde{\Lambda}), \tilde{\Pi}), \tilde{\Lambda}\mid \mathcal{D}).
\end{equation}

In parallel, we have that, with $\tilde{\vomega}$ the sorted unit-normed weights associated to $\vomega$, i.e., $\tilde{\vomega}=\mathcal{T}_p(\mathcal{T}_s(\vomega,\tilde{\Lambda}), \tilde{\Pi})$, 
\begin{align}
    p(\Omega=\vomega|\mathcal{D}) &= p(\tilde{\Omega}=\tilde{\vomega}, \tilde{\Pi}, \tilde{\Lambda}|\mathcal{D}) \\
    &= p(\tilde{\Omega}=\tilde{\vomega}, \tilde{\Lambda}|\mathcal{D}) p(\tilde{\Pi}) \\
    &= |\mathbb{\Pi}|^{-1}p(\tilde{\Omega}=\tilde{\vomega}, \tilde{\Lambda}|\mathcal{D}).
\end{align}
Indeed, $\Pi$ and $\mathcal{D}$ are independent and $p(\Pi)$ is determined by the initialization of the neural network that we suppose layer-wise identically and independently distributed. We obtain the Proposition directly when replacing $\tilde{\vomega}$ by $\tilde{\vomega}=\mathcal{T}_p(\mathcal{T}_s(\vomega,\tilde{\Lambda}), \tilde{\Pi})$.
\end{proof}

\subsection{The minimum scaled network mass problem}
\label{app:loglogconvex}

In this section, we develop the formalism leading to the proof of the log-log strict convexity of the minimum scaled network mass problem (or \emph{min-mass} problem). We start by recalling its definition.

\subsubsection{Definitions}

\begin{definition}[Minimum scaled network mass problem]
\label{def:scaled_problem}
    Let $f_{\theta}$ a neural network and $\bar{\theta}$ its weights without the biases. We define the minimum scaled network mass problem or \emph{min-mass} problem as the minimization of $f_{\theta}$'s L2 regularization term (the "mass") under invariant scaling transformations. In other words,
    \begin{equation}
        m^* = \min\limits_{\Lambda\in\mathbb{\Lambda}}\ \left|\mathcal{T}(\bar{\theta}, \Lambda)\right|^2.
    \end{equation}
    We also denote the mass of the neural network $f_{\theta}$ as $m(f_{\theta}) = \left|\mathcal{T}(\bar{\theta}, \Lambda)\right|^2$.
\end{definition}

In this work, we are especially interested in the versions of this problem that are not degenerate. This condition was not given in the main paper for clarity, but please note that it is by no means restrictive for real-world cases and could be handled properly for theory with some caution.

\begin{definition}
    We say that the \emph{min-mass} problem of the network neural network $f_{\theta}$ is non-degenerate if, taking any $\Lambda\in\mathbb{\Lambda}$, we have that for all scales $\lambda\in\Lambda$, at least one term in $\lambda_i^{-1}$ is non-zero.
\end{definition}

The fact that the problem is non-degenerate is important as it could lead to the non-desirable divergence of some parameters in the problem.

\subsubsection{Computing the mass of the network}

For a neural network without skip-connections and constituted of linear and convolutional layers only, we provide first a definition and use it to devise an important property:

\begin{definition}
     Let $W$ be the weight of a linear or convolutional layer. We denote $M(W)$, the matrix representation of the mass of the network. For linear layers, we have, with $\circ$, the Hadamard product,
     \begin{equation}
         M(W) = W\circ W,
     \end{equation}
     and for convolutional layers of kernel size $(h, v)$,
     \begin{equation}
         M(W) = \sum\limits_{i=1}^{h}\sum\limits_{j=1}^{v}\left(W\circ W\right)[:, :, i, j].
     \end{equation}
\end{definition}

\begin{property}
\label{prop:expr}
     Let $f_{\theta}$ be an $L$-layer neural network of weights $\theta=\{W^{[l]}, b^{[l]}\}_{l\in\llbracket1,L\rrbracket}$. The mass of the neural network scaled by $\Lambda\in\mathbb{\Lambda}$ with $\Lambda=\{\vlambda_l\}_{l\in\llbracket1,L-1\rrbracket}$ and $\vlambda_0=\vlambda_L=\vone$ can be expressed as, with $\mathrm{sum} (W)$ the sum of all elements of W,
     \begin{align}
         m\left(f_{\mathcal{T}_s({\theta}, \Lambda)}\right) = \sum\limits_{l=1}^{L} \mathrm{sum} \left(\vlambda_l^2\bigtriangledown\left(\vlambda_{l-1}^{-2}\rhd M\left(W^{[l]}\right)\right)\right)
     \end{align}
\end{property}

This result can be extended to networks with batch normalization layers and residual connections. To build the convex problem out of this formula, we use the connection between the line-wise and column-wise products and linear algebra, as presented in \ref{par:equiv_linear_algebra}. 

\subsubsection{Log-log convexity}

Finally, we have the following core property on \emph{min-mass} problems:

\begin{proposition}
     Let $f_{\theta}$ be a neural network. The corresponding minimum scaled network-mass problem is log-log strictly convex, provided that it is non-degenerate. As such, the problem is equivalent to a strictly convex problem on $\mathbb{R}^{|\mathbb{\Lambda}|}$ and admits a global minimum attained in a single point denoted $\Lambda^*$.
\end{proposition}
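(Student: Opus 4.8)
The plan is to read off from Property~\ref{prop:expr} that the objective is a \emph{posynomial} in the scaling variables and then invoke the standard geometric-programming change of variables. Expanding the layer-wise sum, the mass is
\begin{equation}
    m\left(f_{\mathcal{T}_s(\theta,\Lambda)}\right) = \sum_{l=1}^{L}\sum_{i,j} M\left(W^{[l]}\right)_{i,j}\,\lambda_{l,i}^{2}\,\lambda_{l-1,j}^{-2},
\end{equation}
where $i,j$ index the neurons of layers $l$ and $l-1$ joined by a weight, $\vlambda_0=\vlambda_L=\vone$ are fixed, and every coefficient $M(W^{[l]})_{i,j}\geq 0$ since $M(W)=W\circ W$ is an entrywise sum of squares. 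Thus the objective is a nonnegative combination of monomials in the strictly positive variables $\lambda_{l,i}$, i.e.\ a posynomial, which is the prototypical log-log convex function \citep{boyd2004convex,agrawal2019disciplined}; this already yields the log-log convexity claim.

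To produce the equivalent problem on $\mathbb{R}^{|\mathbb{\Lambda}|}$, I would apply the diffeomorphism $u_{l,i}=\log\lambda_{l,i}$, which maps $\left(\mathbb{R}_{>0}\right)^{|\mathbb{\Lambda}|}$ onto $\mathbb{R}^{|\mathbb{\Lambda}|}$. Each monomial becomes $c_k\,e^{\va_k^\intercal\vu}$ with $c_k=M(W^{[l]})_{i,j}\geq 0$ and exponent vector $\va_k=2\left(\ve_{l,i}-\ve_{l-1,j}\right)$, where $\ve_{l,i}$ are the basis vectors of the scale coordinates and $\ve_{0,\cdot}=\ve_{L,\cdot}=\vzero$. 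Hence the transformed objective $g(\vu)=\sum_k c_k e^{\va_k^\intercal\vu}$ is a sum of exponentials of affine maps, and therefore convex on $\mathbb{R}^{|\mathbb{\Lambda}|}$.

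For \emph{strict} convexity I would compute $\nabla^2 g(\vu)=\sum_k c_k e^{\va_k^\intercal\vu}\,\va_k\va_k^\intercal$, a nonnegatively weighted sum of rank-one positive semidefinite matrices; it is positive definite at every $\vu$ exactly when $\{\va_k : c_k>0\}$ spans $\mathbb{R}^{|\mathbb{\Lambda}|}$. I would establish this spanning by backward induction on the layers using non-degeneracy. The fixed top layer ($\vlambda_L=\vone$) contributes exponents $-2\ve_{L-1,j}$ for every neuron $j$ of layer $L-1$ carrying a nonzero outgoing weight, which is all of them under non-degeneracy, so every $\ve_{L-1,\cdot}$ lies in the span. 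Inductively, each hidden neuron has a nonzero outgoing weight, producing an exponent $2\left(\ve_{l+1,i}-\ve_{l,j}\right)$ with $\ve_{l+1,i}$ already in the span, so $\ve_{l,j}$ joins it; propagating down to $l=1$ shows the $\va_k$ span, whence $g$ is strictly convex.

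The main obstacle is attainment. Strict convexity immediately gives uniqueness of a minimizer, but existence requires coercivity of $g$, and this is the delicate point: $g$ fails to be coercive along a ray $\vu=t\,\ve_{l,i}$ whenever the monomials touching $u_{l,i}$ only decay, e.g.\ for a hidden neuron with outgoing but no incoming weights, whose scale can be sent to $+\infty$ to drive its outgoing mass to zero. Excluding this is precisely the role of a (two-sided) non-degeneracy hypothesis: once every scale is penalized both as $\lambda_{l,i}\to 0$ (through a nonzero $\lambda_{l,i}^{-1}$ term coming from an outgoing weight) and as $\lambda_{l,i}\to\infty$ (through a nonzero $\lambda_{l,i}^{2}$ term coming from an incoming weight), one obtains $\max_k \va_k^\intercal\vv>0$ for every $\vv\neq\vzero$, hence coercivity and compact sublevel sets. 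A standard Weierstrass argument then yields a minimizer $\Lambda^*$, unique by strict convexity.
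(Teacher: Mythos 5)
Your proof takes essentially the same route as the paper's: recognize the scaled mass as a posynomial, apply the logarithmic change of variables $\vu = \log\vlambda$, and analyze the resulting sum of exponentials of affine functions. That said, you are noticeably more careful on the two points where the paper's own argument is loosest. First, the paper asserts that the transformed objective is "a sum of convex and strictly convex functions" and concludes strict convexity; in fact each term $e^{2(\vu_l)_i - 2(\vu_{l-1})_j}$ is only convex (it is flat along directions orthogonal to its exponent vector), and your Hessian computation together with the backward-induction spanning argument is what is actually needed to get positive definiteness — this genuinely fills a gap. Second, on attainment, the paper waves at coercivity via "at least one of them diverging as the problem is non-degenerate," whereas you correctly isolate the issue: the stated non-degeneracy condition (every scale appears in some nonzero $\lambda_i^{-1}$ term, i.e., every neuron has a nonzero outgoing weight) only penalizes $\lambda_{l,i}\to 0$, and a neuron with no nonzero incoming weight lets the infimum escape to $\lambda_{l,i}\to\infty$; a two-sided condition is required for $\max_k \va_k^\intercal\vv>0$ in every direction and hence for compact sublevel sets. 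Your version of that last step is still a sketch (the claim that two-sided non-degeneracy forces a strictly positive $\va_k^\intercal\vv$ deserves the short max-along-a-path argument), but the overall structure is sound and, on these two points, sharper than the paper's.
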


\begin{proof}
    Let $f_{\theta}$ be an $L$-layer neural network of weights $\theta=\{W^{[l]}, b^{[l]}\}_{l\in\llbracket1,L\rrbracket}$. We recall the corresponding \emph{min-mass} problem:
\begin{equation}
    P \equiv \min\limits_{\Lambda\in\mathbb{\Lambda}}\ m\left(f_{\mathcal{T}_s({\theta}, \Lambda)}\right).
\end{equation}
    With Property \ref{prop:expr}, we have, for $\Lambda\in\mathbb{\Lambda}$ with $\Lambda=\{\vlambda_l\}_{l\in\llbracket1,L-1\rrbracket}$ and $\vlambda_0=\vlambda_L=\vone$, 
\begin{equation}
    P \equiv \min\limits_{\Lambda\in\mathbb{\Lambda}} \sum\limits_{l=1}^{L} \mathrm{sum} \left(\vlambda_1^2 \bigtriangledown \left(\vlambda_{l-1}^2 \rhd M\left(W^{[l]}\right)\right)\right).
\end{equation}
Let us now write $\vu_l = \log(\vlambda_{l})$ for all $l\in\llbracket1,L-1\rrbracket$. This is possible because due to the non-negative homogeneity of the ReLU activation, the $\vlambda_{l}$ were already chosen with elements in $\mathbb{R}_{>0}$. Please note that the need for this change of variables is why $P$ will be referred to as log-log strictly convex. We have,
\begin{equation}
    P \equiv \min\limits_{\Lambda\in\mathbb{\Lambda}} \sum\limits_{l=1}^{L} \mathrm{sum} \left(e^{2\vu_l} \bigtriangledown \left(e^{2\vu_{l-1}} \rhd M\left(W^{[l]}\right)\right)\right)
\end{equation}
We can convert our operators to products of diagonal matrices and Hadamard products using the formulae from \ref{par:equiv_linear_algebra}. It comes that,
\begin{align}
    P \equiv \min\limits_{\Lambda\in\mathbb{\Lambda}} \Big[ & \mathrm{sum}\left(\mathrm{diag} \left(e^{2\vu_1}\right)M\left(W^{[1]}\right)\right) \\
    & + \sum\limits_{l=2}^{L-1} \mathrm{sum} \left(\left(e^{2\vu_l^\intercal} e^{2\vu_{l-1}^{-1}}\right) \circ M\left(W^{[l]}\right)\right) \\
    & + \mathrm{sum} \left(M\left(W^{[L]}\right) \mathrm{diag} \left(e^{-2\vu_{L-1}}\right) \right) \Big]. \label{eqproof:convex}
\end{align}
In \Eqref{eqproof:convex}, we see that P is equivalent to a sum of convex and strictly convex functions. Indeed, the product $e^{2\vu_l^\intercal} e^{2\vu_{l-1}^{-1}}$ can be expressed as a matrix containing elements of the form $e^{2(\vu_l)_i-2(\vu_{l-1})_j}$. Therefore, $P$ is equivalent to a strictly convex problem. It comes that $P$ has, at most, a single solution. Moreover, since the elements of the $\vu_l$ are defined on $\mathbb{R}$ and the inner function is infinite at infinity (as a sum of positive terms with at least one of them diverging due as the problem is non-degenerate), we have that $P$ has a solution, which is unique.
\end{proof}

This proof can be easily extended to batch normalization layers that include one-dimensional weights. The extension to residual networks is a bit more complicated, as one should probably resort to graphs to write the equations properly.

We then prove the corollary proposed in the main paper.

\begin{corollary}
Let $f_\theta$ be a neural network trained with L2-regularization. Suppose the mass of $f_\theta$ is not optimal in the sense of the minimum scaled network mass problem. In that case, there is an infinite number of equivalent networks $\mathcal{T}_p(f_\theta, \Lambda)$ with training loss lower than that of the original network.
\end{corollary}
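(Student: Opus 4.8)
The plan is to combine the Scaling symmetries property with the log-log strict convexity established in Proposition~\ref{prop:loglogconvexity}. The key observation is that, because scaling transformations are exact symmetries, the data-fitting part of the regularized loss is invariant under $\mathcal{T}_s$, so minimizing the training loss over scalings reduces to the \emph{min-mass} problem, whose strict sublevel sets are completely understood.

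First I would decompose the regularized training loss as $\mathcal{L}(\theta) = \mathcal{L}_{\mathrm{data}}(f_\theta) + \reg\, m(f_\theta)$, where $m$ is the mass and $\reg>0$ the weight-decay coefficient. For any scaling set $\Lambda\in\mathbb{\Lambda}$, the Scaling symmetries property gives $f_{\mathcal{T}_s(\theta,\Lambda)} = f_\theta$, so the two networks produce identical outputs on every training example and hence $\mathcal{L}_{\mathrm{data}}(f_{\mathcal{T}_s(\theta,\Lambda)}) = \mathcal{L}_{\mathrm{data}}(f_\theta)$. The only dependence on $\Lambda$ is therefore through the mass, and
\[
\mathcal{L}(\mathcal{T}_s(\theta,\Lambda)) - \mathcal{L}(\theta) = \reg\bigl(m(f_{\mathcal{T}_s(\theta,\Lambda)}) - m(f_\theta)\bigr).
\]

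Next I would invoke Proposition~\ref{prop:loglogconvexity}: in the coordinates $\vu_l = \log\vlambda_l$ the map $\Lambda \mapsto m(f_{\mathcal{T}_s(\theta,\Lambda)})$ is continuous and strictly convex on $\mathbb{R}^{|\mathbb{\Lambda}|}$, with a unique minimizer $\Lambda^*$ of value $m^*$. The non-optimality hypothesis means $m(f_\theta) > m^*$, so the strict sublevel set $U = \{\Lambda : m(f_{\mathcal{T}_s(\theta,\Lambda)}) < m(f_\theta)\}$ is nonempty (it contains $\Lambda^*$) and open (by continuity). Since $\Lambda^*$ lies in its interior, $U$ contains an open ball and therefore uncountably many distinct scaling sets; by the displayed identity each of them yields a functionally equivalent network with strictly smaller training loss, which is the claim.

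The point requiring the most care is the loss-decomposition step: I must identify the L2 penalty actually used during training with the mass $m$ of Definition~\ref{def:scaled_problem} (up to the factor $\reg$), i.e. that weight decay acts exactly on the weights $\bar\theta$ entering the \emph{min-mass} problem and that the data term is genuinely invariant under $\mathcal{T}_s$. Once this is granted the remainder is an elementary consequence of continuity and strict convexity; a minor remaining detail is to note that distinct elements of $U$ correspond to distinct weight configurations (the scaling action being injective on a non-degenerate problem), so that the family is genuinely infinite rather than merely a collection of distinct loss values.
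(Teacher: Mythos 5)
Your proposal is correct and follows essentially the same route as the paper: decompose the regularized loss into a data term (invariant under $\mathcal{T}_s$) plus the mass, then use continuity of the mass in $\Lambda$ together with $m^* < m(f_\theta)$ to exhibit an open ball of scalings yielding strictly lower loss. Your version is in fact slightly more careful than the paper's, which omits the injectivity remark and states the final inequality somewhat loosely, but there is no substantive difference in the argument.
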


\begin{proof}
    Let $f_\theta$ be a neural network and denote the training loss by $\mathcal{L}(\theta) = \mathcal{L}_m(\theta) + \mathcal{L}_{\mathrm{L2}}(\theta)$ with $\mathcal{L}_{\mathrm{L2}}(\theta) = m(f_\theta)$ the L2-regularization term. We have, by definition of the symmetry operator $\mathcal{T}_s$,
    \begin{equation}
        \forall\Lambda\in\mathbb{\Lambda},\ \mathcal{L}_m(\theta) = \mathcal{L}_m(\mathcal{T}_s(\theta, \Lambda))
    \end{equation}
    Take $\Lambda^*$ as the optimal scaling set, $\mathcal{L}_{\mathrm{L2}}(\theta)$ is continuous everywhere in $\mathbb{\Lambda}$ as a sum of products of continuous functions and hence continuous in $\Lambda^*\in\mathbb{\Lambda}$. Therefore, we can find an open ball $B$ around $\Lambda^*$, where $\forall\Lambda\in L,\ \mathcal{L}_m(\mathcal{T}_s(\theta, \Lambda) > \mathcal{L}_m(\mathcal{T}_s(\theta, \Lambda^*)$. Hence the corollary.

\end{proof}

\section{Other discussions and details}
\label{sec:disc_details}
\subsection{Estimating the number of permutation-generated modes}
Let us suppose for simplicity that the layer-posterior of the identifiable model is an $n$-dimensional Dirac distribution of coordinates $\vtheta$ (or a distribution on a small bounded domain around $\vtheta$). The number of modes of the resulting posterior will be $(n-k)!$ when we define $k$ as the number of (nearly-)identical terms in $\vtheta$. To recall, $n$ is of the order of $10^{6000}$ for a ResNet-18 in good part due to the large final layers.

This result seems difficult to extend to other distributions. While there exist some results on the number of modes of a mixture of Gaussians, to the best of our knowledge, they either treat the case with two elements only~\citep{eisenberger1964genesis} or are conjectures of an upper bound~\citep{carreira2003number}.

On an experimental basis, we tried to use the MeanShift algorithm~\citep{comaniciu2002mean} to estimate the number of modes.
Despite the number of checkpoints, Mean Shift remained very sensitive to the bandwidth. It was, as expected, failing to detect modes in high dimensions but also providing unsatisfactory results in simpler cases, such as for the MLP.

\subsection{Details on the functional collapse in ensembles}
\label{sec:functional_collapse_details}
\begin{figure}[t]
    \centering
    \begin{subfigure}[b]{0.32\textwidth}
         \centering
         \includegraphics[width=\textwidth]{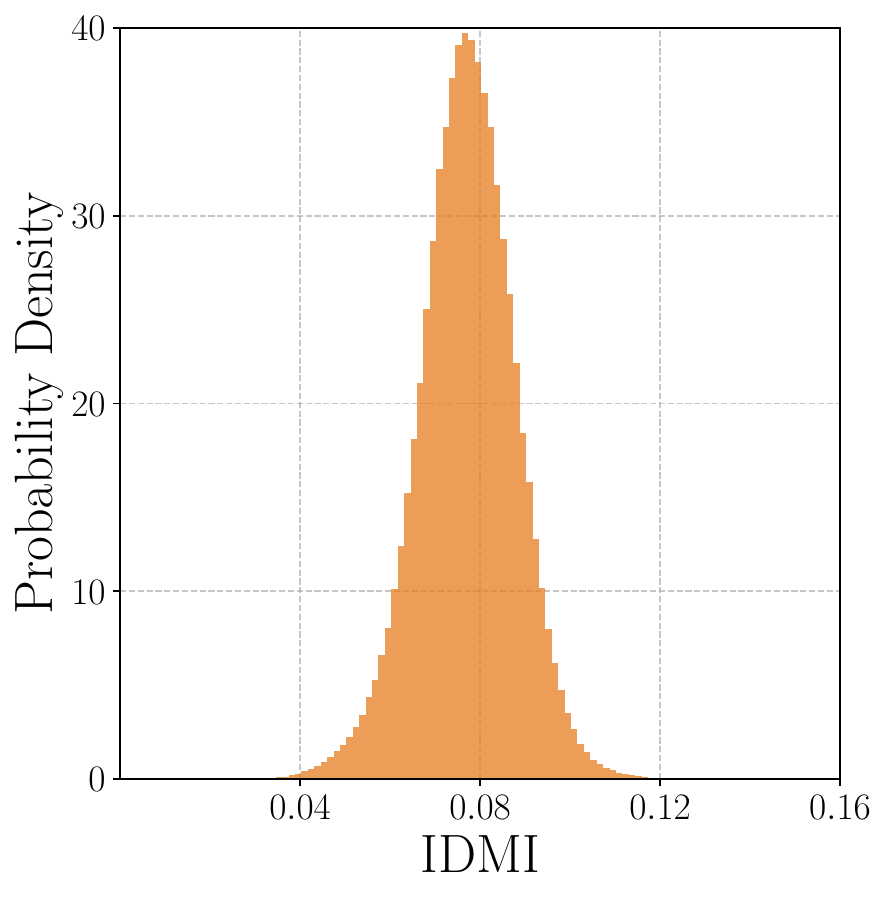}
     \end{subfigure}
     \hfill
     \begin{subfigure}[b]{0.32\textwidth}
         \centering
         \includegraphics[width=\textwidth]{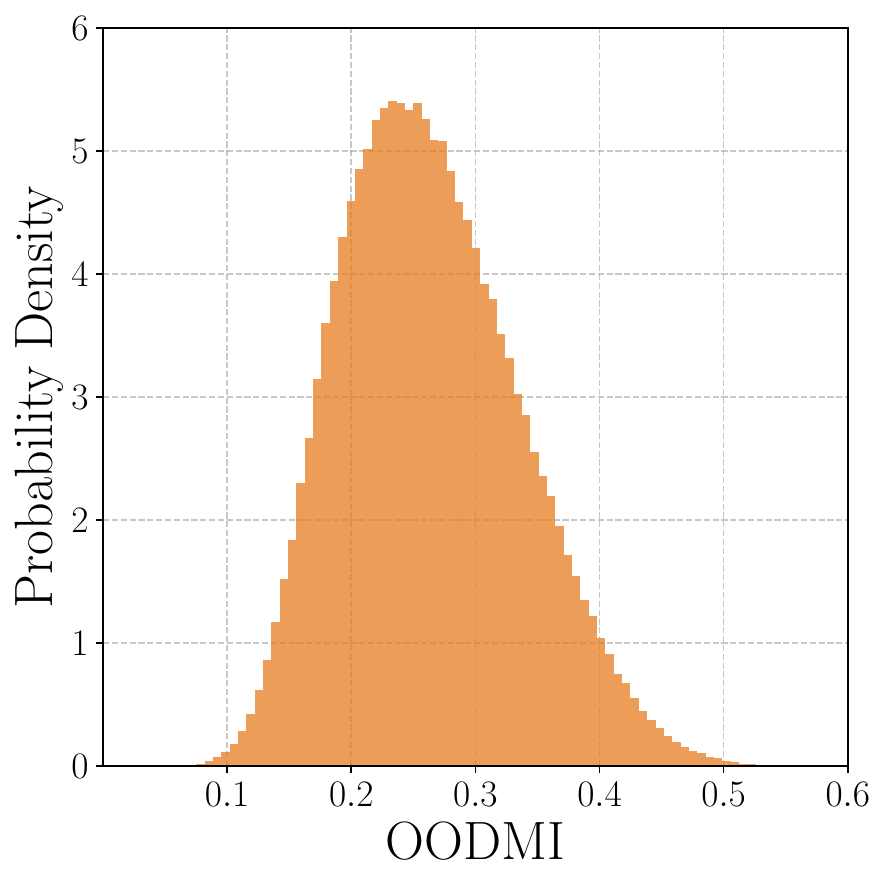}
     \end{subfigure}
     \hfill
     \begin{subfigure}[b]{0.32\textwidth}
         \centering
         \includegraphics[width=\textwidth]{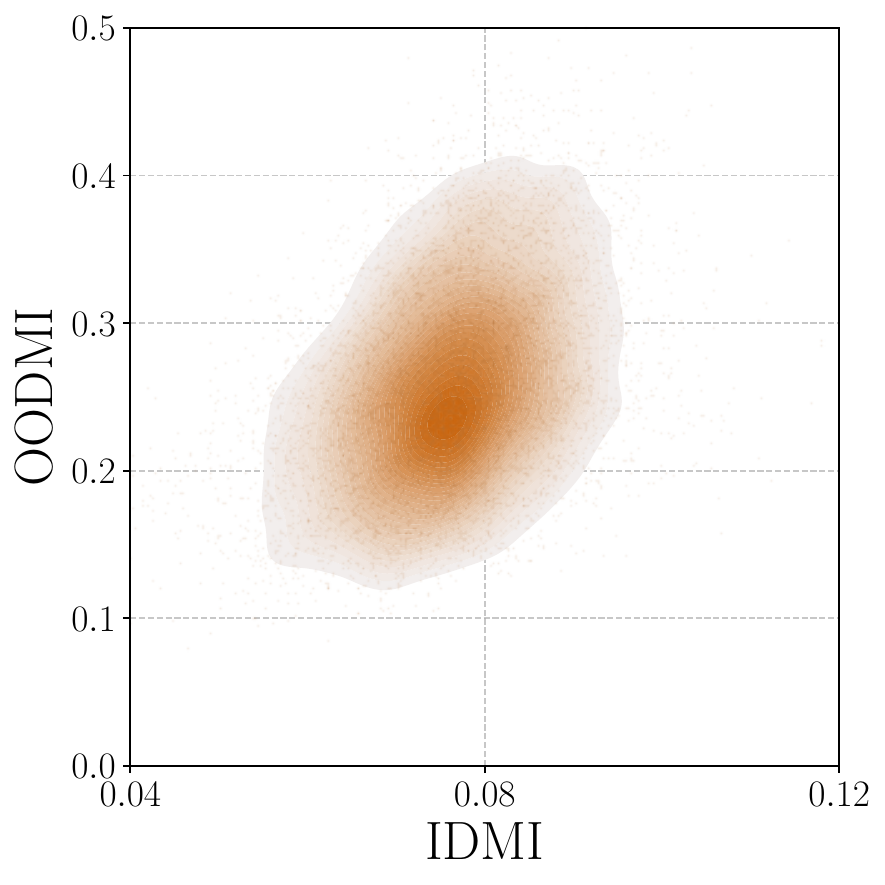}
     \end{subfigure}
        \caption{\textbf{There is a slightly more important functional collapse between couples of independently trained OptuNet on MNIST.} Contrary to ResNet-18 on CIFAR-100, the in-distribution and out-of-distribution mutual information (IDMI, resp. OODMI) seem correlated for small values.}
        \label{fig:functional_collapse_mnist}
\end{figure}

\paragraph{Functional collapse on MNIST/FashionMNIST with OptuNet}
In this paragraph, we study the potential functional collapse of OptuNet on MNIST and FashionMNIST, depicted in Figure \ref{fig:functional_collapse_mnist}. First, we see that the dispersion of the mutual information both in the training domain (right) and out of the domain (center) is greater than that of the ResNet-18. Indeed, in the left plot of Figure \ref{fig:functional_collapse_mnist}, there is a non-negligible probability of obtaining mutual information twice lower than the expected value. We note that the dispersion of the out-of-distribution values on FashionMNIST is also greater than for ResNets. Finally, in the case of OptuNet and contrasting with the results obtained with ResNets, there seems to be some positive correlation between the in-distribution MI and the OOD MI, at least when considering the lowest values. We can conclude that there is a high chance of \emph{functional collapse} in small networks, which is expected considering that their posterior is simpler. 

\paragraph{Functional collapse on TinyImageNet and Textures}
\begin{figure}[t]
    \centering
    \begin{subfigure}[b]{0.328\textwidth}
         \centering
         \includegraphics[width=\textwidth]{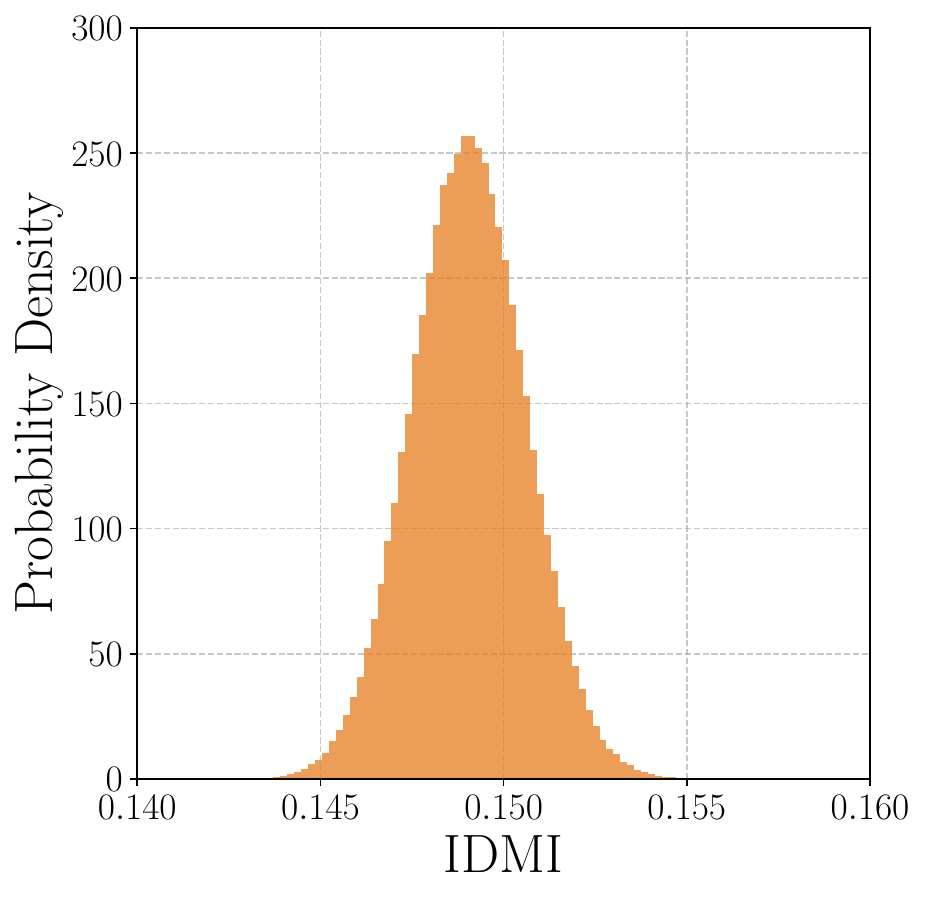}
     \end{subfigure}
     \hfill
     \begin{subfigure}[b]{0.318\textwidth}
         \centering
         \includegraphics[width=\textwidth]{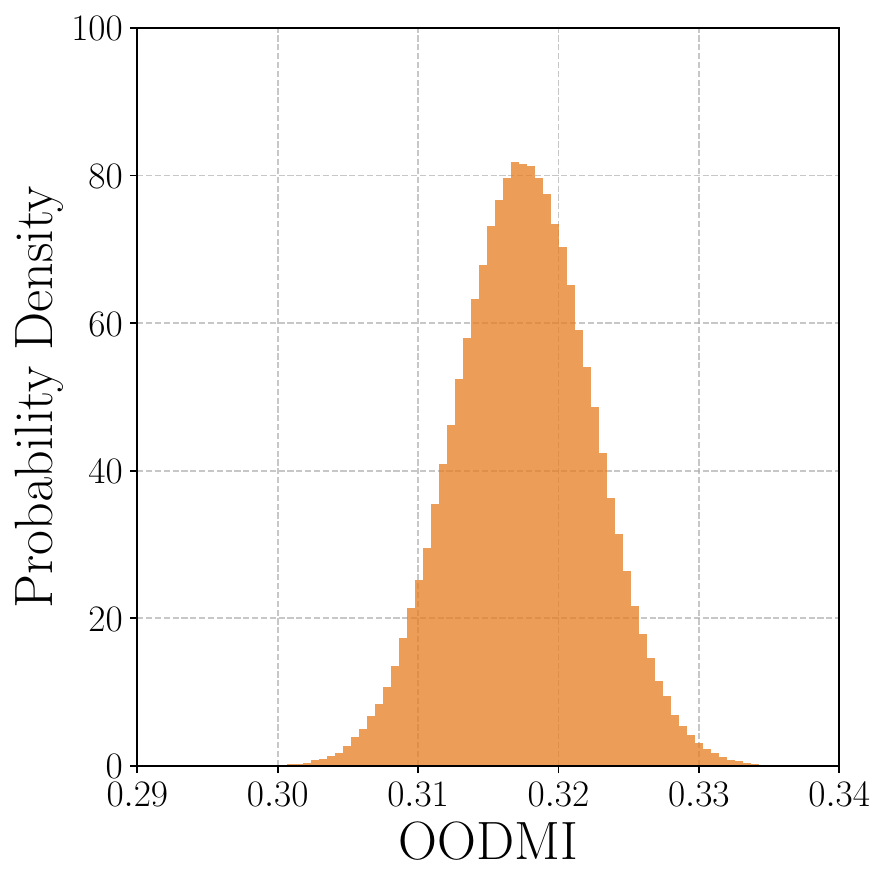}
     \end{subfigure}
     \hfill
     \begin{subfigure}[b]{0.314\textwidth}
         \centering
         \includegraphics[width=\textwidth]{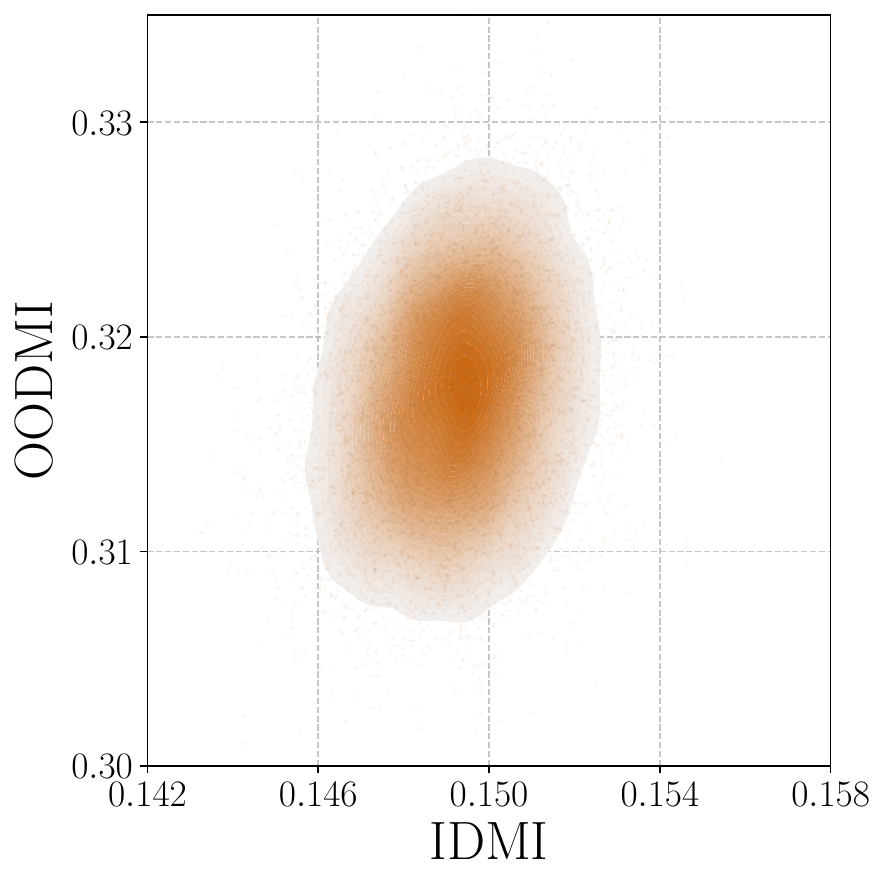}
     \end{subfigure}
        \caption{\textbf{The dispersion of the mutual information on TinyImageNet and Textures with ResNet-18 remains very small.} As for ResNet-18 on CIFAR-100, the in-distribution and out-of-distribution mutual information (IDMI, resp. OODMI) seem only very little correlated.}
        \label{fig:functional_collapse_tin}
\end{figure}

Figure \ref{fig:functional_collapse_tin} shows the mutual information on both the in-distribution (left) and out-of-distribution (center) datasets and their correlation (right). The plots are much closer to Figure \ref{fig:functional_collapse_c100}, which was expected as they share the same architecture. However, we see that the OOD dataset (here, Textures) potentially impacts the values of the mutual information. The dispersion (center) is lower than for ResNet-18 on SVHN, which may be more diverse. We measure the Pearson's $\rho$ correlation coefficient and obtain a statistically significant value of $\rho\approx0.06$. The two values are, therefore, positively correlated with a very low coefficient.

\subsection{Visualization}

\begin{figure}[t]
    \centering
    \begin{subfigure}[b]{0.7\textwidth}
         \centering
         \includegraphics[width=\textwidth]{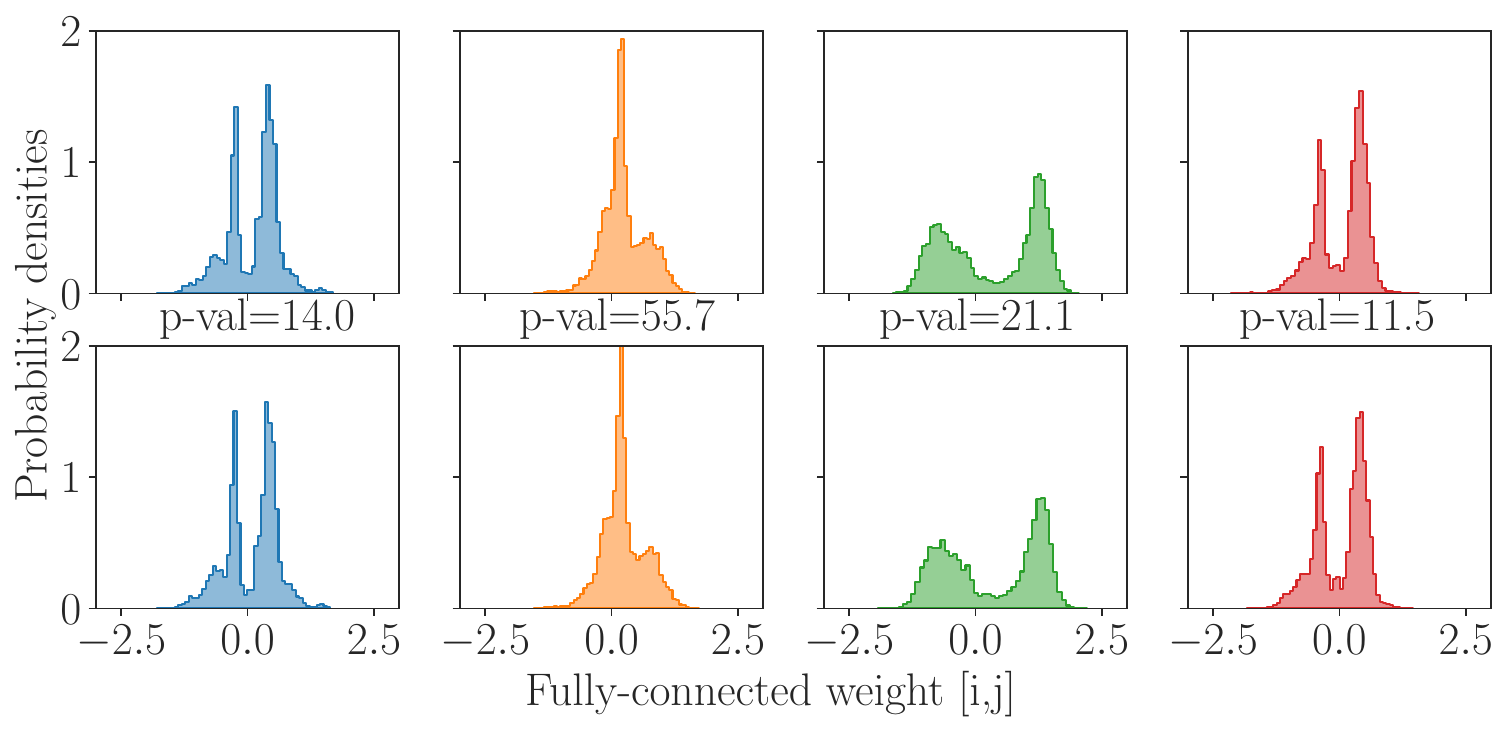}
     \end{subfigure}
     \hfill
     \begin{subfigure}[b]{0.7\textwidth}
         \centering
         \includegraphics[width=\textwidth]{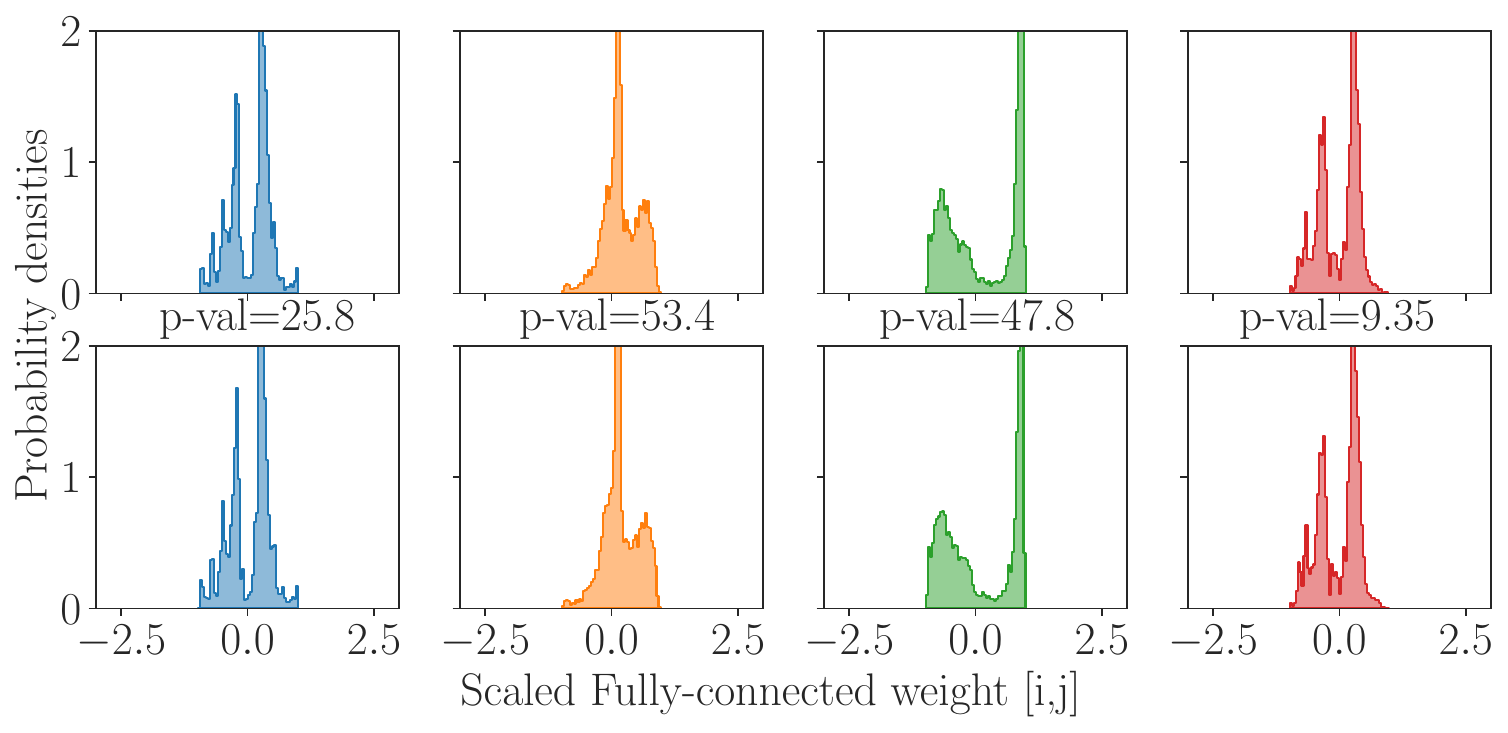}
     \end{subfigure}
     \hfill
     \begin{subfigure}[b]{0.7\textwidth}
         \centering
         \includegraphics[width=\textwidth]{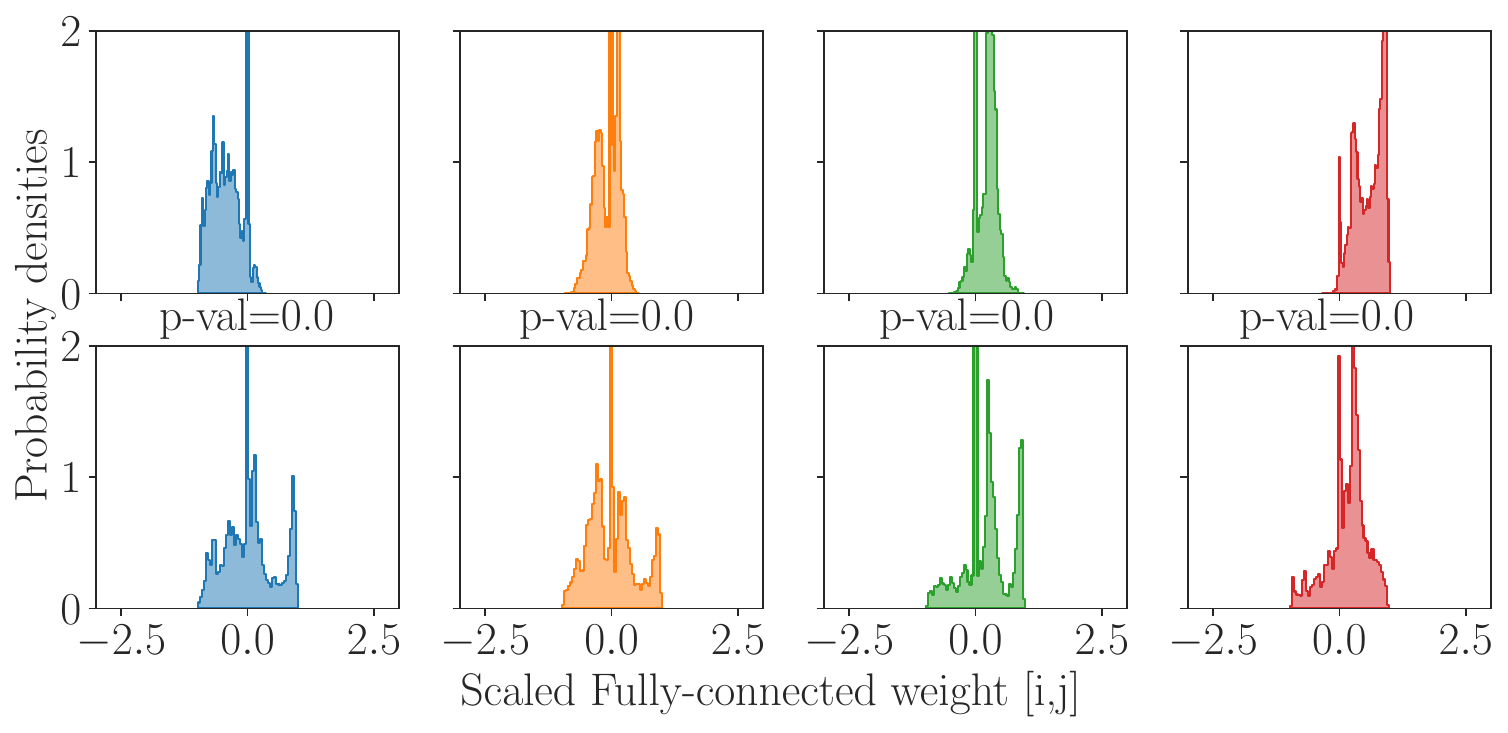}
     \end{subfigure}
        \caption{\textbf{Marginal posteriors of the last weights from a multi-layer perception:} The top rows are the original posterior, the middle rows have been scaled to a unit norm, and the bottom row has been permuted. We indicate the p-values of a Kolmogorov-Smirnov two-sample test between the marginals of the two neurons.}
        \label{fig:viz-mlp}
\end{figure}

In this section, we propose to visualize the marginals of the posterior of neural networks, including OptuNets. %
This visualization proposes the original marginal distributions and versions without scaling and permutation symmetries. As explained in the main paper, a corollary of Proposition~\ref{prop:mixture}, is that all layer-wise marginals are identical.

Figure~\ref{fig:viz-mlp} provides the marginal distributions of the weights of a multi-layer perceptron trained on energy-efficiency. We check with two-sample Kolmogorov-Smirnov tests that we cannot rule out that the distributions are the same, indicating the corresponding p-values. Indeed, as long as the permutations are not removed, we keep the equivalence property of all the marginals.

As stated in the abstract, the posterior lies in a very high dimension. We tried to apply dimension reduction with principal component analysis with scikit-learn~\citep{scikit-learn}, but the explained variance increased very slowly with the number of elements.

\subsection{Symmetries in modern layers}
\label{sec:generalization_leads}
In this section, we provide details on how we compute symmetries throughout the paper.

\subsubsection{Convolutional layers}

In Appendix~\ref{app:formalism}, we generalize the row-wise and column-wise products to enable their use between vectors and four-dimensional tensors. We also detail how to understand the product between permutation matrices and four-dimensional tensors. 

On a higher level, we see that the number of symmetries of convolutional layers is smaller compared to the number of parameters in a linear layer. Indeed, in the formalism we use, the number of permutations and scaling degrees of freedom (sDoF) of a linear layer correspond to its number of output neurons. In contrast, the number of permutations of a convolutional layer equals its number of output channels. The ratio of the number of permutations/sDoF over the number of parameters between linear and convolutional layers, therefore, equals the surface of the kernel of the convolutional layer.

\subsubsection{Grouped-Convolutional layers}

The number of permutations of a grouped convolutional layer~\citep{krizhevsky2012imagenet} reduces the number of permutations of the previous layer. Indeed, we have to account for the impossibility of fully reversing the permutation applied on the output neurons/channels of the preceding layer. This implies that the permutations of the previous layer need to remain intra-group. Say that the previous layer has $C_{\mathrm{out}}$ output channels and that the current layer has $\gamma$ groups. Instead of $C_{\mathrm{out}}!$ possible permutations, they are reduced to $\left(\frac{C_{\mathrm{out}}}{\gamma}!\right)^\gamma$. On the other hand, the number of output permutations does not change (provided that the next layer does not have groups). Likewise, the number of scaling degrees of freedom remains unchanged as scaling symmetries modify the values of the weights channel per channel.

\subsubsection{Batch normalization layers}

Counting permutation symmetries on batch normalization layers~\citep{ioffe2015batch} is quite straightforward. Regarding permutation symmetries, batch normalization layers are seamless, as one only needs to permute the weights, bias, running mean, and running variance. As such, they do not increase nor decrease the number of permutations. 

On the other hand, batch normalization layers can increase the number of degrees of freedom of scaling symmetries. Indeed, we can scale the running variances and means to the input scaling coefficients and then choose new output coefficients for the weights and biases. To conclude, they increase the number of sDoF by their number of features.

\subsubsection{Residual connections}

Residual connections tend to reduce the number of permutation and scaling symmetries~\citep{kurle2021symmetries}, and this is an important aspect in ResNets. Indeed, residual connections need the output coefficient of the last layers and the permutation matrices from all inner nodes in the computation graph to be equal. This potentially reduces both the number of possible permutations and the number of sDoF. In the case of ResNets, this effect is counterbalanced by the shortcut convolutional layers that can create permutations and sDoF, albeit less.

\end{document}